\newlength{\nodesep}
\newlength{\innersep}
\newlength{\sep}
\tikzset{
    observed/.style={circle, draw, thick, fill=gray!50, minimum size=.8cm, text width=.8cm, inner sep=0pt, align=center},
    observeddirac/.style={circle, double, draw, thick, fill=gray!50, minimum size=.8cm, text width=.8cm, inner sep=0pt, align=center},
    observedred/.style={circle, draw, thick, fill=red!20, minimum size=.7cm},
    unobserved/.style={circle, draw, thick, minimum size=.8cm, text width=.8cm, inner sep=0pt, align=center},
    blank/.style={circle, draw, thick, dotted, fill=gray!50, minimum size=.7cm, opacity=0.5},
    plate/.style={draw, minimum size=.3cm},
    square1/.style={draw, fill=gray!20, minimum size=.3cm},
    square2/.style={draw, fill=gray!50, minimum size=.3cm},
    square3/.style={draw, fill=gray!100, minimum size=.3cm},
    true/.style = {circle, draw=none, thick, fill=green!50, minimum width=.5cm, minimum height=.5cm, inner sep=0pt},
    false/.style = {circle, draw=none, thick, fill=red!50, minimum width=.5cm, minimum height=.5cm, inner sep=0pt},
    plain/.style = {circle, draw, thick, fill=none, minimum width=.5cm, minimum height=.5cm, inner sep=0pt},
    arrowstyle/.style={->, thick, rounded corners},
    arrowstylered/.style={->, thick, rounded corners, red, dashed},
    arrowstylegray/.style={->, thick, rounded corners, gray, dashed},
    arrowstyledash/.style={->, thick, rounded corners, dashed pattern=on 2pt off .6pt, gray},
    adjweight/.style={->, dashed, line width=.1mm, color=blue!50},
    adjweightbig/.style={->, line width=.6mm, color=blue!50},
    cembtrue1/.style={draw, fill=green!20, minimum size=.3cm},
    cembtrue2/.style={draw, fill=green!50, minimum size=.3cm},
    cembtrue3/.style={draw, fill=green!100, minimum size=.3cm},
    cembfalse1/.style={draw, fill=red!20, minimum size=.3cm},
    cembfalse2/.style={draw, fill=red!50, minimum size=.3cm},
    cembfalse3/.style={draw, fill=red!100, minimum size=.3cm},
    operation/.style={draw, thick, fill=blue!20, minimum size=0.6cm,  rounded corners=5pt, inner sep=1pt, outer sep=0pt, align=center},
    box/.style={rectangle,draw,fill=green!20,text centered,rounded corners,thick},
}
\newcommand{\TWOGRAPHS}{

\begin{tikzpicture}[
    node distance = .3cm,
]

\node[anchor=south west] at (-1.4cm, -1.4cm) {\textbf{(a) Causal transparent graph}};
\node[unobserved] (e1) at (0,0) {$N$};
\node[unobserved] (c1) [above=1cm of e1] {$C$};
\node[unobserved] (n1) [right=1cm of e1] {$P$};

\draw[arrowstyle] (e1) to [] (n1);
\draw[arrowstyle] (c1) to [] (n1);

\node[anchor=south west] at (5.7cm, -1.4cm) {\textbf{(b) Causal reliable graph}};
\node[unobserved] (e2) at (7cm,0) {$N$};
\node[unobserved] (c2) [above=1cm of e2] {$C$};
\node[unobserved] (n2) [right=1cm of e2] {$P$};

\draw[arrowstyle] (e2) to [] (n2);

\end{tikzpicture}
}
\newacronym{mlp}{MLP}{Multilayer-Perceptron}
\newglossaryentry{asia}{name=Asia,description=}
\newcommand{\construction}[1]{\textcolor{green!60!black!90}{#1}}
\newcommand{\ccbm}[1]{\textcolor{blue!80!black!60}{#1}}
\theoremstyle{plain}
\newtheorem{theorem}{Theorem}[section]
\theoremstyle{definition}
\theoremstyle{remark}
\newtheorem{remark}[theorem]{Remark}
\theoremstyle{problem}
\newtheorem{problem}[theorem]{Problem}
\definecolor{customgray}{gray}{0.7} 
\definecolor{backcolour}{rgb}{0.95,0.95,0.92}
\renewcommand{\appendixtocname}{Table of Contents}
\let\oldappendix\appendices
\g@addto@macro\tableofcontents{%
  \let\tf@toc@orig\tf@toc
}
\renewcommand{\appendices}{%
  \clearpage
  \renewcommand{\thesection}{\Roman{section}}
  \let\tf@toc\tf@app
  \addtocontents{app}{\protect\setcounter{tocdepth}{3}}
  \immediate\write\@auxout{%
    \string\let\string\tf@toc\string\tf@app
  }
  \oldappendix
}%
\g@addto@macro\endappendices{%
  \let\tf@toc\tf@toc@orig
  \immediate\write\@auxout{%
    \string\let\string\tf@toc\string\tf@toc@orig
  }%
}  
\newcommand{\listofappendices}{%
  \begingroup
  \renewcommand{\contentsname}{\appendixtocname}
  \let\@oldstarttoc\@starttoc
  \def\@starttoc##1{\@oldstarttoc{app}}
  \tableofcontents
  \endgroup
}
\definecolor{tocblue}{RGB}{0, 122, 204} 
\title{Causally Reliable Concept Bottleneck Models}
\author{Giovanni De Felice\thanks{Equal contribution.}\\
Università della Svizzera Italiana\\
\texttt{giovanni.de.felice@usi.ch} \\
\And
Arianna Casanova Flores$^*$\\
University of Liechtenstein\\
\And
Francesco De Santis$^*$\\
Politecnico di Torino\\
\And
Silvia Santini \\
Università della Svizzera Italiana\\
\And
Johannes Schneider\\
University of Liechtenstein\\
\And
Pietro Barbiero\thanks{Equal senior authors.}\\
IBM Research\\
\And
Alberto Termine$^\dagger$\\
Scuola Universitaria Professionale della Svizzera Italiana, 
IDSIA\\
}
\begin{document}
\maketitle

\begin{abstract}
Concept-based models are an emerging paradigm in deep learning that constrains the inference process to operate through human-interpretable variables, facilitating explainability and human interaction. However, these architectures, on par with popular opaque neural models, fail to account for the true causal mechanisms underlying the target phenomena represented in the data. This hampers their ability to support causal reasoning tasks, limits out-of-distribution generalization, and hinders the implementation of fairness constraints. To overcome these issues, we propose \emph{Causally reliable Concept Bottleneck Models} (C$^2$BMs), a class of concept-based architectures that enforce reasoning through a bottleneck of concepts structured according to a model of the real-world causal mechanisms. We also introduce a pipeline to automatically learn this structure from observational data and \emph{unstructured} background knowledge (e.g., scientific literature). Experimental evidence suggests that C$^2$BMs are more interpretable, causally reliable, and improve responsiveness to interventions w.r.t. standard opaque and concept-based models, while maintaining their accuracy. 
\end{abstract}

\section{Introduction}\label{introduction}
In recent years, interpretable neural models have become more popular, achieving performance similar to powerful opaque Deep Neural Networks (DNNs)~\citep{alvarez2018towards,chen2019looks,chen2020concept}. Among these, Concept Bottleneck Models (CBMs)~\citep{koh2020,zarlenga2022concept,yuksekgonul2022post,barbiero2023interpretable} guarantee high expressivity and interpretability by enforcing DNNs to reason through a layer of high-level, human-interpretable variables called \textit{concepts} (e.g., the ``color'' and ``shape'' of an object)~\citep{kim2018interpretability, achtibat2023attribution, fel2023craft}. In CBMs, a neural encoder first maps the raw input to concepts, forming a semantically transparent intermediate representation that is used by a simple decoder for downstream predictions. Beyond transparency, this design allows human experts to intervene on mispredicted concepts at test time to improve downstream task predictions~\citep{espinosa2024learning}.

However, like standard DNN architectures, CBMs remain pure \emph{associative} models~\citep{Pearl2019}: their decision-making process reflects statistical correlations within the data rather than real-world causal mechanisms. As a result, they fail to distinguish between spurious correlations and true causal relationships. Recognizing this distinction is fundamental to achieving a \textit{reliable} scientific understanding, supporting causal reasoning for intervention~\citep{Pearl2009, peters2017elements}, enabling \textit{robust} generalization under distributional shifts, and the implementation of fairness constraints~\citep{scholkopf2021toward, wang2022out}.

To address these limitations, we propose \emph{Causally reliable Concept Bottleneck Models} (C$^2$BMs): a class of concept-based architectures that enforce reasoning through a ``\ccbm{Causal Bottleneck}'' (Fig.~\ref{fig:abstract}) of concepts structured according to a model of the real-world causal mechanisms underlying data generation. C$^2$BMs process information as follows. First, a neural encoder extracts a set of latent representations from raw data. Then, information flows from latent representations through a given causal graph where each node represents an interpretable variable (e.g., ``smoker'', ``bronchitis''). At inference time, the value of each variable is predicted from its causal parents through an interpretable structural equation, parametrized adaptively by a hypernetwork. 

\begin{wrapfigure}{r}{0.6\textwidth}
    \centering
    \vspace{-0.4cm}
    \includegraphics[width=1\linewidth]{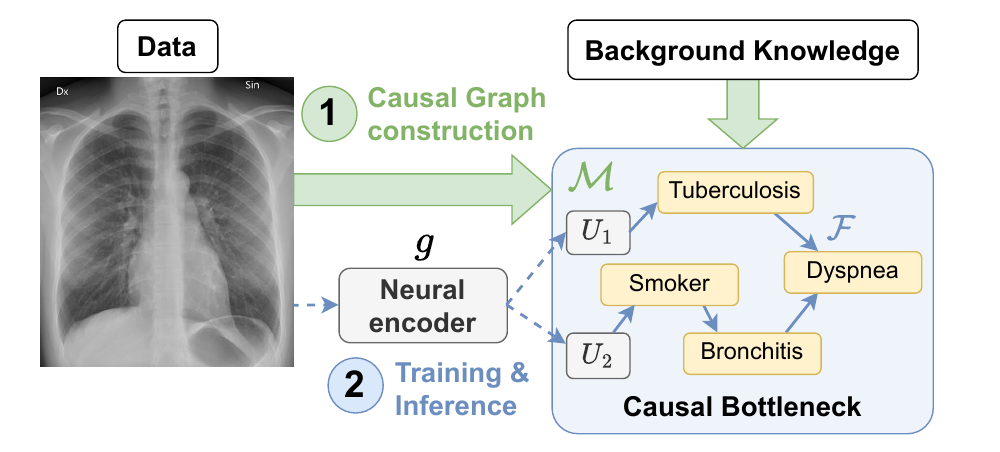}
    \vspace{-0.4cm}
    \caption{\emph{Causally reliable Concept Bottleneck Models} (C$^2$BMs) enforce reasoning through a ``\ccbm{Causal Bottleneck}'' aligned with a model of real-world causal mechanisms obtained from data and background knowledge.} 
    \vspace{-0.3cm}
    \label{fig:abstract}
\end{wrapfigure}
Designing a C$^2$BM requires identifying domain-relevant concepts and specifying their causal relationships, a process that depends heavily on expert knowledge, which could be scarce, costly, or entirely unavailable in practice. To mitigate this reliance and favor agile deployment across domains, we propose a fully automated pipeline (Fig.~\ref{fig:abstract}, \construction{Causal Graph Construction}) for instantiating a C$^2$BM, in which the set of relevant concepts and the causal graph are automatically learned from a mixture of data and \emph{unstructured} background knowledge.

Experimental evidence shows that C$^2$BMs: (i) improve on \textbf{consistency} with real-world causal mechanisms, \textbf{without compromising accuracy} w.r.t. standard DNN models, CBMs, and their extensions~(Sec.~\ref{sec:exp-accuracy-and-rel}); (ii) \textbf{improve interventional accuracy} on downstream concepts with fewer interventions (Sec.~\ref{sec:exp-interventions}); (iii) mitigate reliance on spurious correlations (\textbf{debiasing}, Sec.~\ref{sec:exp-ood}); (iv) permit interventions to remove unethical model behavior and \textbf{meet fairness requirements} (Sec.~\ref{sec:exp-fairness}).

\section{Preliminaries}\label{sec:preliminaries}
We introduce the notation and key formalizations underlying standard CBMs and causal modeling. A more detailed background on causality is provided in App.~\ref{app:background}.

\paragraph{Concept Bottleneck Models.}
CBMs~\citep{koh2020} are interpretable-by-design architectures that explain their predictions using high-level interpretable variables called \textit{concepts}. Standard CBMs decompose prediction into two stages: a neural encoder maps the input $X$ to a set of intermediate concepts $\mathcal{V} = \{V_i\}_{i=1}^C$, and a decoder predicts the target $Y$ from $\mathcal{V}$. This yields:
\begin{equation}
    P(Y, \mathcal{V} \mid X) = \underbrace{P(Y \mid \mathcal{V})}_{\text{decoder}} \, \underbrace{P(\mathcal{V} \mid X)}_{\text{concept encoder}}.
\end{equation}
Concept Embedding Models (CEMs)~\citep{zarlenga2022concept} enhance CBMs by pairing concepts with high-dimensional embeddings of the form $P(\mathcal{U} \mid \mathcal{V}, X)$, where $\mathcal{U} = \{U_i\}_{i=1}^C$. These embeddings are provided to the decoder to predict the target variable $Y$, enabling the model to achieve performance comparable to standard DNN approaches while maintaining semantic interpretability. Critically, traditional decoders rely on a \textit{bipartite structure} assumption, wherein all concepts are treated as direct causes of the target, e.g., $Y = f(V_1, \dots, V_C)$ for CBMs. This assumption is often overly simplistic for real-world problems. Bringing the reasoning of concept-based architectures closer to real-world mechanisms constitutes the main focus of this work.

\paragraph{Causal Reliability.} A model $\mathcal{M}$ is \emph{causally reliable} w.r.t. a target phenomenon $T$ if and only if the structure of $\mathcal{M}$'s decision-making process is consistent with the causal mechanisms underlying $T$ \citep{Termine2023Causal}. Although state-of-the-art DNNs and concept-based models offer high expressivity, they lack causal reliability.

\paragraph{Structural Causal Models.}
The standard framework for modeling causal mechanisms is the \emph{structural causal model} (SCM) \citep{bareinboim2022pearl}. An SCM $\mathcal{M}$ is a tuple $\langle \mathcal{V}, \, \mathcal{U}, \, \mathcal{F}, \, P \rangle$, where:
    \begin{itemize}
        \item $\mathcal{V}$ is a set of $C$ \emph{endogenous}
        variables, modeling observable magnitudes of interest;
        \item $\mathcal{U}$ is a set of \emph{exogenous} variables, modeling unobservable magnitudes determined by factors external to $\mathcal{V}$;
        \item $\mathcal{F} = \{f_i\}_{i=1}^C$ is a set of functions such that 
        \begin{equation}
            V_i = f_i( \textsf{PA}_i, \, \mathcal{U}_i) \quad \forall i=1, \dots  , C
        \end{equation}
        where ${\textsf{PA}_i \subseteq \mathcal{V} \setminus V_i}$ is the set of the \emph{endogenous} parents of $V_i$, ${\mathcal{U}_i \subseteq \mathcal{U}}$ is an exogenous parent summarizing all the information influencing $V_i$ that is not explicitly represented in $\mathcal{V}$, and the entire set $\mathcal{F}$ forms a mapping from  $\mathcal{U}$ to $\mathcal{V}$. 
        \item $P(\mathcal{U})$ is a joint probability distribution over $\mathcal{U}$.
    \end{itemize}
Each SCM can be associated with a graphical representation in which nodes correspond to the variables, 
and edges encode the functional relationships specified by $\mathcal{F}$. Here, we focus on SCMs whose associated graph is a \emph{directed acyclic graph} (DAG)~\citep{pearl1995causal, zaffalon2020structural}. In most cases, the underlying DAG is unknown and must be inferred from observational data, a process known as \textit{causal discovery}~\citep{peters2017elements,zanga2022}. However, methods based solely on observational data cannot generally guarantee the identification of a unique DAG~\citep{peters2017elements}. The set of candidate DAGs can be refined by incorporating additional information, which we refer to as \emph{background knowledge}~\citep{andrews2020completeness, abdulaal2023causal}. This can be drawn from a range of sources, such as human experts, structured repositories of information (e.g., domain ontologies), or ``unstructured'' samples of information (e.g., scientific papers or other documentation).

\section{Related works}
Traditional concept-based architectures impose a strict bipartite structure in which concept neuron activations are assumed to directly cause task outputs~\citep{koh2020,yuksekgonul2022post,kim2023probabilistic,oikarinen2023,yang2023language,barbiero2023interpretable,vandenhirtz2024stochastic}. This strong, often unrealistic assumption can lead to misleading explanations. For example, attributing a lung cancer diagnosis to both a `cough' and `smoker' concept could risk the false interpretation that reducing coughing could reduce cancer risk. Moreover, most CBMs assume independence among concepts, which is unrealistic, as it ignores natural co-occurrences (e.g., `smoke' and `fire') and prevents improvements in one concept from propagating to related concepts during interventions. Stochastic CBM (SCBM)~\citep{vandenhirtz2024stochastic} and Concept Graph Models~\citep{dominici2025causal} attempt to relax this assumption. However, these approaches capture only associations rather than causal relations, making it vulnerable to spurious correlations in the data. To date, no methodology exists for structuring the concept bottleneck according to a reliable causal model.

Recent approaches like DiConStruct~\citep{moreira2024diconstruct}, aim to improve this aspect by generating causal graphs linking concepts to opaque DNN predictions. However, DiConStruct is a \textit{post-hoc} method that may misalign with the original DNN's outputs and relies solely on observational data, neglecting background knowledge and resulting in under-determined causal structures. Other architectures, such as Neural Causal Models~\citep{ke2019learning} and Neural Causal Abstractions~\citep{xia2024neural}, impose even stronger assumptions, requiring access to either the true causal graph or a low-resolution structural causal model, which are impractical in many cases.

\section{Method}\label{sec:Architecture}
\begin{figure}[t]
    \centering
    \includegraphics[width=\linewidth]{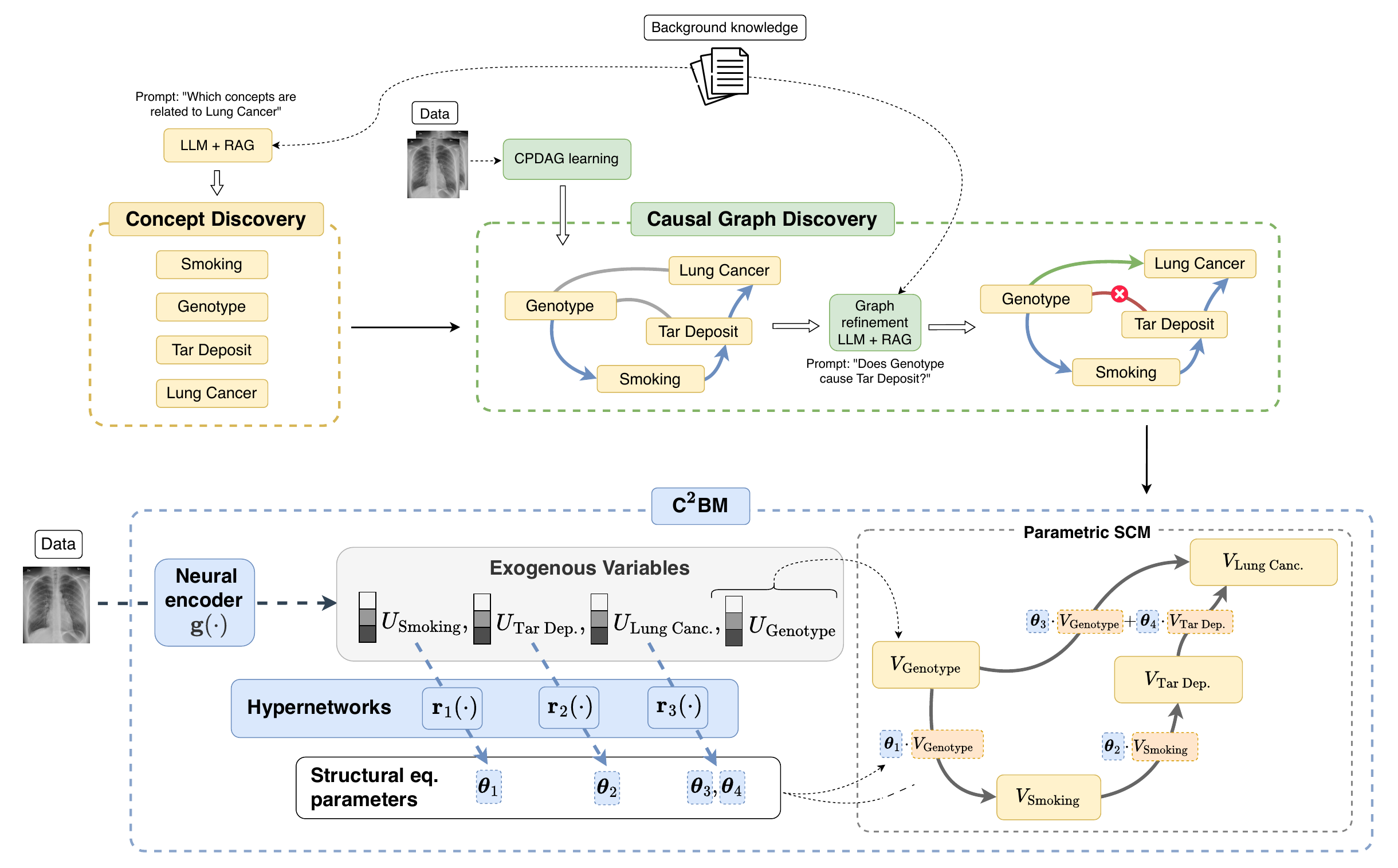}
    \caption{\textbf{Overview of the C$^2$BM fully automated pipeline.} The pipeline consists of three key blocks: (i) Concept discovery: discovery and labeling of the relevant variables $\mathcal{V}$ from background knowledge; (ii) \construction{Causal graph discovery}: discovery of the causal graph by integrating data and background knowledge; (iii) the \ccbm{C$^2$BM model}, comprising a neural encoder and an adaptively parametrized SCM. Once the model is trained, it can support forward and interventional queries about any endogenous variable (e.g., predicting \textit{dyspea}).}
    \label{fig:pipeline}
\end{figure}
In this section, we introduce \emph{Causally reliable Concept Bottleneck Models} (C$^{2}$BMs) and the pipeline we propose to fully automate its instantiation, learning, and functioning (Fig.~\ref{fig:pipeline}).

\subsection{Causally reliable Concept Bottleneck Models}\label{sec:Blueprint}
A C$^2$BM is a concept-based architecture that leverages the formalism of SCMs to structure a ``\ccbm{causal bottleneck of concepts}''. More formally, let: (i) $X$ denoting a random variable modeling (possibly noisy) input features; (ii)  $\mathcal{V} = \{V_i\}_{i=1}^C$ be a set of $C$ semantically meaningful variables modeled as \emph{endogenous} variables; (iii) $\mathbf{G}$ be a DAG connecting variables in $\mathcal{V}$. A C$^2$BM is a neural architecture implementing the tuple $\langle \mathbf{g}, \mathcal{M}_{\boldsymbol\Theta} \rangle$ where:
\begin{itemize}  
    \item $\mathbf{g}(\cdot)$ is a \textit{neural encoder} modeling a probability distribution $P(\mathcal{U}|X)$ over a set of latent, high-dimensional embeddings $\mathcal{U} = \{U_i\}_{i=1}^C$, representing the \emph{exogenous} variables;
    \item $\mathcal{M}_{\boldsymbol\Theta}$ is a \textit{parametric} SCM $\langle \mathcal{V}, \mathcal{U}, \mathcal{F}_{\boldsymbol\Theta}, P(\mathcal{U} | X) \rangle$ (see Sec.~\ref{sec:preliminaries}), where we assume a parametric form for the functions' set. Specifically, the structure of the functions is determined by the connectivity of $\mathbf{G}$, and the parameters $\boldsymbol\Theta$ are predicted from $\mathcal{U}$ by a hypernetwork.
\end{itemize}
The information flowing along a C$^2$BM can be described as follows (Fig.~\ref{fig:pipeline}, right side). First, the values of the exogenous variables $\mathcal{U}$ are predicted using the exogenous encoder $\mathbf{g}(\cdot)$ from $X$. Then, the information flows along the SCM $\mathcal{M}_{\boldsymbol\Theta}$ starting from the endogenous sources (predicted from $\mathcal{U}$) down to the sinks. At each subsequent level of the causal graph, the values of each $V_i$ are predicted from the values of its \emph{parents} $\textsf{PA}_i$ based on the relative structural equation $f_i\in \mathcal{F}_{\boldsymbol{\Theta}}$. 

\subsection{Model instantiation}\label{sec:ModelConstruction}
 To instantiate a C$^2$BM, one requires a labeled dataset $\mathcal{D}$ annotated for all variables in $\mathcal{V}$, as well as a DAG capturing the causal relationships among $\mathcal{V}$. However, such resources may be inaccessible, problem-specific, or heavily dependent on human expertise. To address this challenge, we propose a fully automated pipeline that enables the use of C$^2$BM also in such complex scenarios. Our approach extracts the necessary components from: (i) a potentially unlabeled dataset $\mathcal{D}_x$; (ii) a potentially unstructured repository of background knowledge $\mathcal{K}$.

Our pipeline (see Fig.~\ref{fig:pipeline}) addresses the following sub-problems: \construction{(i) causal graph construction}, which includes concept discovery, concept labeling (Sec.~\ref{sec:concept-discovery}), and causal graph discovery (Sec.~\ref{sec:graph-discovery}); and \ccbm{(ii) training of the neural parameters} of the encoder and the hypernetwork determining the structural equations (Sec.~\ref{sec:equation-learning}). In the following, we outline our implementation for each sub-problem. Specifically, building C$^2$BM's individual prerequisites will be mostly based on prior work. Note that integrating them into a coherent, automated pipeline is instead part of this paper's contributions.
\begin{remark}
Concepts and causal graph constitute an input for C$^2$BM, which could remain agnostic to how they are obtained, e.g., provided by human experts. Notably, alternative or novel approaches may be employed, provided they solve the same problems~\citep{loula2025syntactic}.
\end{remark}

\subsubsection{Concept discovery and labeling}\label{sec:concept-discovery}
\begin{problem}[Concept Discovery]
Given a dataset of i.i.d. samples $\mathcal{D}_x = \{\mathbf{x}_i\}_{i=1}^N$, and a background knowledge repository $\mathcal{K}$ relative to a task, identify a set of relevant variables $\mathcal{V}$.
\end{problem}
In the CBM community, automated concept discovery and labeling using Large Language Models (LLMs) has become a standard solution when human supervision is unavailable~\citep{oikarinen2023, yang2023language, srivastava2024vlg, yamaguchi2025explanation}. In our implementation, we follow the label-free CBM approach from~\citet{oikarinen2023}, where concepts are discovered by querying an LLM for those most relevant to the task. We then apply a filtering procedure to retain only concepts that meet criteria such as brevity, distinctiveness (i.e., not too similar to each other or the target), and presence in the training data. 

Once $\mathcal{V}$ are selected, we label the dataset with variable annotations $\mathcal{D} = \{(\mathbf{x}_i, \mathbf{v}_i)\}_{i=1}^N$ to supervise concept learning. To do so, we adopt a strategy similar to~\citet{oikarinen2023}, leveraging a pre-trained contrastive vision-language model, such as CLIP~\citep{radford2021learning}. This projects both data samples and the discovered concept names into a shared embedding space and computes their alignment to generate concept labels. Full implementation details are provided in App.~\ref{app:concept_discovery}.

\subsubsection{Causal graph discovery} 
\label{sec:graph-discovery}
\begin{problem}[Causal Discovery]
Let $\mathbf{G}^*$ be the true, unknown, graph over a set of variables $\mathcal{V}$ from which a dataset $\mathcal{D}$ was generated. The causal discovery problem consists in recovering $\mathbf{G}^*$ from the observed dataset $\mathcal{D}$~\citep{zanga2022}.
\end{problem}
As anticipated in Sec.~\ref{sec:preliminaries}, a promising direction for addressing this problem is to combine standard causal discovery algorithms with knowledge-base querying. In our pipeline, we focus on a well-known class of methods that recover an equivalence class of graphs from data, referred to as the \emph{Markov equivalence class} (MEC)~\citep{Spirtes2001, Pearl2009, zanga2022}.
This class can be compactly represented as a \emph{Completed Partially Directed Acyclic Graph} (CPDAG), an extension of a DAG where edges remain unoriented when there is insufficient evidence in the data to infer causal direction. This is a desirable property as it prevents incorrect (spurious) orientations based on the data alone. Specifically, we apply the \emph{Greedy Equivalence Search} (GES)~\citep{chickering2002} algorithm, which we found performed well empirically (see App.~\ref{app:ablation_cd}). Then, we leverage a pre-trained LLM to assess each undirected edge in the CPDAG, orienting the ones corresponding to true causal relationships while discarding those originating from spurious correlations. To improve the robustness and generalizability of this approach, we pair the LLM with a \textit{Retrieval Augmented Generation} (RAG) technique, which is known to reduce hallucinations and can provide problem-specific knowledge. To further improve robustness and performance, we repeat each query 10 times, selecting the most frequent outcome~\citep{wang2022self}. Implementation details are provided in App.~\ref{app:causal_discovery}.

\subsection{Structural equations and model training}
\label{sec:equation-learning}
\begin{problem}[Learning structural equations]
    Let $\mathcal{E}$ be the set of edges describing causal connections in a $\mathrm{DAG}$ $\mathbf{G}$ connecting variables in $\mathcal{V}$. Given $\mathcal{D} = \{(\mathbf{x}_i, \mathbf{v}_i)\}_{i=1}^N$ and $\mathcal{E}$, predict the parameters $\boldsymbol\Theta$ of the structural equations $\mathcal{F}_{\boldsymbol{\Theta}}$. 
\end{problem}
We model the structural functions $f_i \in \mathcal{F}_{\boldsymbol{\Theta}}$ describing the causal mechanisms relating each endogenous variable$\,$\footnote{Root variables are predicted from the exogenous variables $\mathcal{U}$ using a neural network. Further details are provided in App.~\ref{app:detailedarchitecture}.} to its endogenous parents as weighted linear sums, i.e., for each $i$: 
\begin{equation}\label{eq:model_streq}
    V_i = \sum_{V_j \in \textsf{PA}_i} [\boldsymbol{\theta}_{f_i}]_j V_j
\end{equation}
where ${\textsf{PA}_i}$ denotes the set of endogenous parents of $V_i$$\,$\footnote{We refer to $V_i$ as a variable to allow for a more general formalization. In a classification setting, concepts assume categorical values; hence $V_i$ represents the probability of a concept state activation.}. For the parameters ${\boldsymbol\theta_{f_i}}$, we do not learn a single parameterization; instead, these are adaptively inferred for each different realization of $X$, by a \emph{hypernetwork} $\mathbf{r}(\cdot)$~\citep{ha2017hypernetworks,barbiero2023interpretable,debot2024interpretable}, based on the values of the exogenous variables $\mathcal{U}$ and the graph connectivity. In our implementation, we consider separate hypernetworks $\mathbf{r}_i(\cdot)$ (e.g., independent DNNs), each taking as input a separate exogenous variable:
\begin{equation}\label{eq:meta-model}
    \boldsymbol\theta_{f_i}=\mathbf{r}(\mathcal{E}, \mathcal{U})_i \, := \, \mathbf{r}_i(U_i) = \mathbf{r}_i(\mathbf{g}(X)_i).
\end{equation}
The design in Eq.~\ref{eq:model_streq} and \ref{eq:meta-model} improves both interpretability and expressivity. To aid (mechanistic) interpretability, structural equations take a linear form (Eq.~\ref{eq:model_streq}): the value of each is a weighted linear combination of its parents. The adaptive re-parameterization of the equation’s weights performed by the hypernetwork $\mathbf{r}(\cdot)$ allows the model to also approximate non-linear relationships among endogenous variables (see App.~\ref{app:detailedarchitecture}, which also includes a proof that C$^2$BM is a universal approximator, regardless of the underlying causal graph). This idea is in line with existing literature on interpretability~\citep{ribeiro2016should, alvarez2018towards} and concept-based methods~\citep{barbiero2023interpretable}~\footnote{This idea also aligns with~\citet{balke1994counterfactual} and~\citet{zaffalon2020}, where exogenous variables are used to represent relationships between endogenous variables when the structural equations are unknown.} 


\paragraph{Model training.}

\begin{wrapfigure}{r}{0.30\textwidth}
    \vspace{-0.4cm}
    \centering
    \includegraphics[width=\linewidth]{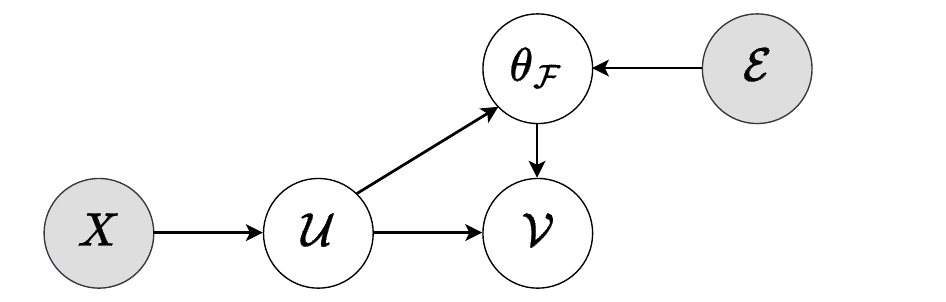}
    \caption{Probabilistic graphical model of C$^2$BM inference.}
    \vspace{-0.8cm}
    \label{fig:pgm}
\end{wrapfigure}

The training of C$^2$BM consists of learning the neural parameters of the encoder $\mathbf{g}(\cdot)$ and hypernetwork $\mathbf{r}(\cdot)$ \textit{end-to-end} from the input data. We formalize this by modeling the joint conditional distribution $P(\mathcal{V}, \mathcal{U}, \boldsymbol{\Theta} \mid X, \mathcal{E})$ which factorizes as:
{\small
\begin{equation}
P(\mathcal{V}, \mathcal{U}, \boldsymbol{\Theta} \mid X, \mathcal{E}) =  \overbrace{P(\mathcal{V} \mid \mathcal{U}; \boldsymbol{\Theta})}^{\text{endogenous}} \, \overbrace{P(\boldsymbol{\Theta}\mid \mathcal{E}, \mathcal{U})}^{\text{structural equation}} \, \overbrace{P(\mathcal{U} \mid X)}^{\text{exogenous}}
\end{equation}
}

where: $P( \mathcal{U} \mid X)$ represents the exogenous encoder $\mathbf{g}(\cdot)$; 
$P(\boldsymbol{\Theta} \mid \mathcal{E}, \mathcal{U})$ represents the hypernetwork $\mathbf{r}(\cdot)$ predicting the structural equations' parameters using the given causal connections and the exogenous variables; $P(\mathcal{V} \mid \mathcal{U};\boldsymbol{\Theta})$ represents a causally reliable classifier leveraging the structural equations $\mathcal{F}_{\boldsymbol\Theta}$ to predict the values of the endogenous variables. Under the Markov condition imposed by the C$^2$BM causal graph, the causally-reliable classifier can be re-written as a product of independent distributions i.e., 
\begin{equation}
    P(\mathcal{V} \mid \mathcal{U}; \boldsymbol{\Theta}) = \prod_{i} P(V_i \mid \textsf{PA}_i, U_i; \, \mathbf{r}(\mathcal{E}, \mathcal{U})_i)
\end{equation}
 where $U_i = \mathbf{g}(X)_i$ and $\mathbf{r}(\mathcal{E}, \mathcal{U})_i = \boldsymbol{\theta}_{f_i}$. From the above factorization, we derive the C$^2$BM's training objective, which corresponds to maximizing the empirical log-likelihood of the training data:
{\small
\begin{equation}
    \boldsymbol{\phi}^* 
    = \arg\max_{\boldsymbol\phi} \sum_{\mathcal{D}} \sum_{i =1}^{C} \log P(V_i \mid \textsf{PA}_i, U_i; \, \mathbf{r}(\mathcal{E}, \mathcal{U})_i)
\end{equation}}
\begin{remark}
We clarify that we do not claim to identify the true structural functions. Instead, C$^2$BM numerically approximates the outcomes as if they were generated by the underlying (unknown) structural equations. This approximation, along with C$^2$BM's DAG, is sufficient to compute reliable interventions, which is a core objective in the concept-based community~\citep{Poeta2023, steinmann2024learning}.
\end{remark}


\section{Experimental evaluations}\label{sec:experiments}
We evaluate the performance of the proposed C$^2$BM pipeline. Experiments are conducted across different datasets and settings, allowing for the investigation of the following aspects: classification accuracy~(Sec.~\ref{sec:exp-accuracy-and-rel}), causal reliability~(Sec.~\ref{sec:exp-accuracy-and-rel}), accuracy under ground-truth interventions~(Sec.~\ref{sec:exp-interventions}), debiasing~(Sec.~\ref{sec:exp-ood}), and fairness~(Sec.~\ref{sec:exp-fairness}). App.~\ref{app:extraexp} provides additional results and ablations.

The considered datasets include both synthetic and real-world benchmarks. 
As synthetic datasets, we sample $10^4$ points from each of the five following discrete Bayesian networks available from the \texttt{bnlearn} repository~\citep{scutari2010}: \textbf{Asia}~\citep{asia}, \textbf{Sachs}~\citep{sachs2005causal}, \textbf{Insurance}~\citep{binder1997}, \textbf{Alarm}~\citep{alarm}, and \textbf{Hailfinder}~\citep{abramson1996}. We include \textbf{cMNIST}, a variant of the original dataset \citep{lecun2010} in which the image data are colored according to custom rules. Additionally, we consider three real-world datasets: \textbf{CelebA}~\citep{liu2015}, a facial recognition dataset labeled with different binary facial attributes; \textbf{CUB}$_\mathbf{C}$, a custom version of the original bird image dataset~\citep{he2019fine} from which we select a subset of concepts and define new ones to introduce deeper causal relationships; \textbf{Siim-Pneumothorax}~\citep{you2023}, containing chest X-ray images annotated with a single label indicating the presence of pneumothorax, without additional concepts or their annotations. To generate them, we follow the label-free approach outlined in Sec.~\ref{sec:concept-discovery}. Exhaustive details on all datasets are given in App.~\ref{app:datasets}.

The performance of the proposed pipeline is investigated alongside an opaque neural baseline predicting the task variable only (\textbf{OpaqNN}) and established state-of-the-art (SOTA) concept-based architectures, namely: \textbf{CBM}~\citep{koh2020}, with linear and non-linear decoder; \textbf{CEM}~\citep{zarlenga2022concept}; and \textbf{SCBM}~\citep{vandenhirtz2024stochastic}. 
Hyperparameters have been selected via an independent search for each dataset–model pair based on performance on the validation set. Further details on each model's architecture and hyperparameters can be found in App.~\ref{app:exp-details}. Note that all baselines, except for OpaqNN, provide
concept-based explanations for their predictions and allow concept interventions at test-time. This excludes architectures such as Self-Explainable Neural Networks~\citep{alvarez2018towards} and Concept Whitening~\citep{chen2020concept} as they do not offer a clear mechanism for intervening on their concept bottlenecks. We also excluded other CBM baselines such as Probabilistic CBMs~\citep{kim2023probabilistic}, Post-hoc CBMs~\citep{yuksekgonul2022post}, Label-free CBMs~\citep{oikarinen2023,yang2023language}, as all of them share the same limitation of vanilla CBMs and CEMs: the causal graph is fixed and bipartite.  Python code for reproducing all experiments is provided alongside the submission as supplementary material.


\subsection{Task accuracy and causal reliability}\label{sec:exp-accuracy-and-rel}

\begin{table*}[t]
    \renewcommand*{\arraystretch}{1.2}
        \caption{ Task accuracy (\%). Task concepts are as follows: dysp (Asia), Akt (Sachs), PropCost (Insurance), BP (Alarm), R5Fcst (Hailfinder), parity (cMNIST), mouth slightly open (CelebA), Survival (CUB$_C$), pneumothorax (Pneumoth.). $^*$ refers to reduced concept bottlenecks. Methods matching the performance of OpaQNN and showing a significant improvement over the other considered methods are highlighted in bold. 
        Uncertainties represent $2$ sample mean $\sigma$ across 5 runs.}
        \vspace{-0.3cm}
        \label{table:accuracy}
        \vskip 0.15in
        \begin{center}
            \begin{small}
                \begin{sc}
                    \resizebox{\textwidth}{!}{%
                    \begin{tabular}{l | c c | c c c c c c c c c | c c}
                    \toprule
                    Model & Semantic & Causal & Asia & Sachs & Insurance & Alarm & Hailfinder & cMNIST & CelebA & CUB$_C$ & Pneumoth. & Asia$^*$ & Alarm$^*$\\
                     & Transp. & Rel. & \multicolumn{9}{c}{} \\
                    \midrule 
                    OpaqNN           & {\color{BrickRed}\ding{56}} & {\color{BrickRed}\ding{56}} & $71.0_{\pm1.4}$ & $65.83_{\pm.71}$ & $66.8_{\pm1.5}$ & $62.8_{\pm1.5}$ & $72.0_{\pm1.9}$ & $91.24_{\pm.72}$ & $74.97_{\pm.08}$ & $60.3_{\pm0.9}$ & $80.0_{\pm1.5}$ & $71.0_{\pm1.4}$ & $62.8_{\pm1.5}$ \\
                    \midrule
                    CBM$_{+lin}$     & {\color{ForestGreen}\ding{52}} & {\color{BrickRed}\ding{56}} & $71.2_{\pm1.6}$ & $65.44_{\pm.93}$ & $67.1_{\pm1.7}$ & $62.7_{\pm1.3}$ & $72.2_{\pm2.3}$ & $93.92_{\pm .37}$ & $71.07_{\pm.48}$ & $56.8_{\pm4.2}$ & $76.6_{\pm0.8}$ & $56.0_{\pm8.3}$ & $52.7_{\pm1.0}$ \\
                    CBM$_{+mlp}$    & {\color{ForestGreen}\ding{52}} & {\color{BrickRed}\ding{56}} & $71.2_{\pm1.4}$ & $65.68_{\pm.84}$ & $66.7_{\pm1.4}$ & $62.3_{\pm1.8}$ & $70.9_{\pm2.2}$ & $93.55_{\pm.31}$ & $71.27_{\pm.20}$ & $56.2_{\pm1.9}$ & $76.7_{\pm0.6}$ & $58.6_{\pm2.7}$ & $52.8_{\pm1.2}$ \\
                    CEM              & {\color{ForestGreen}\ding{52}} & {\color{BrickRed}\ding{56}} & $71.1_{\pm1.8}$ & $65.93_{\pm.72}$ & $66.7_{\pm1.6}$ & $60.8_{\pm1.1}$ & $71.5_{\pm1.9}$ & $93.72_{\pm.26}$ & $\mathbf{74.72}_{\pm.14}$ & $56.2_{\pm2.0}$ & $\mathbf{80.1}_{\pm1.1}$ & $\mathbf{69.7}_{\pm2.0}$ & $\mathbf{61.8}_{\pm1.2}$ \\ 
                    SCBM             & {\color{ForestGreen}\ding{52}} & {\color{BrickRed}\ding{56}} & $70.7_{\pm1.6}$ & $66.30_{\pm.55}$ & $67.1_{\pm1.7}$ & $63.4_{\pm1.5}$ & $73.4_{\pm2.1}$ & $94.02_{\pm.23}$ & $72.15_{\pm.15}$ & $59.9_{\pm1.4}$ & $78.4_{\pm0.6}$ & $61.8_{\pm1.2}$ & $53.5_{\pm0.9}$ \\      
                    \textbf{C$^2$BM} & {\color{ForestGreen}\ding{52}} & {\color{ForestGreen}\ding{52}} & $71.4_{\pm1.7}$ & $65.33_{\pm1.1}$ & $66.4_{\pm1.5}$ & $62.5_{\pm1.4}$ & $74.1_{\pm1.8}$ & $94.18_{\pm.03}$ & $\mathbf{74.73}_{\pm.41}$ & $58.1_{\pm1.1}$ & $\mathbf{80.5}_{\pm0.7}$ & $\mathbf{70.8}_{\pm1.7}$ & $\mathbf{60.5}_{\pm1.4}$ \\
                    \bottomrule
                    \end{tabular}
                    }
                \end{sc}
            \end{small}
        \end{center}
    \end{table*}



Our initial experiment evaluates task accuracy. For each dataset, we designate a predefined single variable as the prediction \textit{task}. All models except OpaqNN are trained to predict the task while simultaneously learning to fit the remaining concepts. Tab.~\ref{table:accuracy} presents the task accuracy for all evaluated models (see App.\ref{app:concept_accuracy} for concept accuracy). To further assess model expressiveness, we also evaluate task accuracy on modified versions of the \textit{Asia} and \textit{Alarm} datasets, where selected concepts (App.~\ref{app:datasets}) are intentionally removed to create a stronger bottleneck.


\paragraph{C$^2$BM achieves comparable or higher accuracy to non-causally reliable models (Tab.~\ref{table:accuracy}).}
Our evaluation shows that C$^2$BM achieves robust accuracy across datasets, matching the performance of the expressive models OpaqueNN and CEM. Notably, as the concept bottleneck is reduced, C$^2$BM retains expressivity by leveraging exogenous variables to propagate residual information from the input. 
This is in contrast with CBMs implementing a hard bottleneck. 


\paragraph{C$^2$BM improves on causal reliability (Tab.~\ref{table:hamming}).}
C$^2$BM captures a rich causal structure that aligns well with real-world dependencies. We quantitatively assess this alignment by comparing the learned and true causal graphs in synthetic datasets. Tab.~\ref{table:hamming} reports two metrics: a structural Hamming distance (detailed in App.~\ref{app:hamming}) and the number of incorrect edges, computed after causal discovery (CD) and refinement via LLM queries. Metrics for the simplistic graphs from CBMs (all concepts are treated as mutually independent and direct causes of the task) are reported for reference. Results indicate that integrating CD with background knowledge produces a causal graph that is more accurately aligned with the true structure. Notably, on the \textit{Sachs} dataset, 

\begin{table}[t]
    \renewcommand*{\arraystretch}{1.2}
        \caption{Structural Hamming distance (App.~\ref{app:hamming}) and number of mistaken edges between true and learned DAG. Reliability of standard flat CBMs is reported for reference. The total number of edges is in parentheses.}
        \label{table:hamming}
        \vskip 0.15in
        \begin{center}
            \begin{small}
                \begin{sc}
                    \resizebox{\columnwidth}{!}{%
                    \begin{tabular}{l | c | c c c c c c}
                    \toprule
                    Metric & after & cMNIST & Asia & Sachs & Insur. & Alarm & Hailf.  \\
                    \midrule
                    Hamming & flat CBM & 1.0 & 6.5 & 11.75 & 36.5 & 45.0 & 69.0  \\
                     & CD & 0.2 & 0.7 & 3.4 & 6.4 & 5.4 & \textbf{11.0}  \\
                     & CD + \tiny{LLM} & \textbf{0} & \textbf{0.3} & \textbf{1.8} & \textbf{6.3} & \textbf{5.0} & \textbf{11.0}  \\
                    \midrule
                    Incorrect    & flat CBM & 1 (1) &  11 (8) & 23 (17) & 74 (52)  & 78 (46) & 117 (66) \\
                    edges & CD & 1 (1) &  3 (8) & 17 (17) & 19 (52)  & 13 (46) & \textbf{22} (66) \\
                    (True edges) & CD + \tiny{LLM} & \textbf{0} (1) & \textbf{1} (8) & \textbf{7} (17) & \textbf{18} (52) & \textbf{10} (46) & \textbf{22} (66) \\
                    \bottomrule
                    \end{tabular}
                    }
                \end{sc}
            \end{small}
        \end{center}
\end{table}
the integration of background knowledge enables to correctly identify 10 additional edges w.r.t. CD alone. Detailed ablation studies on causal graph discovery methods and LLM types are provided in App.~\ref{app:ablation_cd}-\ref{app:ablation-llm}-\ref{app:ablation-rag}. To further validate the quality of the learned causal graph, App.~\ref{app:true-graph} demonstrates that C$^2$BM achieves comparable task accuracy using either the learned or the true graph. 
When considered together, the results in Tab.~\ref{table:accuracy}-\ref{table:hamming} highlight C$^2$BM's ability to improve on causal reliability without compromising expressivity and performance.

\subsection{Ground-truth interventions}\label{sec:exp-interventions}
\begin{figure*}[t]
    \centering
    \includegraphics[width=\linewidth]{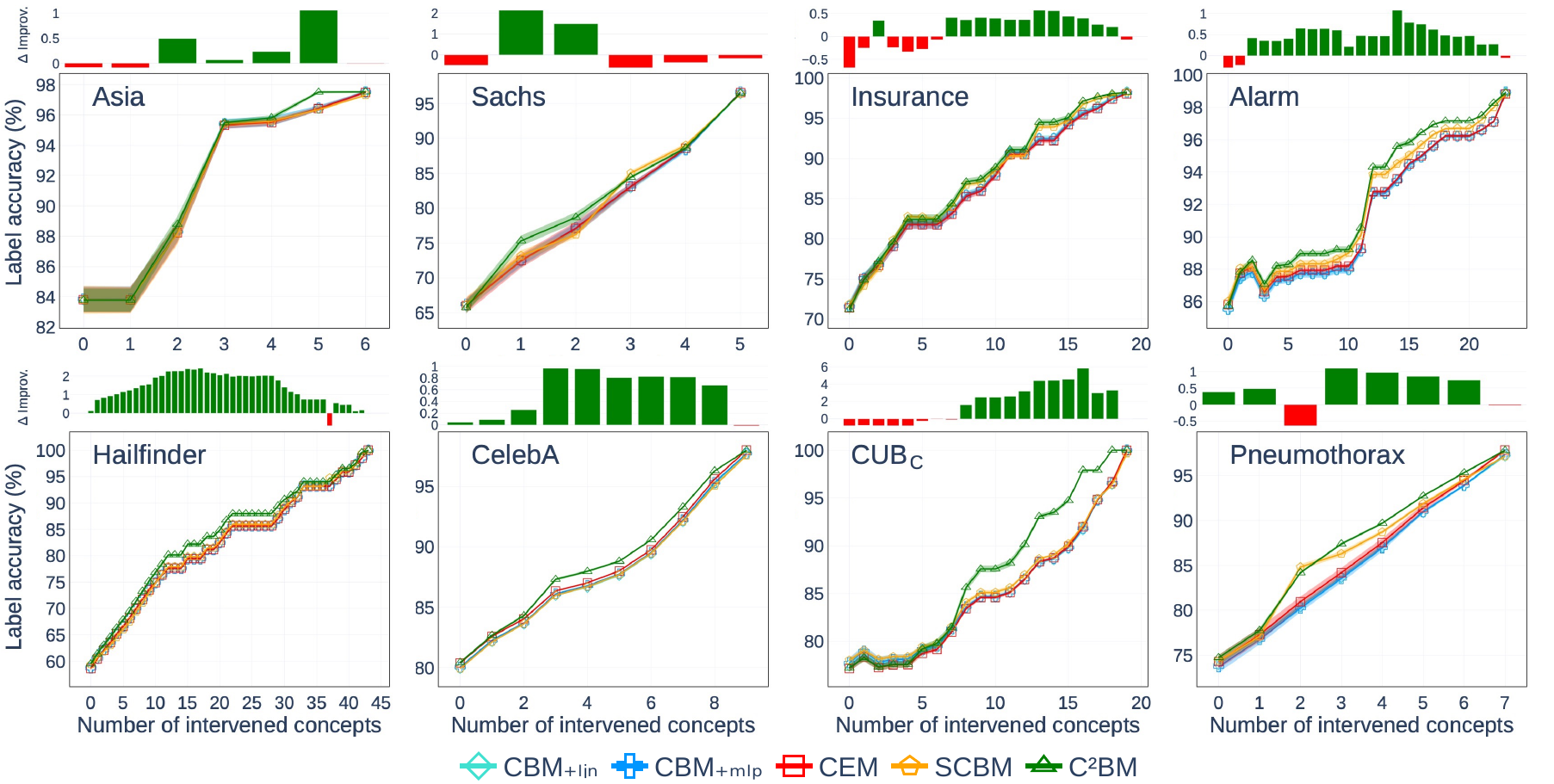}
    \caption{Label accuracy (\%) on downstream variables (task included) after intervening on concepts up to progressively deeper levels in the graph hierarchy. Summit plots show the difference of C$^2$BM's accuracy w.r.t. the best-performing baseline. Uncertainties represent $2$ sample mean $\sigma$ across 5 runs.}
    \label{fig:interventions}
\end{figure*}
After training all models on the same classification task as in Sec.~\ref{sec:exp-accuracy-and-rel}, we test their responsiveness to ground-truth interventions, i.e., replacing predicted concepts with ground-truth values~\footnote{Ground-truth interventions can be seen as a special case of causal \emph{do}-interventions (see App. \ref{app:pearl-ladder}), where variables are set to their ground-truth values.}. This simulates a form of human intervention in a deployed model. Following each intervention, we compute the average accuracy over all variables (concepts and task) prediction. As for the policy, we intervene on random concepts within progressively deeper levels in the hierarchy defined by the true graph. This constitutes the only intervention policy aligned with real-world causal-effect relationships. When the true graph is unavailable, we use the one generated by our pipeline.

\paragraph{C$^2$BM improves accuracy on downstream concepts with fewer interventions~(Fig.~\ref{fig:interventions}).}
Our findings, reported in Fig.~\ref{fig:interventions}, demonstrate that C$^2$BM achieves higher accuracy improvements with fewer interventions compared to alternative models. This advantage stems from two key properties of C$^2$BM: (i) unlike other baselines that do not account for connections among concepts, interventions on an upstream concept in C$^2$BM directly influence \textbf{all} downstream nodes, potentially enhancing the predictions of their values; (ii) unlike SCBM, the effects of interventions in C$^2$BM are restricted to concepts that are causally related, rather than altering concept values due to spurious correlations.

\subsection{Debiasing}\label{sec:exp-ood}
We hypothesize that a real-world-aligned causal bottleneck can reduce reliance on spurious correlations. To test this, we use \textit{cMNIST}, where digit \textit{Color} is correlated with \textit{Parity} during training (all the odd digits are green). 
At test time, the correlation among \textit{Color} and \textit{Parity} is reversed (all the even digits are green), introducing a distribution shift that challenges generalization. The neural encoders in all models still struggle with out-of-distribution (OOD) generalization. Therefore, as expected, all models including C$^2$BM, fail to extrapolate correctly, capturing the artificial correlation. However, their enforced reasoning is different. All baselines retain the concept-task connections, perpetuating the color-parity shortcut. In contrast, C$^2$BM detects the color-parity edge through causal discovery but correctly removes it via graph refinement with the LLM block. This is reflected in large differences in performance after concept interventions, which alleviate (or even remove) the impact of the encoder.

\begin{wrapfigure}{r}{0.4\textwidth}
    \vspace{-0.3cm}
    \centering
    \includegraphics[width=0.8\linewidth]{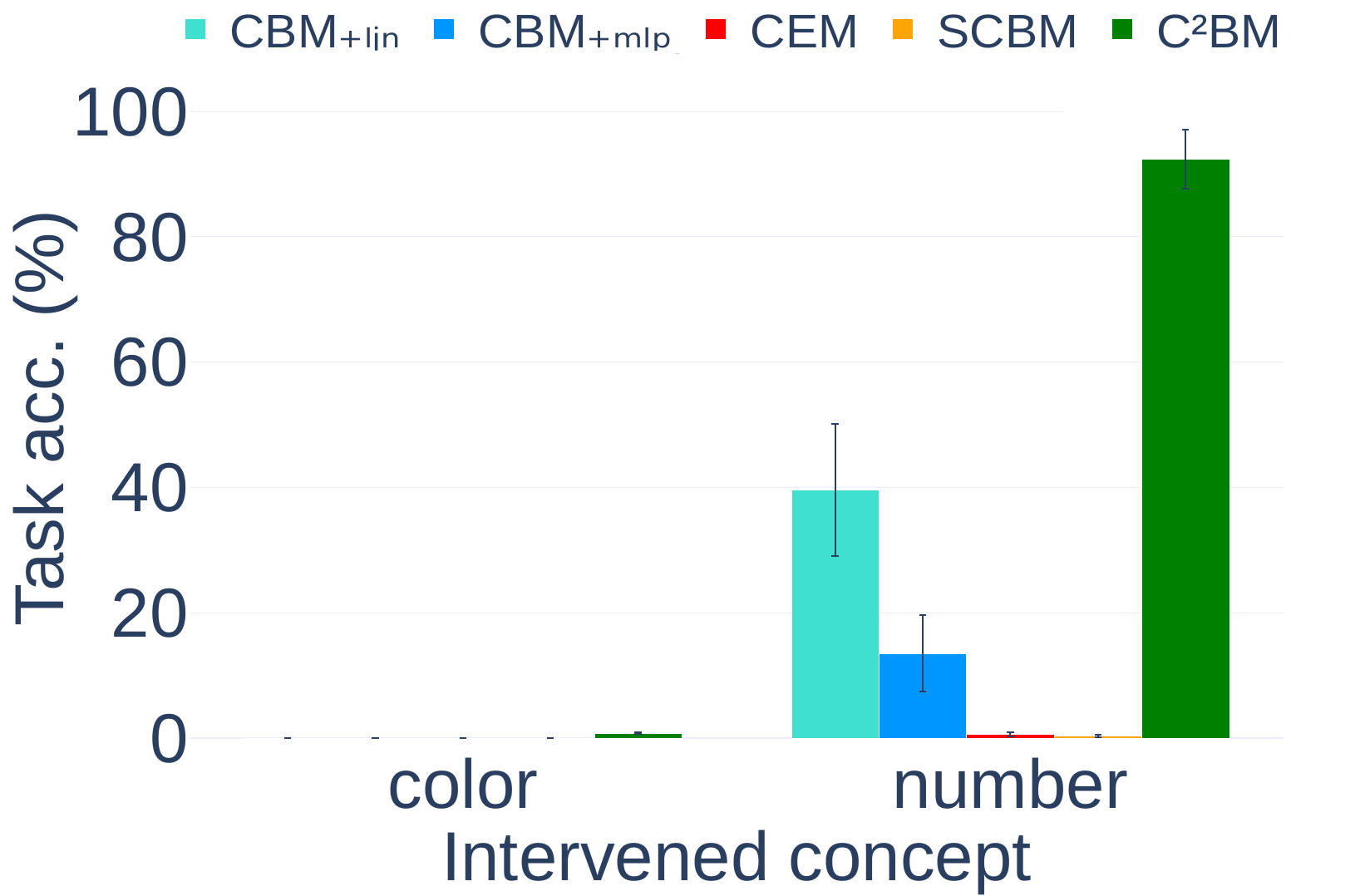}
    \caption{Biased ColorMNIST dataset. Task accuracy on \textit{Parity} after ground-truth
    interventions.}
    \label{fig:colormnist_ood_SI_on_y}
\end{wrapfigure}
\paragraph{Causal bottlenecks mitigate reliance on spurious correlations~(Fig.~\ref{fig:colormnist_ood_SI_on_y}).}
Fig.~\ref{fig:colormnist_ood_SI_on_y} shows the accuracy on \textit{Parity} after ground-truth interventions on each concept. As expected, color has no effect across all models, confirming the learned bias. 
Notably, C$^2$BM exhibits the largest improvement when intervening on the \emph{number} concept (achieving $\sim90$\% accuracy), as its causal structure isolates color and strengthens training on the correct feature. 
Although a comprehensive analysis of OOD robustness is beyond the scope of this paper, our results suggest that the C$^2$BM pipeline holds promise in improving generalization in biased settings.

\subsection{Fairness}\label{sec:exp-fairness}
We create a customized \textit{CelebA} dataset to evaluate the influence of sensitive attributes on the decision-making of the model. Specifically, we consider a hypothetical scenario in which an actor with a specific physical attribute is required for a specific role. However, the hiring manager has a strong bias toward \emph{Attractive} applicants. To model this, we define two custom attributes: \emph{Qualified}, indicating whether an applicant meets the biased hiring criteria, and \emph{Should be Hired}, which depends on both \emph{Qualified} and the task-specific requirement (\emph{Pointy Nose}). In our fairness analysis, we aim to intervene and remove any such unfair bias. 
\paragraph{C$^2$BMs permit interventions to meet fairness requirements~(Tab.~\ref{table:fairness}).}  
We measure the Causal Concept Effect (CaCE)~\citep{goyal2019} of \emph{Attractive} on \emph{Should be Hired} before and after blocking the only path between them, i.e., performing an intervention on \emph{Qualified}. Tab.~\ref{table:fairness} shows that C$^2$BM is the only model able to successfully remove the influence, achieving a post-intervention CaCE of 0.0\%. 
This difference stems from the model architectures: 
\begin{wraptable}{r}{0.60\columnwidth}
    
    \renewcommand*{\arraystretch}{1.2}
        \caption{CelebA dataset. Causal Concept Effect (CaCE, \%) between a sensitive concept (\textit{Attractive}) and a target concept (\textit{Should be hired}), before and after blocking the path between the two variables with do-interventions.}
        \label{table:fairness}
        \begin{center}
            \begin{small}
                \begin{sc}
                    \resizebox{0.6\columnwidth}{!}{%
                    \begin{tabular}{l | c c c c c }
                    \toprule
                    CaCE metric & CBM$_{+lin}$ & CBM$_{+mlp}$ & CEM & SCBM & \textbf{C$^2$BM} \\
                    \midrule
                    Before int. & $12.3_{\pm1.1}$ & $12.5_{\pm3.1}$ & $19.0_{\pm3.6}$ & $30.9_{\pm4.1}$ & $25.1_{\pm1.8}$ \\
                    After int.  & $21.9_{\pm1.3}$ & $11.8_{\pm6.5}$ & $8.2_{\pm2.2}$ & $14.8_{\pm3.6}$ & $\textbf{0.0}_{\pm0.0}$ \\
                    \bottomrule
                    \end{tabular}
                    }
                \end{sc}
            \end{small}
        \end{center}
\end{wraptable}
in CBM, CEM, and SCBM, all concepts are directly connected to the task, meaning interventions on one concept cannot block the influence of others. In contrast, C$^2$BM enforces a structured causal bottleneck allowing for interventions to fully override the effects of parent nodes and block information propagation through the intervened node. This highlights C$^2$BM’s ability to enforce causal fairness by eliminating biased pathways.


\section{Conclusions}\label{sec:conclusions}
We presented C$^2$BM, a concept-based model advancing prior research by structuring the bottleneck of concepts according to a model of causal relations between human-interpretable variables. By combining observational data with background knowledge, C$^2$BMs improves on causal reliability without compromising performance. This offers several additional benefits, e.g., improved interventional accuracy, robustness to spurious correlations, and fairness.

\paragraph{Applications and broader impact.}
We speculate C$^2$BM has the potential to significantly narrow the hypothesis space in complex scientific domains where even a panel of human experts might struggle to identify or exclude plausible hypotheses worth testing. For instance, constructing the hypothesis space to design clinical trials accounting for the influence of environmental conditions on gastrointestinal biochemistry requires deep interdisciplinary knowledge---not only in environmental science and biochemistry, but also in genetics, microbiology, nutrition science, and epidemiology, among others. In such settings, C$^2$BM may integrate human scientific knowledge across these diverse fields to construct a comprehensive causal graph, thereby supporting experts in systematically excluding hypotheses that are inconsistent with the integrated body of evidence. This can help accelerate interdisciplinary scientific discovery, while reducing experts' cognitive burden.

\paragraph{Limitations.}
C$^2$BM requires a robust prior knowledge base, access to pre-trained models (LLMs for causal discovery), and well-crafted prompts for querying these LLMs. Biases within the knowledge base or in the observational data can reduce the system’s reliability (though C$^2$BM still outperforms the selected baselines). Furthermore, the SOTA in causal structural learning currently faces scalability limitations, which also constrain C$^2$BM. As these techniques become more scalable and robust, C$^2$BM stands to benefit. Finally, encoder embeddings used to construct exogenous variables can move out of distribution, and since the encoder’s OOD performance is not guaranteed, the SCM’s OOD performance may likewise be affected.

\paragraph{Future works.}
Future directions include a deeper investigation of OOD generalization with C$^2$BMs, an extensive exploration of their role in causal inference (e.g, \emph{counterfactual queries} extending~\citep{}, see App.~\ref{app:background}), and the identification of optimal intervention policies. Moreover, incorporating PAGs \citep{Zhang2008} would enable modeling of hidden confounders. 

\section*{Acknowledgments}
This work is supported by the Swiss National Science Foundation (SNSF) through the grant 205121\_197242 for the project ``PROSELF: Semi-automated Self-Tracking Systems to Improve Personal Productivity'' and the Hasler Foundation under the Project ID: 2024-05-15-70. AC and JS acknowledge support from FFF of the University of Liechtenstein grant lbs\_24\_08. PB acknowledges support from the Swiss National Science Foundation Postdoctoral Fellowships IMAGINE (No. 224226) and has received funding from the Research Foundation Flanders (FWO, G033625N). AT acknowledges the support by the Hasler Foundation grant Malescamo (No. 22050), and the Horizon Europe grant Automotif (No. 101147693).

\bibliography{neurips_2025_references}

@article{espinosa2024learning,
  title={Learning to receive help: Intervention-aware concept embedding models},
  author={Espinosa Zarlenga, Mateo and Collins, Katie and Dvijotham, Krishnamurthy and Weller, Adrian and Shams, Zohreh and Jamnik, Mateja},
  journal={Advances in Neural Information Processing Systems},
  volume={36},
  year={2024}
}

@inproceedings{ye2024investigating,
  title={Investigating the effectiveness of task-agnostic prefix prompt for instruction following},
  author={Ye, Seonghyeon and Hwang, Hyeonbin and Yang, Sohee and Yun, Hyeongu and Kim, Yireun and Seo, Minjoon},
  booktitle={Proceedings of the AAAI Conference on Artificial Intelligence},
  volume={38},
  number={17},
  pages={19386--19394},
  year={2024}
}

@article{he2019fine,
  title={Fine-grained visual-textual representation learning},
  author={He, Xiangteng and Peng, Yuxin},
  journal={IEEE Transactions on Circuits and Systems for Video Technology},
  volume={30},
  number={2},
  pages={520--531},
  year={2019},
  publisher={IEEE}
}

@article{vandenhirtz2024stochastic,
  title={Stochastic concept bottleneck models},
  author={Vandenhirtz, Moritz and Laguna, Sonia and Marcinkevi{\v{c}}s, Ri{\v{c}}ards and Vogt, Julia},
  journal={Advances in Neural Information Processing Systems},
  volume={37},
  pages={51787--51810},
  year={2024}
}

@article{alvarez2018towards,
  title={Towards robust interpretability with self-explaining neural networks},
  author={Alvarez Melis, David and Jaakkola, Tommi},
  journal={Advances in neural information processing systems},
  volume={31},
  year={2018}
}

@article{chen2019looks,
  title={This looks like that: deep learning for interpretable image recognition},
  author={Chen, Chaofan and Li, Oscar and Tao, Daniel and Barnett, Alina and Rudin, Cynthia and Su, Jonathan K},
  journal={Advances in neural information processing systems},
  volume={32},
  year={2019}
}

@inproceedings{ribeiro2016should,
  title={" Why should i trust you?" Explaining the predictions of any classifier},
  author={Ribeiro, Marco Tulio and Singh, Sameer and Guestrin, Carlos},
  booktitle={Proceedings of the 22nd ACM SIGKDD international conference on knowledge discovery and data mining},
  pages={1135--1144},
  year={2016}
}

@article{chen2020concept,
  title={Concept whitening for interpretable image recognition},
  author={Chen, Zhi and Bei, Yijie and Rudin, Cynthia},
  journal={Nature Machine Intelligence},
  volume={2},
  number={12},
  pages={772--782},
  year={2020},
  publisher={Nature Publishing Group UK London}
}

@article{yuksekgonul2022post,
  title={Post-hoc concept bottleneck models},
  author={Yuksekgonul, Mert and Wang, Maggie and Zou, James},
  journal={arXiv preprint arXiv:2205.15480},
  year={2022}
}

@article{kim2023probabilistic,
  title={Probabilistic concept bottleneck models},
  author={Kim, Eunji and Jung, Dahuin and Park, Sangha and Kim, Siwon and Yoon, Sungroh},
  journal={arXiv preprint arXiv:2306.01574},
  year={2023}
}

@inproceedings{abdulaal2023causal,
  title={Causal Modelling Agents: Causal Graph Discovery through Synergising Metadata-and Data-driven Reasoning},
  author={Abdulaal, Ahmed and Montana-Brown, Nina and He, Tiantian and Ijishakin, Ayodeji and Drobnjak, Ivana and Castro, Daniel C and Alexander, Daniel C and others},
  booktitle={International Conference on Learning Representations},
  year={2023}
}

@inproceedings{xia2024neural,
  title={Neural causal abstractions},
  author={Xia, Kevin and Bareinboim, Elias},
  booktitle={Proceedings of the AAAI Conference on Artificial Intelligence},
  volume={38},
  number={18},
  pages={20585--20595},
  year={2024}
}

@article{ke2019learning,
  title={Learning neural causal models from unknown interventions},
  author={Ke, Nan Rosemary and Bilaniuk, Olexa and Goyal, Anirudh and Bauer, Stefan and Larochelle, Hugo and Sch{\"o}lkopf, Bernhard and Mozer, Michael C and Pal, Chris and Bengio, Yoshua},
  journal={arXiv preprint arXiv:1910.01075},
  year={2019}
}

@article{moreira2024diconstruct,
  title={DiConStruct: Causal Concept-based Explanations through Black-Box Distillation},
  author={Moreira, Ricardo and Bono, Jacopo and Cardoso, M{\'a}rio and Saleiro, Pedro and Figueiredo, M{\'a}rio AT and Bizarro, Pedro},
  journal={arXiv preprint arXiv:2401.08534},
  year={2024}
}

@article{asia,
  author        = {S. L. Lauritzen and D. J. Spiegelhalter},
  title         = {{Local Computation with Probabilities on Graphical Structures and their Application to Expert Systems (with discussion)}},
  journal       = {Journal of the Royal Statistical Society: Series B (Statistical Methodology)},
  year          = {1988},
  volume        = {50},
  number        = {2},
  pages         = {157--224}
}

@article{sachs2005causal,
  title={Causal protein-signaling networks derived from multiparameter single-cell data},
  author={Sachs, Karen and Perez, Omar and Pe'er, Dana and Lauffenburger, Douglas A and Nolan, Garry P},
  journal={Science},
  volume={308},
  number={5721},
  pages={523--529},
  year={2005},
  publisher={American Association for the Advancement of Science}
}

@inproceedings{alarm,
  author        = {I. A. Beinlich and H. J. Suermondt and R. M. Chavez and G. F. Cooper},
  title         = {{The ALARM Monitoring System: A Case Study with Two Probabilistic Inference Techniques for Belief Networks}},
  booktitle     = {{Proceedings of the 2nd European Conference on Artificial Intelligence in Medicine}},
  pages         = {247--256},
  publisher     = {Springer-Verlag},
  url           = {https://www.cse.huji.ac.il/~galel/Repository/Datasets/alarm/alarm.htm},
  year          = {1989}
}

@inproceedings{barbiero2023interpretable,
  title={Interpretable neural-symbolic concept reasoning},
  author={Barbiero, Pietro and Ciravegna, Gabriele and Giannini, Francesco and Zarlenga, Mateo Espinosa and Magister, Lucie Charlotte and Tonda, Alberto and Li{\'o}, Pietro and Precioso, Frederic and Jamnik, Mateja and Marra, Giuseppe},
  booktitle={International Conference on Machine Learning},
  pages={1801--1825},
  year={2023},
  organization={PMLR}
}

@article{antonucci2023zero,
  title={Zero-shot Causal Graph Extrapolation from Text via LLMs},
  author={Antonucci, Alessandro and Piqu{\'e}, Gregorio and Zaffalon, Marco},
  journal={arXiv preprint arXiv:2312.14670},
  year={2023}
}

@article{Pearl2019,
  title={The seven tools of causal inference, with reflections on machine learning},
  author={Pearl, Judea},
  journal={Communications of the ACM},
  volume={62},
  number={3},
  pages={54--60},
  year={2019},
  publisher={ACM New York},
  address={New York, USA}
}

@article{chickering2002,
  title={Optimal structure identification with greedy search},
  author={Chickering, David Maxwell},
  journal={Journal of machine learning research},
  volume={3},
  number={Nov},
  pages={507--554},
  year={2002}
}

@book{spirtes2000causation,
  title={Causation, prediction, and search},
  author={Spirtes, Peter and Glymour, Clark N and Scheines, Richard},
  year={2000},
  publisher={MIT press}
}

@article{Zhang2008,
  title={Causal reasoning with ancestral graphs},
  author={Zhang, Jiji},
  journal={Journal of Machine Learning Research},
  volume={9},
  pages={1437--1474},
  year={2008},
  publisher={Microtome Publishing}
}

@article{wei2022chain,
  title={Chain-of-thought prompting elicits reasoning in large language models},
  author={Wei, Jason and Wang, Xuezhi and Schuurmans, Dale and Bosma, Maarten and Xia, Fei and Chi, Ed and Le, Quoc V and Zhou, Denny and others},
  journal={Advances in neural information processing systems},
  volume={35},
  pages={24824--24837},
  year={2022}
}

@inproceedings{oikarinen2023,
  title={Label-free Concept Bottleneck Models},
  author={Oikarinen, Tuomas and Das, Subhro and Nguyen, Lam and Weng, Lily},
  booktitle={International Conference on Learning Representations},
  year={2023}
}

@inproceedings{kim2018interpretability,
  title={Interpretability beyond feature attribution: Quantitative testing with concept activation vectors (tcav)},
  author={Kim, Been and Wattenberg, Martin and Gilmer, Justin and Cai, Carrie and Wexler, James and Viegas, Fernanda and others},
  booktitle={International conference on machine learning},
  pages={2668--2677},
  year={2018},
  organization={PMLR}
}

@incollection{Termine2023Causal,
    author = {Termine, Alberto and Primiero, Giuseppe},
    title = {Causality Problems in Machine Learning Systems},
    booktitle = {Routledge Handbook of Causality and Causal Methods},
    publisher = {Routledge},
    editor = {Russo, Federica and Illari, Phyllis},
    year = {2024},
}

@article{Poeta2023,
  title={Concept-based explainable artificial intelligence: A survey},
  author={Poeta, Eleonora and Ciravegna, Gabriele and Pastor, Eliana and Cerquitelli, Tania and Baralis, Elena},
  journal={arXiv preprint arXiv:2312.12936},
  year={2023}
}

@book{Pearl2018,
  title={The book of why: the new science of cause and effect},
  author={Pearl, Judea and Mackenzie, Dana},
  year={2018},
  publisher={Basic books}
}

@book{Pearl2009, 
    place={Cambridge}, 
    edition={2}, 
    title={Causality}, 
    DOI={10.1017/CBO9780511803161}, 
    publisher={Cambridge University Press}, 
    author={Pearl, Judea}, 
    year={2009}
}

@inproceedings{kingma2014adam,
  title={Adam: a method for stochastic optimization},
  author={Diederik P Kingma and Jimmy Ba},
  booktitle={International Conference on Learning Representations},
  year={2015}
}

@article{zhang2024causal,
  title={Causal graph discovery with retrieval-augmented generation based large language models},
  author={Zhang, Yuzhe and Zhang, Yipeng and Gan, Yidong and Yao, Lina and Wang, Chen},
  journal={arXiv preprint arXiv:2402.15301},
  year={2024}
}

@article{long2023can,
  title={Can large language models build causal graphs?},
  author={Long, Stephanie and Schuster, Tibor and Pich{\'e}, Alexandre},
  journal={arXiv preprint arXiv:2303.05279},
  year={2023}
}

@article{gao2023retrieval,
  title={Retrieval-augmented generation for large language models: A survey},
  author={Gao, Yunfan and Xiong, Yun and Gao, Xinyu and Jia, Kangxiang and Pan, Jinliu and Bi, Yuxi and Dai, Yi and Sun, Jiawei and Wang, Haofen},
  journal={arXiv preprint arXiv:2312.10997},
  year={2023}
}

@article{reimers2019sentence,
  title={Sentence-BERT: Sentence Embeddings using Siamese BERT-Networks},
  author={Reimers, N},
  journal={arXiv preprint arXiv:1908.10084},
  year={2019}
}

@article{lewis2020retrieval,
  title={Retrieval-augmented generation for knowledge-intensive nlp tasks},
  author={Lewis, Patrick and Perez, Ethan and Piktus, Aleksandra and Petroni, Fabio and Karpukhin, Vladimir and Goyal, Naman and K{\"u}ttler, Heinrich and Lewis, Mike and Yih, Wen-tau and Rockt{\"a}schel, Tim and others},
  journal={Advances in Neural Information Processing Systems},
  volume={33},
  pages={9459--9474},
  year={2020}
}

@article{zhang2023siren,
  title={Siren's song in the AI ocean: a survey on hallucination in large language models},
  author={Zhang, Yue and Li, Yafu and Cui, Leyang and Cai, Deng and Liu, Lemao and Fu, Tingchen and Huang, Xinting and Zhao, Enbo and Zhang, Yu and Chen, Yulong and others},
  journal={arXiv preprint arXiv:2309.01219},
  year={2023}
}

@article{jiang2024mixtral,
  title={Mixtral of experts},
  author={Jiang, Albert Q and Sablayrolles, Alexandre and Roux, Antoine and Mensch, Arthur and Savary, Blanche and Bamford, Chris and Chaplot, Devendra Singh and Casas, Diego de las and Hanna, Emma Bou and Bressand, Florian and others},
  journal={arXiv preprint arXiv:2401.04088},
  year={2024}
}

@article{brown2020language,
  title={Language models are few-shot learners},
  author={Brown, Tom and Mann, Benjamin and Ryder, Nick and Subbiah, Melanie and Kaplan, Jared D and Dhariwal, Prafulla and Neelakantan, Arvind and Shyam, Pranav and Sastry, Girish and Askell, Amanda and others},
  journal={Advances in neural information processing systems},
  volume={33},
  pages={1877--1901},
  year={2020}
}

@article{ji2023survey,
  title={Survey of hallucination in natural language generation},
  author={Ji, Ziwei and Lee, Nayeon and Frieske, Rita and Yu, Tiezheng and Su, Dan and Xu, Yan and Ishii, Etsuko and Bang, Ye Jin and Madotto, Andrea and Fung, Pascale},
  journal={ACM Computing Surveys},
  volume={55},
  number={12},
  pages={1--38},
  year={2023},
  publisher={ACM New York, NY}
}

@inproceedings{fel2023craft,
  title={Craft: Concept recursive activation factorization for explainability},
  author={Fel, Thomas and Picard, Agustin and Bethune, Louis and Boissin, Thibaut and Vigouroux, David and Colin, Julien and Cad{\`e}ne, R{\'e}mi and Serre, Thomas},
  booktitle={Proceedings of the IEEE/CVF Conference on Computer Vision and Pattern Recognition},
  pages={2711--2721},
  year={2023}
}

@article{achtibat2023attribution,
  title={From attribution maps to human-understandable explanations through concept relevance propagation},
  author={Achtibat, Reduan and Dreyer, Maximilian and Eisenbraun, Ilona and Bosse, Sebastian and Wiegand, Thomas and Samek, Wojciech and Lapuschkin, Sebastian},
  journal={Nature Machine Intelligence},
  volume={5},
  number={9},
  pages={1006--1019},
  year={2023},
  publisher={Nature Publishing Group UK London}
}

@misc{lecun2010,
  title={MNIST handwritten digit database},
  author={LeCun, Yann and Cortes, Corinna and Burges, Chris and others},
  year={2010},
  publisher={Florham Park, NJ, USA}
}

@article{scutari2010,
  title={Learning Bayesian Networks with the bnlearn R Package},
  author={Scutari, Marco},
  journal={Journal of Statistical Software},
  volume={35},
  number={i03},
  year={2010},
  publisher={Foundation for Open Access Statistics}
}

@inproceedings{liu2015,
  title = {Deep Learning Face Attributes in the Wild},
  author = {Liu, Ziwei and Luo, Ping and Wang, Xiaogang and Tang, Xiaoou},
  booktitle = {Proceedings of International Conference on Computer Vision (ICCV)},
  month = {December},
  year = {2015} 
}

@inproceedings{zarlenga2022concept,
  title={Concept embedding models},
  author={Zarlenga, Mateo Espinosa and Barbiero, Pietro and Ciravegna, Gabriele and Marra, Giuseppe and Giannini, Francesco and Diligenti, Michelangelo and Precioso, Frederic and Melacci, Stefano and Weller, Adrian and Lio, Pietro and others},
  booktitle={NeurIPS 2022-36th Conference on Neural Information Processing Systems},
  year={2022}
}

@inproceedings{yang2023language,
  title={Language in a bottle: Language model guided concept bottlenecks for interpretable image classification},
  author={Yang, Yue and Panagopoulou, Artemis and Zhou, Shenghao and Jin, Daniel and Callison-Burch, Chris and Yatskar, Mark},
  booktitle={Proceedings of the IEEE/CVF Conference on Computer Vision and Pattern Recognition},
  pages={19187--19197},
  year={2023}
}

@inproceedings{koh2020,
  title={Concept bottleneck models},
  author={Koh, Pang Wei and Nguyen, Thao and Tang, Yew Siang and Mussmann, Stephen and Pierson, Emma and Kim, Been and Liang, Percy},
  booktitle={International conference on machine learning},
  pages={5338--5348},
  year={2020},
  organization={PMLR}
}

@inproceedings{you2023,
  title={Cxr-clip: Toward large scale chest x-ray language-image pre-training},
  author={You, Kihyun and Gu, Jawook and Ham, Jiyeon and Park, Beomhee and Kim, Jiho and Hong, Eun K and Baek, Woonhyuk and Roh, Byungseok},
  booktitle={International Conference on Medical Image Computing and Computer-Assisted Intervention},
  pages={101--111},
  year={2023},
  organization={Springer}
}

@article{binder1997,
  title={Adaptive probabilistic networks with hidden variables},
  author={Binder, John and Koller, Daphne and Russell, Stuart and Kanazawa, Keiji},
  journal={Machine Learning},
  volume={29},
  pages={213--244},
  year={1997},
  publisher={Springer}
}

@article{abramson1996,
  title={Hailfinder: A Bayesian system for forecasting severe weather},
  author={Abramson, Bruce and Brown, John and Edwards, Ward and Murphy, Allan and Winkler, Robert L},
  journal={International Journal of Forecasting},
  volume={12},
  number={1},
  pages={57--71},
  year={1996},
  publisher={Elsevier}
}

@article{goyal2019,
  title={Explaining classifiers with causal concept effect (cace)},
  author={Goyal, Yash and Feder, Amir and Shalit, Uri and Kim, Been},
  journal={arXiv preprint arXiv:1907.07165},
  year={2019}
}

@article{zanga2022,
  title={A survey on causal discovery: theory and practice},
  author={Zanga, Alessio and Ozkirimli, Elif and Stella, Fabio},
  journal={International Journal of Approximate Reasoning},
  volume={151},
  pages={101--129},
  year={2022},
  publisher={Elsevier}
}

@book{spirtes2001,
  title={Causation, prediction, and search},
  author={Spirtes, Peter and Glymour, Clark and Scheines, Richard},
  year={2001},
  publisher={MIT press}
}

@inproceedings{sharma2020,
  title={A score-and-search approach to learning Bayesian networks with noisy-or relations},
  author={Sharma, Charupriya and Liao, Zhenyu A and Cussens, James and Beek, Peter},
  booktitle={International Conference on Probabilistic Graphical Models},
  pages={413--424},
  year={2020},
  organization={PMLR}
}

@article{johnson2019,
  title={Mimic-cxr-jpg-chest radiographs with structured labels},
  author={Johnson, Alistair and Lungren, Matt and Peng, Yifan and Lu, Zhiyong and Mark, Roger and Berkowitz, Seth and Horng, Steven},
  journal={PhysioNet},
  year={2019}
}

@inproceedings{irvin2019,
  title={Chexpert: A large chest radiograph dataset with uncertainty labels and expert comparison},
  author={Irvin, Jeremy and Rajpurkar, Pranav and Ko, Michael and Yu, Yifan and Ciurea-Ilcus, Silviana and Chute, Chris and Marklund, Henrik and Haghgoo, Behzad and Ball, Robyn and Shpanskaya, Katie and others},
  booktitle={Proceedings of the AAAI conference on artificial intelligence},
  volume={33},
  number={01},
  pages={590--597},
  year={2019}
}

@inproceedings{wang2017,
  title={Chestx-ray8: Hospital-scale chest x-ray database and benchmarks on weakly-supervised classification and localization of common thorax diseases},
  author={Wang, Xiaosong and Peng, Yifan and Lu, Le and Lu, Zhiyong and Bagheri, Mohammadhadi and Summers, Ronald M},
  booktitle={Proceedings of the IEEE conference on computer vision and pattern recognition},
  pages={2097--2106},
  year={2017}
}

@inproceedings{radford2021learning,
  title={Learning transferable visual models from natural language supervision},
  author={Radford, Alec and Kim, Jong Wook and Hallacy, Chris and Ramesh, Aditya and Goh, Gabriel and Agarwal, Sandhini and Sastry, Girish and Askell, Amanda and Mishkin, Pamela and Clark, Jack and others},
  booktitle={International conference on machine learning},
  pages={8748--8763},
  year={2021},
  organization={PMLR}
}

@article{pearl1995causal,
  title={Causal diagrams for empirical research},
  author={Pearl, Judea},
  journal={Biometrika},
  volume={82},
  number={4},
  pages={669--688},
  year={1995},
  publisher={Oxford University Press}
}

@article{kaddour2022causal,
  title={Causal machine learning: A survey and open problems},
  author={Kaddour, Jean and Lynch, Aengus and Liu, Qi and Kusner, Matt J and Silva, Ricardo},
  journal={arXiv preprint arXiv:2206.15475},
  year={2022}
}

@inproceedings{balke1994counterfactual,
  title={Counterfactual probabilities: Computational methods, bounds and applications},
  author={Balke, Alexander and Pearl, Judea},
  booktitle={Uncertainty in artificial intelligence},
  pages={46--54},
  year={1994},
  organization={Elsevier}
}

@incollection{bareinboim2022pearl,
  title={On Pearl’s hierarchy and the foundations of causal inference},
  author={Bareinboim, Elias and Correa, Juan D and Ibeling, Duligur and Icard, Thomas},
  booktitle={Probabilistic and causal inference: the works of judea pearl},
  pages={507--556},
  year={2022}
}

@inproceedings{andrews2020completeness,
  title={On the completeness of causal discovery in the presence of latent confounding with tiered background knowledge},
  author={Andrews, Bryan and Spirtes, Peter and Cooper, Gregory F},
  booktitle={International Conference on Artificial Intelligence and Statistics},
  pages={4002--4011},
  year={2020},
  organization={PMLR}
}

@article{scholkopf2021toward,
  title={Toward causal representation learning},
  author={Sch{\"o}lkopf, Bernhard and Locatello, Francesco and Bauer, Stefan and Ke, Nan Rosemary and Kalchbrenner, Nal and Goyal, Anirudh and Bengio, Yoshua},
  journal={Proceedings of the IEEE},
  volume={109},
  number={5},
  pages={612--634},
  year={2021},
  publisher={IEEE}
}

@book{peters2017elements,
  title={Elements of causal inference: foundations and learning algorithms},
  author={Peters, Jonas and Janzing, Dominik and Sch{\"o}lkopf, Bernhard},
  year={2017},
  publisher={The MIT Press}
}

@inproceedings{wang2022out,
  title={Out-of-distribution generalization with causal invariant transformations},
  author={Wang, Ruoyu and Yi, Mingyang and Chen, Zhitang and Zhu, Shengyu},
  booktitle={Proceedings of the IEEE/CVF Conference on Computer Vision and Pattern Recognition},
  pages={375--385},
  year={2022}
}

@inproceedings{zaffalon2020structural,
  title={Structural causal models are (solvable by) credal networks},
  author={Zaffalon, Marco and Antonucci, Alessandro and Caba{\~n}as, Rafael},
  booktitle={International Conference on Probabilistic Graphical Models},
  pages={581--592},
  year={2020},
  organization={PMLR}
}

@inproceedings{andrews2019,
  title={Learning high-dimensional directed acyclic graphs with mixed data-types},
  author={Andrews, Bryan and Ramsey, Joseph and Cooper, Gregory F},
  booktitle={The 2019 ACM SIGKDD Workshop on Causal Discovery},
  pages={4--21},
  year={2019},
  organization={PMLR}
}

@article{srivastava2024vlg,
  title={Vlg-cbm: Training concept bottleneck models with vision-language guidance},
  author={Srivastava, Divyansh and Yan, Ge and Weng, Lily},
  journal={Advances in Neural Information Processing Systems},
  volume={37},
  pages={79057--79094},
  year={2024}
}

@inproceedings{yamaguchi2025explanation,
  title={Explanation Bottleneck Models},
  author={Yamaguchi, Shin'ya and Nishida, Kosuke},
  booktitle={Proceedings of the AAAI Conference on Artificial Intelligence},
  volume={39},
  number={20},
  pages={21886--21894},
  year={2025}
}

@article{colombo2014,
  title={Order-independent constraint-based causal structure learning.},
  author={Colombo, Diego and Maathuis, Marloes H and others},
  journal={J. Mach. Learn. Res.},
  volume={15},
  number={1},
  pages={3741--3782},
  year={2014}
}

@article{ramsey2017,
  title={A million variables and more: the fast greedy equivalence search algorithm for learning high-dimensional graphical causal models, with an application to functional magnetic resonance images},
  author={Ramsey, Joseph and Glymour, Madelyn and Sanchez-Romero, Ruben and Glymour, Clark},
  journal={International journal of data science and analytics},
  volume={3},
  pages={121--129},
  year={2017},
  publisher={Springer}
}

@article{heckerman1995learning,
  title={Learning Bayesian networks: The combination of knowledge and statistical data},
  author={Heckerman, David and Geiger, Dan and Chickering, David M},
  journal={Machine learning},
  volume={20},
  pages={197--243},
  year={1995},
  publisher={Springer}
}

@inproceedings{zaffalon2020,
  title={Structural causal models are (solvable by) credal networks},
  author={Zaffalon, Marco and Antonucci, Alessandro and Caba{\~n}as, Rafael},
  booktitle={International Conference on Probabilistic Graphical Models},
  pages={581--592},
  year={2020},
  organization={PMLR}
}

@article{loula2025syntactic,
  title={Syntactic and semantic control of large language models via sequential monte carlo},
  author={Loula, Jo{\~a}o and LeBrun, Benjamin and Du, Li and Lipkin, Ben and Pasti, Clemente and Grand, Gabriel and Liu, Tianyu and Emara, Yahya and Freedman, Marjorie and Eisner, Jason and others},
  journal={International Conference on Learning Representations},
  year={2025}
}

@inproceedings{steinmann2024learning,
  title={Learning to intervene on concept bottlenecks},
  author={Steinmann, David and Stammer, Wolfgang and Friedrich, Felix and Kersting, Kristian},
  booktitle={Proceedings of the 41st International Conference on Machine Learning},
  pages={46556--46571},
  year={2024}
}

@article{zheng2024,
  title={Causal-learn: Causal discovery in python},
  author={Zheng, Yujia and Huang, Biwei and Chen, Wei and Ramsey, Joseph and Gong, Mingming and Cai, Ruichu and Shimizu, Shohei and Spirtes, Peter and Zhang, Kun},
  journal={Journal of Machine Learning Research},
  volume={25},
  number={60},
  pages={1--8},
  year={2024}
}

@inproceedings{ramsey2023py,
  title={Py-tetrad and rpy-tetrad: A new python interface with r support for tetrad causal search},
  author={Ramsey, Joseph and Andrews, Bryan},
  booktitle={Causal Analysis Workshop Series},
  pages={40--51},
  year={2023},
  organization={PMLR}
}

@article{wang2022self,
  title={Self-consistency improves chain of thought reasoning in language models},
  author={Wang, Xuezhi and Wei, Jason and Schuurmans, Dale and Le, Quoc and Chi, Ed and Narang, Sharan and Chowdhery, Aakanksha and Zhou, Denny},
  journal={International Conference on Learning Representations},
  year={2023}
}

@inproceedings{marcel2010,
  title={Torchvision the machine-vision package of torch},
  author={Marcel, S{\'e}bastien and Rodriguez, Yann},
  booktitle={Proceedings of the 18th ACM international conference on Multimedia},
  pages={1485--1488},
  year={2010}
}

@article{dominici2025causal,
  title={Causal concept graph models: Beyond causal opacity in deep learning},
  author={Dominici, Gabriele and Barbiero, Pietro and Zarlenga, Mateo Espinosa and Termine, Alberto and Gjoreski, Martin and Marra, Giuseppe and Langheinrich, Marc},
  journal={International Conference on Learning Representations},
  year={2025}
}

@article{debot2024interpretable,
  title={Interpretable concept-based memory reasoning},
  author={Debot, David and Barbiero, Pietro and Giannini, Francesco and Ciravegna, Gabriele and Diligenti, Michelangelo and Marra, Giuseppe},
  journal={Advances in Neural Information Processing Systems},
  year={2024}
}

@article{ha2017hypernetworks,
  title={Hypernetworks},
  author={Ha, David and Dai, Andrew and Le, Quoc V},
  journal={International Conference on Learning Representations},
  year={2017}
}
\bibliographystyle{neurips_2025}


\section*{NeurIPS Paper Checklist}

\begin{enumerate}

\item {\bf Claims}
    \item[] Question: Do the main claims made in the abstract and introduction accurately reflect the paper's contributions and scope?
    \item[] Answer: \answerYes{} 
    \item[] Justification: The main claims of our paper have been highlighted in bold in the introduction (Sec.~\ref{introduction}) and summarized in the abstract. Sec.~\ref{sec:experiments} provide experimental evidence that support the claims presented in the abstract and introduction.
    \item[] Guidelines:
    \begin{itemize}
        \item The answer NA means that the abstract and introduction do not include the claims made in the paper.
        \item The abstract and/or introduction should clearly state the claims made, including the contributions made in the paper and important assumptions and limitations. A No or NA answer to this question will not be perceived well by the reviewers. 
        \item The claims made should match theoretical and experimental results, and reflect how much the results can be expected to generalize to other settings. 
        \item It is fine to include aspirational goals as motivation as long as it is clear that these goals are not attained by the paper. 
    \end{itemize}

\item {\bf Limitations}
    \item[] Question: Does the paper discuss the limitations of the work performed by the authors?
    \item[] Answer: \answerYes{} 
    \item[] Justification: The limitations of our work are clearly stated and discussed in the Conclusions (Sec.~\ref{sec:conclusions}).
    \item[] Guidelines:
    \begin{itemize}
        \item The answer NA means that the paper has no limitation while the answer No means that the paper has limitations, but those are not discussed in the paper. 
        \item The authors are encouraged to create a separate "Limitations" section in their paper.
        \item The paper should point out any strong assumptions and how robust the results are to violations of these assumptions (e.g., independence assumptions, noiseless settings, model well-specification, asymptotic approximations only holding locally). The authors should reflect on how these assumptions might be violated in practice and what the implications would be.
        \item The authors should reflect on the scope of the claims made, e.g., if the approach was only tested on a few datasets or with a few runs. In general, empirical results often depend on implicit assumptions, which should be articulated.
        \item The authors should reflect on the factors that influence the performance of the approach. For example, a facial recognition algorithm may perform poorly when image resolution is low or images are taken in low lighting. Or a speech-to-text system might not be used reliably to provide closed captions for online lectures because it fails to handle technical jargon.
        \item The authors should discuss the computational efficiency of the proposed algorithms and how they scale with dataset size.
        \item If applicable, the authors should discuss possible limitations of their approach to address problems of privacy and fairness.
        \item While the authors might fear that complete honesty about limitations might be used by reviewers as grounds for rejection, a worse outcome might be that reviewers discover limitations that aren't acknowledged in the paper. The authors should use their best judgment and recognize that individual actions in favor of transparency play an important role in developing norms that preserve the integrity of the community. Reviewers will be specifically instructed to not penalize honesty concerning limitations.
    \end{itemize}

\item {\bf Theory assumptions and proofs}
    \item[] Question: For each theoretical result, does the paper provide the full set of assumptions and a complete (and correct) proof?
    \item[] Answer: \answerYes{} 
    \item[] Justification: Although the main contribution of our work is not theoretical, we also included a simple proof regarding the expressivity of our model in App.~\ref{app:detailedarchitecture}.
    \item[] Guidelines:
    \begin{itemize}
        \item The answer NA means that the paper does not include theoretical results. 
        \item All the theorems, formulas, and proofs in the paper should be numbered and cross-referenced.
        \item All assumptions should be clearly stated or referenced in the statement of any theorems.
        \item The proofs can either appear in the main paper or the supplemental material, but if they appear in the supplemental material, the authors are encouraged to provide a short proof sketch to provide intuition. 
        \item Inversely, any informal proof provided in the core of the paper should be complemented by formal proofs provided in appendix or supplemental material.
        \item Theorems and Lemmas that the proof relies upon should be properly referenced. 
    \end{itemize}

    \item {\bf Experimental result reproducibility}
    \item[] Question: Does the paper fully disclose all the information needed to reproduce the main experimental results of the paper to the extent that it affects the main claims and/or conclusions of the paper (regardless of whether the code and data are provided or not)?
    \item[] Answer: \answerYes{} 
    \item[] Justification: We provide extensive details for reproducing the experiments in Appendices \ref{app:concept_discovery}, \ref{app:causal_discovery}, \ref{app:llm}, \ref{app:datasets}, and \ref{app:exp-details}. Moreover, we uploaded a .zip file containing our code.
    \item[] Guidelines:
    \begin{itemize}
        \item The answer NA means that the paper does not include experiments.
        \item If the paper includes experiments, a No answer to this question will not be perceived well by the reviewers: Making the paper reproducible is important, regardless of whether the code and data are provided or not.
        \item If the contribution is a dataset and/or model, the authors should describe the steps taken to make their results reproducible or verifiable. 
        \item Depending on the contribution, reproducibility can be accomplished in various ways. For example, if the contribution is a novel architecture, describing the architecture fully might suffice, or if the contribution is a specific model and empirical evaluation, it may be necessary to either make it possible for others to replicate the model with the same dataset, or provide access to the model. In general. releasing code and data is often one good way to accomplish this, but reproducibility can also be provided via detailed instructions for how to replicate the results, access to a hosted model (e.g., in the case of a large language model), releasing of a model checkpoint, or other means that are appropriate to the research performed.
        \item While NeurIPS does not require releasing code, the conference does require all submissions to provide some reasonable avenue for reproducibility, which may depend on the nature of the contribution. For example
        \begin{enumerate}
            \item If the contribution is primarily a new algorithm, the paper should make it clear how to reproduce that algorithm.
            \item If the contribution is primarily a new model architecture, the paper should describe the architecture clearly and fully.
            \item If the contribution is a new model (e.g., a large language model), then there should either be a way to access this model for reproducing the results or a way to reproduce the model (e.g., with an open-source dataset or instructions for how to construct the dataset).
            \item We recognize that reproducibility may be tricky in some cases, in which case authors are welcome to describe the particular way they provide for reproducibility. In the case of closed-source models, it may be that access to the model is limited in some way (e.g., to registered users), but it should be possible for other researchers to have some path to reproducing or verifying the results.
        \end{enumerate}
    \end{itemize}

\item {\bf Open access to data and code}
    \item[] Question: Does the paper provide open access to the data and code, with sufficient instructions to faithfully reproduce the main experimental results, as described in supplemental material?
    \item[] Answer: \answerYes{} 
    \item[] Justification: We use freely available datasets and provide instructions on how to download and preprocess them in App.~\ref{app:datasets}. Moreover, we have uploaded a .zip file containing our code. YAML configurations are available within the code to reproduce experiments.
    \item[] Guidelines:
    \begin{itemize}
        \item The answer NA means that paper does not include experiments requiring code.
        \item Please see the NeurIPS code and data submission guidelines (\url{https://nips.cc/public/guides/CodeSubmissionPolicy}) for more details.
        \item While we encourage the release of code and data, we understand that this might not be possible, so “No” is an acceptable answer. Papers cannot be rejected simply for not including code, unless this is central to the contribution (e.g., for a new open-source benchmark).
        \item The instructions should contain the exact command and environment needed to run to reproduce the results. See the NeurIPS code and data submission guidelines (\url{https://nips.cc/public/guides/CodeSubmissionPolicy}) for more details.
        \item The authors should provide instructions on data access and preparation, including how to access the raw data, preprocessed data, intermediate data, and generated data, etc.
        \item The authors should provide scripts to reproduce all experimental results for the new proposed method and baselines. If only a subset of experiments are reproducible, they should state which ones are omitted from the script and why.
        \item At submission time, to preserve anonymity, the authors should release anonymized versions (if applicable).
        \item Providing as much information as possible in supplemental material (appended to the paper) is recommended, but including URLs to data and code is permitted.
    \end{itemize}

\item {\bf Experimental setting/details}
    \item[] Question: Does the paper specify all the training and test details (e.g., data splits, hyperparameters, how they were chosen, type of optimizer, etc.) necessary to understand the results?
    \item[] Answer: \answerYes{} 
    \item[] Justification: Details on dataset splitting can be found in App.~\ref{app:datasets}, while information on training is provided in App.~\ref{app:exp-details}.
    \item[] Guidelines:
    \begin{itemize}
        \item The answer NA means that the paper does not include experiments.
        \item The experimental setting should be presented in the core of the paper to a level of detail that is necessary to appreciate the results and make sense of them.
        \item The full details can be provided either with the code, in appendix, or as supplemental material.
    \end{itemize}

\item {\bf Experiment statistical significance}
    \item[] Question: Does the paper report error bars suitably and correctly defined or other appropriate information about the statistical significance of the experiments?
    \item[] Answer: \answerYes{}
    \item[] Justification: All relevant experiments are executed with 5 random seeds. The reported results are averaged, and uncertainty is expressed for all experimental results as 2 standard errors of the sample mean. This is stated in the main results' captions.
    \item[] Guidelines:
    \begin{itemize}
        \item The answer NA means that the paper does not include experiments.
        \item The authors should answer "Yes" if the results are accompanied by error bars, confidence intervals, or statistical significance tests, at least for the experiments that support the main claims of the paper.
        \item The factors of variability that the error bars are capturing should be clearly stated (for example, train/test split, initialization, random drawing of some parameter, or overall run with given experimental conditions).
        \item The method for calculating the error bars should be explained (closed form formula, call to a library function, bootstrap, etc.)
        \item The assumptions made should be given (e.g., Normally distributed errors).
        \item It should be clear whether the error bar is the standard deviation or the standard error of the mean.
        \item It is OK to report 1-sigma error bars, but one should state it. The authors should preferably report a 2-sigma error bar than state that they have a 96\% CI, if the hypothesis of Normality of errors is not verified.
        \item For asymmetric distributions, the authors should be careful not to show in tables or figures symmetric error bars that would yield results that are out of range (e.g. negative error rates).
        \item If error bars are reported in tables or plots, The authors should explain in the text how they were calculated and reference the corresponding figures or tables in the text.
    \end{itemize}

\item {\bf Experiments compute resources}
    \item[] Question: For each experiment, does the paper provide sufficient information on the computer resources (type of compute workers, memory, time of execution) needed to reproduce the experiments?
    \item[] Answer: \answerYes{} 
    \item[] Justification: Information about the computational resources is provided in the App.~\ref{app:exp-details}.
    \item[] Guidelines:
    \begin{itemize}
        \item The answer NA means that the paper does not include experiments.
        \item The paper should indicate the type of compute workers CPU or GPU, internal cluster, or cloud provider, including relevant memory and storage.
        \item The paper should provide the amount of compute required for each of the individual experimental runs as well as estimate the total compute. 
        \item The paper should disclose whether the full research project required more compute than the experiments reported in the paper (e.g., preliminary or failed experiments that didn't make it into the paper). 
    \end{itemize}
    
\item {\bf Code of ethics}
    \item[] Question: Does the research conducted in the paper conform, in every respect, with the NeurIPS Code of Ethics \url{https://neurips.cc/public/EthicsGuidelines}?
    \item[] Answer: \answerYes{} 
    \item[] Justification: We have read the NeurIPS Code of Ethics and ensured that our paper conforms to them. Specifically, our experiments do not include human subjects and the content of our paper does not contain personally identifiable information.
    
    \item[] Guidelines:
    \begin{itemize}
        \item The answer NA means that the authors have not reviewed the NeurIPS Code of Ethics.
        \item If the authors answer No, they should explain the special circumstances that require a deviation from the Code of Ethics.
        \item The authors should make sure to preserve anonymity (e.g., if there is a special consideration due to laws or regulations in their jurisdiction).
    \end{itemize}

\item {\bf Broader impacts}
    \item[] Question: Does the paper discuss both potential positive societal impacts and negative societal impacts of the work performed?
    \item[] Answer: \answerYes{} 
    \item[] Justification: The broader impact of our paper is discussed in the Conclusions (Sec.~\ref{sec:conclusions}).
    \item[] Guidelines:
    \begin{itemize}
        \item The answer NA means that there is no societal impact of the work performed.
        \item If the authors answer NA or No, they should explain why their work has no societal impact or why the paper does not address societal impact.
        \item Examples of negative societal impacts include potential malicious or unintended uses (e.g., disinformation, generating fake profiles, surveillance), fairness considerations (e.g., deployment of technologies that could make decisions that unfairly impact specific groups), privacy considerations, and security considerations.
        \item The conference expects that many papers will be foundational research and not tied to particular applications, let alone deployments. However, if there is a direct path to any negative applications, the authors should point it out. For example, it is legitimate to point out that an improvement in the quality of generative models could be used to generate deepfakes for disinformation. On the other hand, it is not needed to point out that a generic algorithm for optimizing neural networks could enable people to train models that generate Deepfakes faster.
        \item The authors should consider possible harms that could arise when the technology is being used as intended and functioning correctly, harms that could arise when the technology is being used as intended but gives incorrect results, and harms following from (intentional or unintentional) misuse of the technology.
        \item If there are negative societal impacts, the authors could also discuss possible mitigation strategies (e.g., gated release of models, providing defenses in addition to attacks, mechanisms for monitoring misuse, mechanisms to monitor how a system learns from feedback over time, improving the efficiency and accessibility of ML).
    \end{itemize}
    
\item {\bf Safeguards}
    \item[] Question: Does the paper describe safeguards that have been put in place for responsible release of data or models that have a high risk for misuse (e.g., pretrained language models, image generators, or scraped datasets)?
    \item[] Answer: \answerNA{}
    \item[] Justification: In our paper, we propose a methodological extension to a class of models that are predominantly used in scientific research. All datasets constitute standard choices in the community.
    \item[] Guidelines:
    \begin{itemize}
        \item The answer NA means that the paper poses no such risks.
        \item Released models that have a high risk for misuse or dual-use should be released with necessary safeguards to allow for controlled use of the model, for example by requiring that users adhere to usage guidelines or restrictions to access the model or implementing safety filters. 
        \item Datasets that have been scraped from the Internet could pose safety risks. The authors should describe how they avoided releasing unsafe images.
        \item We recognize that providing effective safeguards is challenging, and many papers do not require this, but we encourage authors to take this into account and make a best faith effort.
    \end{itemize}

\item {\bf Licenses for existing assets}
    \item[] Question: Are the creators or original owners of assets (e.g., code, data, models), used in the paper, properly credited and are the license and terms of use explicitly mentioned and properly respected?
    \item[] Answer:  \answerYes{} 
    \item[] Justification: We cite the original creators or owners of every asset used in the paper, for example, see the references for the datasets in App.~\ref{app:datasets}. All our employed datasets and baselines are open-source. Datasets licenses are as follows: bnlearn datasets (CC-BY-SA), MNIST (CC BY-SA), CUB (CC0: Public Domain), CelebA (Creative Commons NonCommercial license), Pneumothorax (CC BY 4.0).
    As for the baselines: we implemented CEM and CBM from scratch and provide code alongside the submission. For SCBM we use the implementation available at \url{https://github.com/mvandenhi/SCBM/tree/main}.
    
    \item[] Guidelines:
    \begin{itemize}
        \item The answer NA means that the paper does not use existing assets.
        \item The authors should cite the original paper that produced the code package or dataset.
        \item The authors should state which version of the asset is used and, if possible, include a URL.
        \item The name of the license (e.g., CC-BY 4.0) should be included for each asset.
        \item For scraped data from a particular source (e.g., website), the copyright and terms of service of that source should be provided.
        \item If assets are released, the license, copyright information, and terms of use in the package should be provided. For popular datasets, \url{paperswithcode.com/datasets} has curated licenses for some datasets. Their licensing guide can help determine the license of a dataset.
        \item For existing datasets that are re-packaged, both the original license and the license of the derived asset (if it has changed) should be provided.
        \item If this information is not available online, the authors are encouraged to reach out to the asset's creators.
    \end{itemize}

\item {\bf New assets}
    \item[] Question: Are new assets introduced in the paper well documented and is the documentation provided alongside the assets?
    \item[] Answer: \answerYes{} 
    \item[] Justification: We propose a new predictive algorithm and a pipeline to instantiate it. Detailed descriptions of how they work are provided throughout the main text and in the appendices. Moreover, we have uploaded a .zip file containing our code.
    \item[] Guidelines:
    \begin{itemize}
        \item The answer NA means that the paper does not release new assets.
        \item Researchers should communicate the details of the dataset/code/model as part of their submissions via structured templates. This includes details about training, license, limitations, etc. 
        \item The paper should discuss whether and how consent was obtained from people whose asset is used.
        \item At submission time, remember to anonymize your assets (if applicable). You can either create an anonymized URL or include an anonymized zip file.
    \end{itemize}

\item {\bf Crowdsourcing and research with human subjects}
    \item[] Question: For crowdsourcing experiments and research with human subjects, does the paper include the full text of instructions given to participants and screenshots, if applicable, as well as details about compensation (if any)? 
    \item[] Answer: \answerNA{} 
    \item[] Justification: The paper does not involve crowdsourcing nor research with human subjects.
    \item[] Guidelines:
    \begin{itemize}
        \item The answer NA means that the paper does not involve crowdsourcing nor research with human subjects.
        \item Including this information in the supplemental material is fine, but if the main contribution of the paper involves human subjects, then as much detail as possible should be included in the main paper. 
        \item According to the NeurIPS Code of Ethics, workers involved in data collection, curation, or other labor should be paid at least the minimum wage in the country of the data collector. 
    \end{itemize}

\item {\bf Institutional review board (IRB) approvals or equivalent for research with human subjects}
    \item[] Question: Does the paper describe potential risks incurred by study participants, whether such risks were disclosed to the subjects, and whether Institutional Review Board (IRB) approvals (or an equivalent approval/review based on the requirements of your country or institution) were obtained?
    \item[] Answer: \answerNA{} 
    \item[] Justification: The paper does include experiments involving human subjects.
    \item[] Guidelines:
    \begin{itemize}
        \item The answer NA means that the paper does not involve crowdsourcing nor research with human subjects.
        \item Depending on the country in which research is conducted, IRB approval (or equivalent) may be required for any human subjects research. If you obtained IRB approval, you should clearly state this in the paper. 
        \item We recognize that the procedures for this may vary significantly between institutions and locations, and we expect authors to adhere to the NeurIPS Code of Ethics and the guidelines for their institution. 
        \item For initial submissions, do not include any information that would break anonymity (if applicable), such as the institution conducting the review.
    \end{itemize}

\item {\bf Declaration of LLM usage}
    \item[] Question: Does the paper describe the usage of LLMs if it is an important, original, or non-standard component of the core methods in this research? Note that if the LLM is used only for writing, editing, or formatting purposes and does not impact the core methodology, scientific rigorousness, or originality of the research, declaration is not required.
    \item[] Answer: \answerNA{} 
    \item[] Justification: We used LLMs just for editing purposes and grammar checks. Notably, in our proposed framework, an LLM is employed as a secondary element to assist a causal discovery algorithm for causal discovery refinement.
    \item[] Guidelines: 
    
    \begin{itemize}
        \item The answer NA means that the core method development in this research does not involve LLMs as any important, original, or non-standard components.
        \item Please refer to our LLM policy (\url{https://neurips.cc/Conferences/2025/LLM}) for what should or should not be described.
    \end{itemize}

\end{enumerate}

\newpage
\begin{appendices}
\noindent\rule{\textwidth}{0.5pt}\par
\vspace{-0.2cm}
\listofappendices
\noindent\rule{\textwidth}{0.5pt}

\section{Extended background on causality}\label{app:background}
\subsection{Pearl's framework of causality}\label{app:pearl-ladder}
Contemporary research in causal inference and causal machine learning predominantly builds on the framework introduced by \citet{Pearl2009} (see also \cite{Pearl2019}). This framework centers on an agent’s ability to reason about underlying causal mechanisms, going beyond mere statistical associations observed in data. Pearl formalizes this capacity through the notion of answering different types of \emph{what-if} questions, structured into a three-level hierarchy (see Figure~\ref{fig:Hierarchy}).


\begin{figure}[h!]
    \centering
    \includegraphics[width=0.4\linewidth]{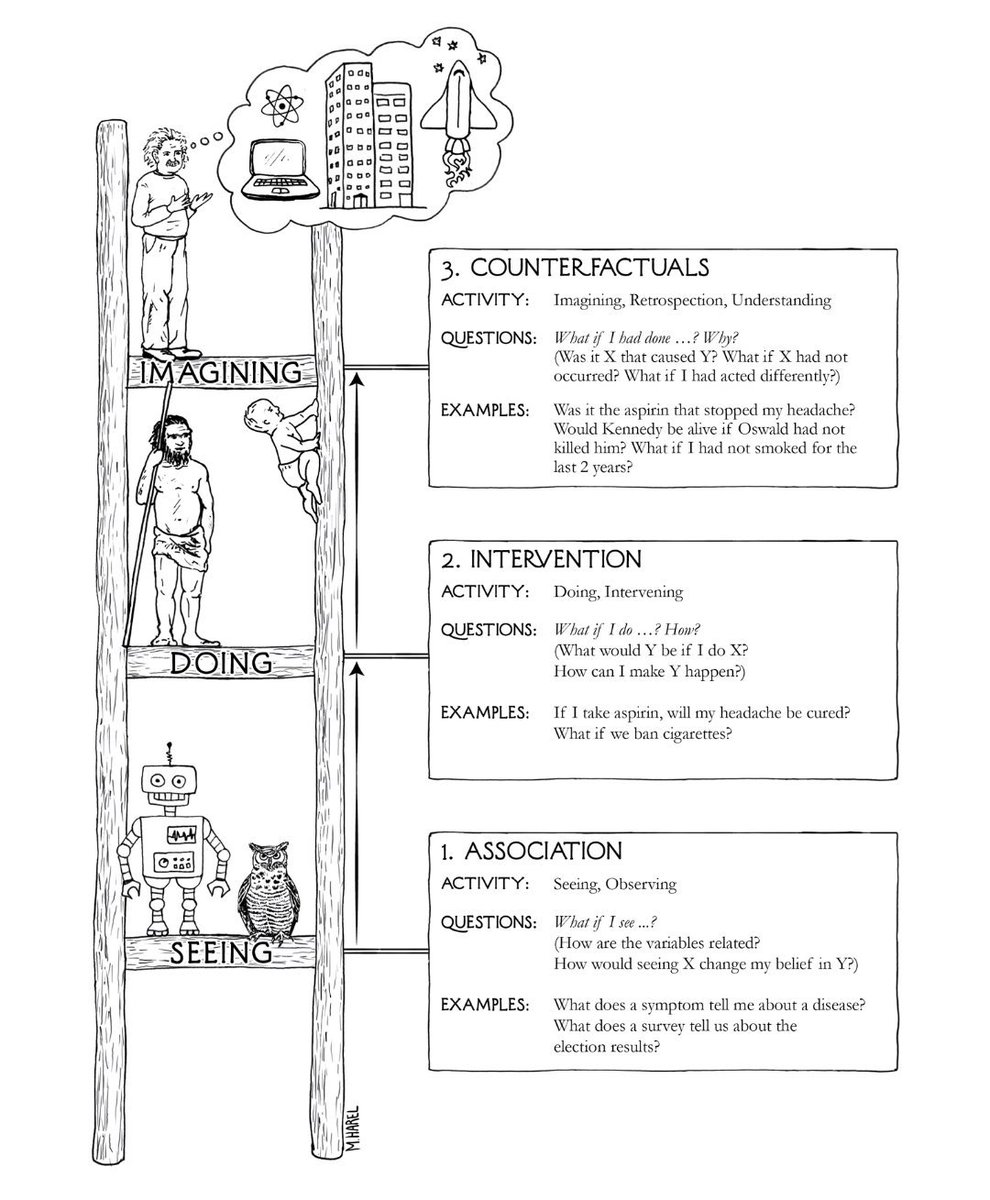}
    \caption{\footnotesize Pearl's Hierarchy as represented in \citet{Pearl2018}.}
    \label{fig:Hierarchy}
\end{figure}

\begin{itemize}
    \item \textbf{Level 1: Observational questions} (``what is the value of $Y$ if I \emph{observe} $X=x$?''). These questions rely purely on statistical associations within the data. They can be answered using standard tools such as conditional probabilities or expectations, with no need for causal assumptions.
    \item \textbf{Level 2: Interventional questions} (``what is the value of $Y$ if I \emph{do} $X=x$?''). These questions focus on the effects of actively intervening on variables, rather than passively observing them. The expression $\text{do}(X = x)$ denotes such an intervention, where $X$ is externally set to $x$, overriding its natural causes. 
    
    To compute such an intervention, one typically relies on a graphical model representing causal dependencies among variables, e.g., a directed acyclic graph (DAG). The intervention is then modeled by removing all incoming edges to $X$, effectively simulating the setting of $X$ independently of its original causes. This modified graph is then used to compute the post-intervention distribution $P(Y \mid \text{do}(X = x))$.
    
    \item \textbf{Level 3: Counterfactual questions} (``What would have been the value of $Y$ if I had observed $X=x'$ instead of $X=x$?''). These questions focus on states of affairs alternative to the actual reality.
    They constitute the upper layer of Pearl's hierarchy and their computation requires a detailed knowledge of the causal mechanisms relating the variables of the target-problem.
\end{itemize}

While standard machine learning models~\footnote{This excludes models specifically designed for causal inference, such as those in the field of \emph{causal machine learning} \citep{kaddour2022causal}.} can generally handle only observational questions, reasoning at interventional and counterfactual levels necessitates dedicated causal models and inference methods~\citep{Pearl2009, peters2017elements}. 

\subsection{Causal opacity}
\begin{problem}[Causal Opacity]\label{def:CausalOpacity}
    Given a DNN model $\mathcal{M}$ and an user $A$, we say that $\mathcal{M}$ is \emph{causally-opaque} with respect to $A$ whenever $A$ is not capable to understand the inner causal structure of $\mathcal{M}$'s decision-making process. The opposite of causal opacity is \emph{causal transparency}.
\end{problem}

Note that \textbf{causal transparency does not presuppose or imply causal reliability}, the two issues, although related, are indeed very different.
Consider for example a concept-based model $\mathcal{M}_1$, where the graph representing how the concepts are connected to the final task is shown in the left panel of the figure below (a). This model is trained to predict whether a given colored image represents an even or odd number (variable $P$, for ``parity'') passing through a bottleneck of two interpretable concepts, namely ``number'' ($N$) and ``color'' ($C$).
The structure of $\mathcal{M}_1$'s decision-making process is causally-transparent, including two edges connecting the final task $P$ with $N$ and $C$ respectively.
However, this causal structure is not consistent with the causal structure of the world (b), for which $C$ and $P$ are clearly independent concepts 
Therefore, $\mathcal{M}_1$ cannot be considered \emph{causally reliable}, although it is \emph{causally transparent}.

\[
\resizebox{.5\textwidth}{!}{\TWOGRAPHS}
\]

Causal opacity partially depends on another opacity issue of central relevance for our analysis, i.e., \emph{semantic opacity}.

\begin{problem}[Semantic Opacity]
    Given a model $\mathcal{M}$ and a user $A$, we say that $\mathcal{M}$ is \emph{semantically opaque} to $A$ if and only if $\mathcal{M}$'s decision-making process is based on features that do not possess any interpretable meaning for $A$.
\end{problem}

Semantic opacity is addressed by concept-based architectures, such as concept-bottleneck models and their extensions, which we discuss in the main paper.

\section{Concept discovery details}\label{app:concept_discovery}
CBMs-like architectures tipically rely on labeled data for each concept, a costly process that can hinder their practical adoption. Label-free Concept Bottleneck Models (Label-free CBMs)~\citep{oikarinen2023} address this issue by automatically generating concepts and assign concept labels using pre-trained models. 

In our paper, we apply an approach similar to the one followed in~\cite{oikarinen2023} to the Siim-Pneumothorax dataset, which lacks concepts and concept annotations. Using GPT-4o, we first generate candidate concepts with a specific prompt (see App.~\ref{app:prompts}), and apply a multi-stage filtering process:
\begin{itemize}
    \item discard too long concepts (>50 character);
    \item filter out concepts too similar to class labels or to each other. Specifically, we use the CXR-CLIP model -- a CLIP-based model pretrained on medical imaging datasets~\citep{johnson2019, irvin2019, wang2017, you2023} -- to encode both concepts and class labels, and discard any concept with cosine similarity > 0.9 to either a class label or another concept;
    \item discard concepts that are not sufficiently present in the training data. To this end, we also compute image embeddings using CXR-CLIP, and remove concepts whose maximum cosine similarity with images in the training set is < 0.2.
\end{itemize}
We then annotate images by computing their similarity to each remaining concept using CXR-CLIP embeddings, and binarize the resulting scores via 2-means clustering.

\section{Causal graph discovery details}\label{app:causal_discovery}

\subsection{Causal discovery algorithms}


Algorithms for addressing the causal discovery problem can be broadly classified into two main categories~\citep{peters2017elements}:\footnote{Here, we restrict our discussion to causal discovery algorithms that assume the set of observed variables sufficiently captures the relevant causal influences. For more complex scenarios, see, e.g., \citep{peters2017elements}.}


\begin{itemize}
\item \textbf{Independence-based methods}. These methods assume a correspondence between conditional independence in the data and graphical separation among variables, leveraging this relationship to infer the underlying graph structure.  Typically, they recover a class of DAGs that are equivalent with respect to conditional independencies, which can be compactly represented by a CPDAG.
\item \textbf{Score-based methods.} These methods define a scoring function over potential graph structures and search for the graph that maximizes it, often using criteria such as the \emph{Bayesian Information Criterion} (BIC) \citep{peters2017elements}. The results from score-based methods are often comparable to those from independence-based approaches, as graphs that violate conditional independencies tend to result in poor model fits.
\end{itemize}

For our experiments, we adopted the \emph{Greedy Equivalence Search} (GES) algorithm~\citep{chickering2002}, a score-based method that performed well in our setting, as shown in Table~\ref{table:cd_ablation} in App.~\ref{app:ablation_cd}. 

GES is a two-phase greedy algorithm that searches over equivalence classes of DAGs (CPDAGs). It begins with the empty graph and iteratively adds edges that yield the greatest improvement in the scoring function. Once a local optimum is reached --- where no addition improves the score --- the algorithm enters a backward phase, greedily removing edges that most increase the score. The core idea is to navigate the space of CPDAGs through local transformations (edge additions and deletions), using the score as a guide to optimize structure learning.

In our implementation, we use the GES algorithm provided by the \texttt{causal-learn} Python library~\citep{zheng2024}, employing the \emph{Bayesian Dirichlet equivalent uniform} (BDeu) scoring criterion for discrete variables~\citep{heckerman1995learning, chickering2002}.


\subsection{LLM and RAG}\label{app:llm}

LLMs~\citep{brown2020language, jiang2024mixtral} are capable of answering complex queries without additional training. However, they can be unreliable and prone to hallucinations~\citep{ji2023survey, zhang2023siren}. Retrieval Augmented Generation (RAG)~\citep{lewis2020retrieval} addresses this by retrieving relevant textual information and appending it to the query, thereby improving the accuracy and reliability of LLM outputs. In the context of structural learning, LLMs have been utilized to construct causal graphs by employing ad-hoc prompts specifically designed to condition the model for answering causal queries~\citep{antonucci2023zero, long2023can, zhang2024causal}. To enhance the causal graph discovery process, we utilize an LLM integrated with a RAG to either direct or eliminate edges that remain undirected by the causal discovery algorithm. The information retrieval process is outlined as follows:
\begin{enumerate}
    \item \textit{Document Retrieval}: Given a causal query (e.g., ``Is lung cancer influenced by smoking?''), we first retrieve a set of documents from the web. In our experiments, we employed both the DuckDuckGo search engine for web pages and Arxiv for relevant paper abstracts. From each source, we retrieve the top 10 documents. For certain datasets (cMNIST, CelebA, and Sachs), local documents were used due to the unavailability or inaccessibility of information online (e.g., the Sachs paper). Each retrieved document is then segmented into smaller pieces, referred to as chunks, using a sliding window approach that samples a 512-token chunk every 128 tokens.

    \item \textit{Ranking}: The causal query is transformed by the LLM using a \textit{query transformation} approach~\citep{gao2023retrieval}, which improves the semantic alignment of the query with the relevant document chunks. After transformation, all the retrieved chunks and the modified causal query are processed by a sentence transformer~\citep{reimers2019sentence}, specifically the \texttt{multi-qa-mpnet-base-dot-v1} model. At this stage, cosine similarity is computed between the embedded transformed causal query and each embedded chunk.

    \item \textit{Context}: The LLM is then tasked with answering the causal query using the additional context retrieved in the previous steps. This context is derived from the top 5 chunks with the highest cosine similarity to the transformed causal query, providing the LLM with relevant and supportive information for generating a more accurate answer.
\end{enumerate}

All the prompts mentioned in this section can be found in App.~\ref{app:prompts}.

\subsection{Prompts}
\label{app:prompts}

In this subsection we list all the prompts used in both the causal discovery part and label free concept generation.

\textsc{DuckDuckGO search prompt}
\begin{prompt}
    Your task is to create the most effective search query to find information that answers 
    the user's question. 
    Your query will be used to search the web using a web engine (e.g. google, duckduckgo).
    NOTE: be short and concise.
    
    This is the question: {question}. 
    Provide the final query without brackets.
\end{prompt}

\textsc{Arxiv search prompt}
\begin{prompt}
    Your task is to create the most effective search query to find information that answers 
    the user's question. 
    Your query will be used to search scientific articles from the web.
    From the given query, produce a query that will help to find the most relevant articles.
    NOTE: be short and concise.
    
    This is the question: {question}. 
    Provide the final query without brackets.
\end{prompt}

\textsc{Transformation query prompt}
\begin{prompt}
    Rephrase the query to align semantically with similar target texts while maintaining 
    its core meaning.
    Output the expanded query enclosed within 
    <expanded_query> tags (e.g. <expanded_query>[example_query]</expanded_query>).
    NOTE: be very short and concise.
    
    Query: {query}
    Expanded Query: 
\end{prompt}

\textsc{Causal Prompt}
\begin{prompt}
    You are an expert in causal inference and logical analysis. 
    I will provide you with two concepts and you have to infer the causal relationship between them.
    **Concept 1:** {concept_1} - {concept_1_description}
    **Concept 2:** {concept_2} - {concept_2_description}

    Now, use your knowledge and, if available, the context provided, to determine 
    which of the following options is the correct one:
    (A) changing {concept_1} to certain values result in a change in {concept_2};
    (B) changing {concept_2} to certain values result in a change in {concept_1};
    (C) there is no causal relationship or reciprocal influence between {concept_1} and {concept_2}.

    The following information are extracted from recent and reliable sources:
    {context}

    The answer has to be enclosed within <answer> tags (e.g. <answer>A</answer>).
    Analyze the situation step-by-step to ensure the final conclusion is accurate.
\end{prompt}

\textsc{Concepts generation prompt}
\begin{prompt}
    You are an expert of {context}. 
    You need to list the most important features to recognize {class_label} from {input}. 
    List also the variables that are most likely to be associated with {class_label} as well as
    the variables that are most likely to be associated with the absence of {class_label}. 
    You need also to give a list of superclasses for the word {class_label}.
    Combine all the lists in a single one and separate the single terms with a comma. 
    If a term is composed by more than one word, use an underscore to separate the words.
\end{prompt}

\section{C$^2$BMs detailed architecture}\label{app:detailedarchitecture}
In this appendix, we provide a detailed description of the proposed C$^2$BM model training and functioning, using the \textit{Asia} dataset as an illustrative example and \emph{Dyspnea} as  the task (Fig.~\ref{fig:asia-pipeline}). We assume the following information is available:
\begin{itemize}
\item A set of human-understandable variables  relevant to determining the task's value. Specifically, the binary concepts: $\{$\emph{Smoker, Bronchitis, Lung cancer, Either, Tubercolisis, Been in Asia, Xray anomalies, and Dyspnea}$\}$.
\item A training dataset $D = \{\textbf{x}_i, \textbf{v}_i\}_{i=1}^n$, where each sample is annotated with the values of all endogenous variables, i.e., the target variable \emph{Dyspnea} and all preceding binary concepts.
\item A DAG $\mathbf{G}$ outlining the causal relationships between the concepts and the task.
\end{itemize}
\begin{figure*}[h]
    \centering    \includegraphics[width=\linewidth]{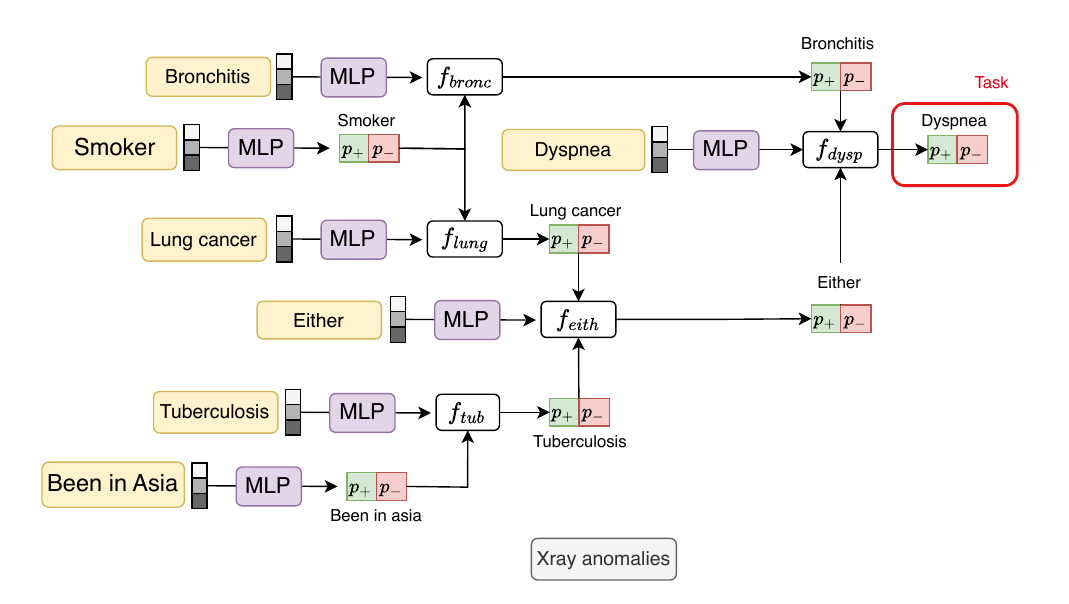}
    \caption{Detailed C$^2$BM architecture applied to the Asia dataset.}\label{fig:asia-pipeline}
\end{figure*}
These elements can be either provided by human experts or generated using the pipeline we propose in the paper, as described in Fig. \ref{fig:pipeline}. The proposed approach follows four main steps (Fig.~\ref{fig:asia-pipeline}), as detailed below:

\begin{enumerate}
\item \textbf{Sub-causal Graph Selection.} We extract from the DAG $\mathbf{G}$ only the variables that are ancestors of the task, that is, variables for which there exists a causal path in $\mathbf{G}$ connecting the variable node to the task. In the case of the Asia dataset, the variable \emph{X-ray anomalies} is discarded because it is not an ancestor of the task. 

\item \textbf{Exogenous embeddings.} Each variable (including the task) is assumed to have an associated latent factor, which is represented as an embedding (the grey encoder symbols in Fig.~\ref{fig:asia-pipeline}) learned from the input using a dedicated neural encoder. In our implementation, these are implemented as MLPs, preceded by a dataset-specific feature extractor, e.g., a CNN for image data (as detailed in the App.~\ref{app:datasets}).

\item \textbf{Structural Equation Modeling.} 
We model the structural relationships between parent nodes and their child nodes using linear equations. The weights of these equations are predicted by separate hypernetworks, implemented as MLPs, which take as input the embedding of the child node produced in the previous step. Applying these functions we can derive the normalized logits of a child node from the ones of its parents. For the nodes that lie in the roots of the causal graph (\textit{Been in Asia}, \textit{Smoker}), their logits are obtained directly from the corresponding exogenous embeddings.

For instance, we can calculate normalized logits for \emph{Dyspnea} as follows:
\begin{equation}
\mathbf{p}_{Dyspnea} = \sigma(\boldsymbol{\theta}_1 \mathbf{p}_{Bronchitis} + \boldsymbol{\theta}_2 \mathbf{p}_{Either})
\end{equation}
where $\mathbf{p}_{Bronchitis}$ and $\mathbf{p}_{Either}$ are the normalized logits for the parents, $\boldsymbol{\theta}_{f_{Dyspnea}} = [\boldsymbol{\theta}_1,\boldsymbol{\theta}_2 ]$ their corresponding weights and $\sigma$ denotes a transformation function (such as a softmax) applied to the weighted sum of the parent node logits, ensuring the final output is in a suitable range.

While the structural equations are linear, the fact that the weights can be adaptively inferred from exogenous variables allows us to capture complex dependencies between variables. This idea is analogous to locally approximating complex (smooth) functions. For example, consider an exponential relationship between two endogenous variables, $V_2 = e^{V_1}$ (Fig.~\ref{fig:nonlinearity}). This function can be locally approximated by a linear form $V_2 = \theta_1 V_1$, where the weight $\theta_1$ is adjusted based on the value of $V_1$.
\begin{figure*}[h]
    \centering
\includegraphics[width=0.8\linewidth]{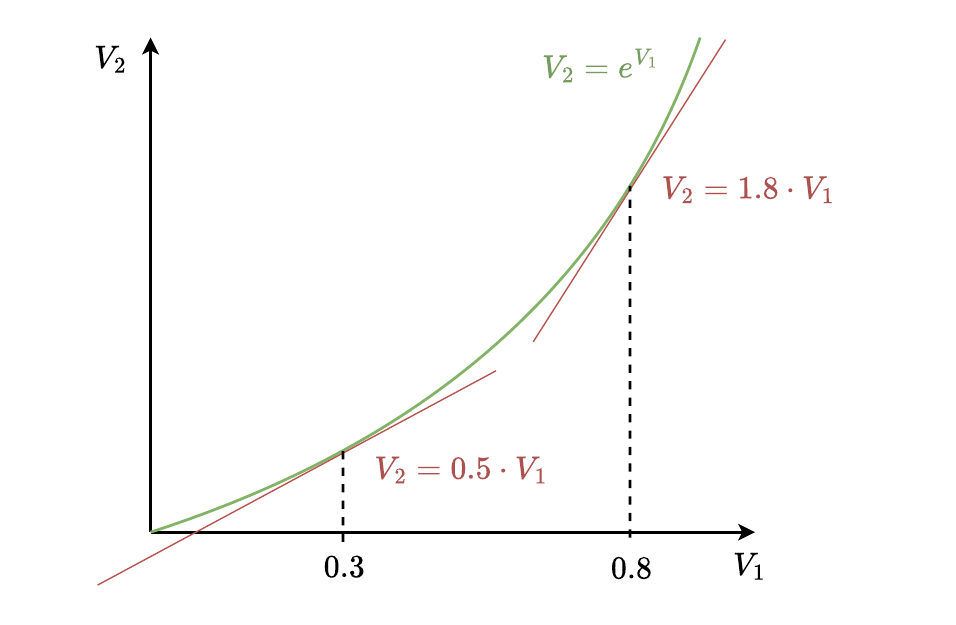}
    \caption{Non-linear functions can be modeled by adaptive re-parametrizable linear models.}\label{fig:nonlinearity}
\end{figure*}
\end{enumerate}

It is important to notice that such a model supports queries about any specific endogenous variable. Specifically, 
after training, C$^2$BM can be used to predict a target variable (task), while using the other variables as concepts to explain the reasoning process. This provides greater flexibility compared to other concept-based architectures, which instead assume a fixed task. Notably, only the task's ancestors are relevant in this case and, in our implementation, all other concepts are discarded. 

\subsection{C$^2$BM as an universal approximator}\label{app:universal-approx}
We establish the following result regarding the expressivity of C$^2$BM.
\begin{theorem}
    C$^2$BM is a universal approximator regardless of the underlying causal graph.
\end{theorem}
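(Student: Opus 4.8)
The plan is to reduce universal approximation of the whole $C^2$BM to the classical universal approximation theorem for MLPs (Hornik, Cybenko), exploiting the fact that every endogenous node possesses a \emph{private} exogenous channel $U_i = \mathbf{g}(X)_i$ which, through the composition $\mathbf{r}_i \circ \mathbf{g}_i$ of the encoder and the per-node hypernetwork, can realize an arbitrary continuous function of the raw input $X$. Concretely, I would fix a designated output node $T$ (e.g., the task), a target continuous map $h:\mathcal{X}\to\mathbb{R}$ on a compact domain $\mathcal{X}$, and a tolerance $\varepsilon>0$, and then construct parameter settings for $\mathbf{g}$ and the $\{\mathbf{r}_i\}$ that drive the output at $T$ to within $\varepsilon$ of $h$, for \emph{any} admissible DAG $\mathbf{G}$.

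First I would dispose of the base case in which $T$ is a root of the ancestor subgraph of $\mathbf{G}$: by construction a root is predicted directly by an MLP applied to $U_T=\mathbf{g}(X)_T$ (cf.\ the root-prediction footnote), and the composition of two universal approximators is again a universal approximator, so $h$ is approximable. For the general case I would exhibit a directed path $R=W_0\to W_1\to\cdots\to W_m=T$ from some root $R$ to $T$; such a path exists in \emph{every} finite DAG containing $T$, and this is exactly what makes the statement hold ``regardless of the underlying causal graph''. Along the path I would use the freedom in the hypernetworks to turn the intermediate nodes into a \emph{carrier} transmitting the constant $1$: set $W_0\approx 1$ via the root MLP, and at each $W_\ell$ ($1\le\ell\le m-1$) assign structural weight $1$ to the parent $W_{\ell-1}$ and $0$ to all others, so $W_\ell\approx W_{\ell-1}\approx 1$; finally at $T$ set the weight on $W_{m-1}$ to $\theta(U_T)\approx h(X)$ and all other parent weights to $0$, giving $T=\theta(U_T)\,W_{m-1}\approx h(X)\cdot 1$. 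Each required weight is a continuous function of $X$ factored as $\mathbf{r}_i\circ\mathbf{g}_i$, hence realizable by universal approximation, while the intermediate nodes are unconstrained by the claim and are free to serve as carriers.

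The last step is quantitative error control. I would work on the compact set on which all intermediate activations live (guaranteed by continuity and compactness of $\mathcal{X}$, after composing with a fixed squashing nonlinearity $\sigma$ where present) and bound the error propagated along the path. Writing each carrier value as $1+\delta_\ell$ and each realized weight as its target plus an $O(\eta)$ deviation, the output at $T$ is a finite product-and-sum of $m+1$ near-exact factors; since $m$ is fixed and $h$ is bounded on $\mathcal{X}$, the total deviation is $O(m(\eta+\delta))$ and can be pushed below $\varepsilon$ by making the encoder/hypernetwork approximations fine enough. When the structural equations act in logit space followed by a fixed continuous $\sigma$ (as in the \emph{Asia} example of Eq.~\ref{eq:model_streq}), it suffices to approximate $h$ in pre-activation space, since $\sigma$ is a fixed continuous map.

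The main obstacle I anticipate is precisely this error-propagation and compactness bookkeeping: because each structural equation multiplies a hypernetwork-predicted weight by a parent value that is itself only approximately the carrier constant, one must ensure the carrier stays bounded away from degeneracy and that all activations remain in a common compact set, so that the MLP universal approximation theorem can be invoked uniformly at every node. Once the path construction isolates a single nonzero carrier per node, however, this collapses to a routine finite-depth propagation estimate with no dependence on the particular structure of $\mathbf{G}$, which is what ultimately yields the ``regardless of the causal graph'' conclusion.
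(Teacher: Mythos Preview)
Your proposal is correct and follows essentially the same strategy as the paper's proof: reduce each node to a single active parent by zeroing out the other hypernetwork weights, exploit that $\mathbf{r}_i\circ\mathbf{g}_i$ is a composition of universal approximators, and propagate this along the DAG (the paper phrases it as four cases handled by induction on the topological order; you phrase it as a carrier path from a root to $T$). Your version is more careful---the explicit carrier-of-$1$ construction and the quantitative error propagation make the argument rigorous---whereas the paper simply asserts that ``multiplying by $V_j$, which itself is produced by a universal approximator, preserves the ability to approximate any function'' and leaves the bookkeeping implicit.
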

\begin{proof}
    We show that C$^2$BM can predict any endogenous variable as any DNN. Assume the exogenous encoder $\mathbf{g}(\cdot)$ is a DNN (or any universal approximator) and that endogenous variables are represented as logits (the extension to fuzzy or Boolean values is trivial). We consider four cases:
    \begin{enumerate}
    \item \textbf{No endogenous parent:} If $V_i$ is a root, $V_i=\text{MLP}_i(\mathbf{g}(X)_i)$. Since both $\mathbf{g}(\cdot)$ and $\text{MLP}_i$ are universal approximators, their composition is also a universal approximator.
    \item \textbf{Single endogenous root parent:} If $V_i$ has exactly one root parent $V_j$, then $V_i = [\boldsymbol\theta_{f_i}]_j V_j=[\mathbf{r}_i(\mathbf{g}(X)_i)]_j \cdot V_j$ and since both $\mathbf{r}_i(\cdot)$ and $\mathbf{g}(\cdot)$ are universal approximators, their composition is also a universal approximator. Multiplying this by $V_j$, which itself is produced by a universal approximator, preserves the ability to approximate any function of the input.

    \item \textbf{Multiple endogenous root parents:} If $V_i$ has more than one parent, $\boldsymbol\theta_{f_i}$ can assign zero weights to all but one parent, reducing the case to a single-parent scenario.
    \item \textbf{Non-root endogenous parents.}
   The reasoning above can be applied recursively following the topological ordering of the causal graph. Each variable is computed as a function of its parents, and universal approximation is preserved layer by layer.

\end{enumerate}
Hence, by recursively composing universal approximators along the causal graph, C$^2$BM can approximate any mapping from the input to endogenous variables. This holds for any graph structure, establishing C$^2$BM as a universal approximator. 

\end{proof}

\subsection{C$^2$BM interpretability}\label{app:interpretability} 

In this section, we present an explanation generated by C$^2$BM on the \textit{Asia} dataset. As shown in Tab.~\ref{table:hamming}, the causal graph retrieved by the causal discovery mechanism is almost equal to the real one, despite a missing edge between \textit{Been in Asia} and \textit{Tuberculosis}. Starting from the source endogenous variables, it is possible to see the weight associated to each descending endogenous variable and the corresponding activation probability. For instance, \textit{Lung cancer} is `True' because the corresponding probability is peaked toward it ($P(1)=0.99$). In particular, the decision-making process for the classification of \textit{Dyspnea} as `False' is completely unveiled. Although the parameter on the edge from \textit{Bronchitis} to \textit{Dyspnea} promotes a positive prediction, the stronger, negatively weighted connection from \textit{Either} to \textit{Dyspnea} dominates. Resulting in \textit{Dyspnea} being predicted as `False'. It is worth noting that the endogenous variable \textit{X-ray anomalies} is not considered by C$^2$BM's inference since it is not an ancestor of the defined task.

\begin{figure*}[h]
    \centering
    \includegraphics[width=\linewidth*2/3]{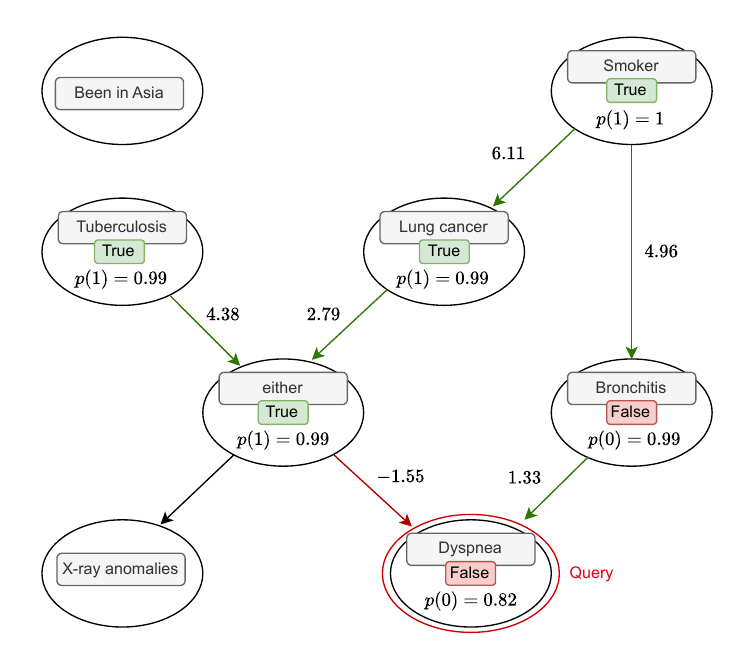}
\caption{Visualization example of 
 information propagation (Asia dataset). The parent weight (predicted by the hypernetwork) is visualized next to each edge.}\label{fig:interpretability}
\end{figure*}

\section{Dataset details}\label{app:datasets}
\subsection{cMNIST}
The MNIST dataset \citep{lecun2010} is a large collection of freely available grayscale images of handwritten digits. It consists of $60000$ training images and $10000$ test images, both drown from the same distribution. Each image is labeled with the digit it represents. For our experiments, we download the original train and test dataset using the \textit{torchvision} library~\citep{marcel2010} and reserve $10\%$ of the training set for validation. Additionally, we colorize each image based on its digit. Specifically, in the in-distribution setting (experiment in Sec.\ref{sec:exp-accuracy-and-rel}), color is randomly assigned to each image in the training, validation, and test sets with equal probability over red, green, and blue. In the out-of-distribution setting (experiment in Sec.\ref{sec:exp-ood}), images in the training and validation sets with odd digits are colored green, while the remaining images are colored red or blue with equal probability. In the test set, images with even digits are colored green, and the remaining ones are colored red or blue with equal probability. For both the versions of this dataset, the following concepts are considered: \emph{Number}, \emph{Color}, and \emph{Parity} (task). Finally, the images are preprocessed using a pre-trained ResNet-18 model with default weights from the torchvision library.

\subsection{Bayesian networks}
For our experiments, we use synthetic datasets sampled from discrete Bayesian networks available in the \texttt{bnlearn} repository~(\url{https://www.bnlearn.com/bnrepository/}). A \emph{Bayesian network} is a probabilistic graphical model consisting of a DAG, where nodes correspond to random variables, and each node is associated with a conditional probability distribution (CPD). This CPD defines the probability of the node's value, given the values of its parent nodes in the network \citep{sharma2020}.
From the \texttt{bnlearn} repository, we select Bayesian networks with different dimensions and domains: \textbf{Asia}~\citep{asia}, a small network focused on lung disease with $8$ nodes and $8$ edges; \textbf{Sachs}~\citep{sachs2005causal}, a widely-used network modeling the relationships between protein and phospholipid expression levels in human cells with $11$ nodes and $17$ edges; \textbf{Insurance}~\citep{binder1997},  a network for evaluating car insurance risks with $27$ nodes and $52$ edges; \textbf{Alarm}~\citep{alarm}, a network designed to provide an alarm message system for patient monitoring with $37$ nodes and $46$ edges; \textbf{Hailfinder}~\citep{abramson1996}, a network designed to forecast severe summer hail in northeastern Colorado with $56$ nodes and $66$ edges. For each network, we generate $10000$ samples and create training, validation, and test datasets using a $70\%-10\%-20\%$ split. 

While node values can be used as concept annotations ($\mathbf{v}$), input features ($\mathbf{x}$) are absent. To generate them and make the datasets applicable to concept-based architectures, we flatten the concept values and process them with a simple autoencoder (MSE loss) comprising 2 encoder layers and 2 decoder layers (the latent dimension is adjusted based on the number of nodes: \textit{Asia}-32, \textit{Sachs}-32, \textit{Insurance}-32, \textit{Alarm}-64, \textit{Hailfinder}-128). Embeddings are further transformed so that each sample is a mixture composed of $50\%$ original data and $50\%$ noise, with the noise drawn from a standard normal distribution. Finally, the output is standardized. The goal of these transformations is to make the inputs non-trivial representations of the concepts (the network nodes), forcing architectures to learn how to identify and retrieve the underlying concepts from the preprocessed data.

Modified versions of the \textit{Asia} and \textit{Alarm} datasets are considered, denoted as Asia$^*$ and Alarm$^*$, in which only a subset of the original concepts is retained (experiment in Sec.\ref{sec:exp-accuracy-and-rel}). Specifically, for \textit{Asia}$^*$, we keep only the concepts "Smoke" and "Dyspnea". For \textit{Alarm}$^*$, we retain only the concepts: "BP", "CO", "CATECHOL", "HR", "LVFAILURE", "STROKEVOLUME", "HYPOVOLEMIA".

\subsection{CelebA}
CelebA~\citep{liu2015} is a large-scale face attributes dataset with more than $200.000$ celebrity images divided into training, validation and test set with $40$ binary attribute annotations.  
For our experiments, we first downloaded all the splits from the project website \url{https://mmlab.ie.cuhk.edu.hk/projects/CelebA.html}. 
\begin{wrapfigure}{r}{0.50\textwidth}
    \centering
    \includegraphics[width=\linewidth]{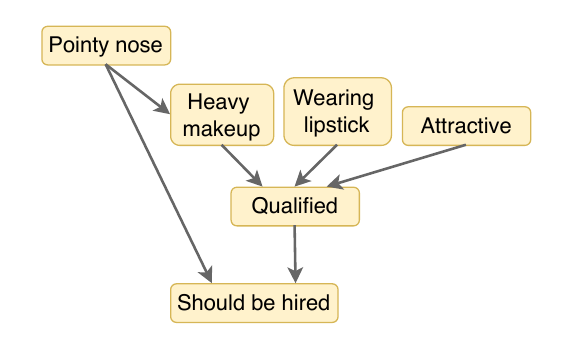}
    \caption{Subset of introduced causal relationships among original and newly created concepts in CelebA, used for our fairness analysis.}\label{fig:celeba_custom}
    \label{fig:celeba_custom}
\end{wrapfigure}
We then select a subset of attributes that we consider relevant for our analysis and apply this selection to all the splits. These attributes are: \emph{Attractive},
\emph{Big Lips}, \emph{Heavy Makeup}, \emph{High Cheekbones}, \emph{Male},
\emph{Mouth Slightly Open},
\emph{Oval Face}, 
\emph{Smiling}, \emph{Wavy Hair}, \emph{Wearing Lipstick} (experiment in Sec.\ref{sec:exp-accuracy-and-rel}). For our fairness analysis (experiment in Sec.\ref{sec:exp-fairness}), we selected the attributes: \emph{Attractive}, \emph{Heavy Makeup}, \emph{High Cheekbones}, \emph{Male},
\emph{Mouth Slightly Open},
\emph{Oval Face}, 
\emph{Pointy Nose}, \emph{Smiling}, \emph{Wavy Hair}, \emph{Wearing Lipstick} and \emph{Young}. 
We then introduced two additional new attributes \emph{Qualified} and \emph{Should be Hired}.
In this analysis, we consider a hypothetical scenario in which a person with a pointy nose is required for a specific role, e.g., in a movie. However, the hiring manager has a strong bias toward models who are considered attractive or heavily made up. Our aim is to intervene and mitigate such biases. The \emph{Qualified} attribute is defined as a binary variable indicating whether a person meets the qualifications for the job based on the hiring manager's biased criteria. It is therefore constructed using the logical expression: (\emph{Heavy Makeup} and \emph{Wearing Lipstick}) or \emph{Attractive}. The \emph{Should be Hired} attribute, on the other hand, indicates whether a person should be hired for the job, considering both the hiring manager’s preferences and the role's requirements (having a pointy nose). Therefore, it is defined as the logical "and" between \emph{Qualified} and \emph{Pointy Nose}.
Additionally, we applied standard preprocessing to both versions of the dataset, including downsampling and normalization of the images, followed by feature extraction using a pre-trained ResNet-18 model.

\subsection{Siim-pneumothorax}
This dataset is derived from the publicly available chest radiograph dataset provided by the National Institutes of Health (NIH) and it contains chest x-ray images with binary annotations indicating the presence or the absence of Pneumothorax. For our experiments, we use in particular the training annotations available on Kaggle (\url{https://www.kaggle.com/competitions/siim-acr-pneumothorax-segmentation}) and the corresponding training images from \url{https://www.kaggle.com/datasets/abhishek/siim-png-images}.  The dataset is then split into training, validation, and test sets using a traditional $70\%-10\%-20\%$ partition. As it does not include concept annotations, we followed a procedure inspired by the methodology in \citet{oikarinen2023} to generate concepts and their corresponding annotations. More details can be found in App.~\ref{app:concept_discovery}. 


\subsection{CUB$_C$}
This dataset is derived from the publicly available Caltech-UCSD Birds-200-2011 (CUB)~\citep{he2019fine} dataset, which is widely used in the CBM community \citep{koh2020, zarlenga2022concept}. It contains 11,788 images across 200 bird categories, with 5,994 images for training and 5,794 images for testing. Each image is annotated in detail, including 312 binary attributes. For our experiments, we downloaded the dataset from \url{https://data.caltech.edu/records/65de6-vp158} and further split the training set such that $10\%$ is used for validation. We then selected the 112 most frequently activated binary attributes to serve as our concepts of interest, as considered in \cite{zarlenga2022concept}.

To explore deeper causal relationships between concepts, we introduce four new ones: \emph{camouflage}, \emph{flight adaptation}, and \emph{hunting ability}, which are derived from existing attributes using logical rules, as well as the multi-valued concept \emph{survival}, whose activation is in turn derived from these three binary concepts. The rules used for these derivations are detailed in the table below (Table \ref{table:cub_concepts}). We use \emph{survival} as our downstream task. The images are instead further downsampled, normalized, and processed using a ResNet-50 architecture, following a procedure similar to that used for the CelebA dataset.

\begin{table}[h!]
    \centering
    \caption{Newly introduced CUB concepts and the rules used to determine their values.}
    \label{table:cub_concepts}
    \vskip 0.15in
    \begin{tabular}{l | p{6cm}}
        \toprule
        New Concept & Logical rules \\
        \midrule
        camouflage &  has\_tail\_pattern\_spotted $\lor$
        \newline
        has\_tail\_pattern\_striped $\lor$
        \newline
         has\_tail\_pattern\_multi-colored $\lor$ \newline
        has\_back\_pattern\_spotted $\lor$
        \newline
        has\_back\_pattern\_striped $\lor$
        \newline
        has\_back\_pattern\_multi-colored\\
        \midrule
        flight\_adaptation &  has\_tail\_shape\_rounded\_tail $\lor$ \newline has\_wing\_shape\_rounded-wings 
        $\lor$ \newline has\_size\_medium\\
        \midrule
        hunting\_ability & has\_bill\_shape\_curved
        $\lor$ \newline
        has\_bill\_shape\_needle
        $\lor$ \newline
    has\_bill\_shape\_spatulate
        $\lor$ \newline
        has\_bill\_shape\_all-purpose
        $\lor$ \newline
has\_bill\_shape\_longer\_than\_head
        $\lor$ \newline has\_bill\_shape\_shorter\_than\_head \\
        \midrule
        survival & max(camouflage + flight\_adaptation + hunting\_ability, 2)
        \\
        \bottomrule
    \end{tabular}
    \vspace{-0.3cm}
\end{table}    

\section{Experimental details}\label{app:exp-details}
\textbf{Python code and instructions for reproducing results across all datasets and methods are available within the code provided alongside the submission as supplementary material.} We detail below the configurations and hyperparameters used for models' instantiation and training. For a fair comparison across concept-based models, we standardize the concept encoder to a single-hidden-layer MLP. All models are trained using the Adam optimizer~\citep{kingma2014adam} for a maximum of 500 epochs, with early stopping based on a 30-epoch patience. We employ \textit{LeakyReLU} as the activation function throughout.

The batch size is set to 512 for most datasets, with the exception of \textit{Siim-pneumothorax} and SCBM, where it is reduced to 128 due to memory constraints. Following~\cite{koh2020}, we regularize the task loss to encourage concept learning, using a weighted sum of task and concept losses:
$$
L = (1 - \alpha) \cdot L_{\text{task}} + \alpha \cdot L_{\text{concepts}}, \quad \text{with } \alpha = 0.8
$$
where $L_{\text{task}}$ is a cross-entropy over the task whereas $L_{\text{concepts}}$ is the summation of cross-entropy losses over the concepts.

Additionally, we apply random training-time interventions as proposed by~\cite{zarlenga2022concept}, with an intervention probability of 0.25. Regarding SCBM, we used the authors' implementation at \url{https://github.com/mvandenhi/SCBM}. More precisely, we implemented the \textit{global} variation using the configuration proposed by the authors.

Key hyperparameters, including learning rate, MLP hidden size, and dropout rate, are selected via grid search. A complete list of hyperparameters for C$^2$BM and all baseline models can be found in the provided YAML configuration files within the code.

All experiments are conducted on NVIDIA GeForce RTX 3080 and NVIDIA RTX A5000 GPUs.

\subsection{Metric: custom Structural Hamming Distance}\label{app:hamming}
To assess the quality of the causal graphs generated by our pipeline and baseline discovery models, we employ two metrics: a variant of the structural Hamming distance (SHD)\footnote{Although we refer to this as a "distance," it is technically an asymmetric scoring function.} that operates on CPDAGs, and the number of incorrect edges identified. The number of incorrect edges serves as a standard measure of the quality of the graph, while the SHD allows us to customize weights for different types of errors. Tab.~\ref{tab:shd_penalties} provides a schematic of our SHD scores: 
\begin{table}[h!]
    \centering
    \begin{tabular}{ccc}
        \toprule
        \textbf{Ground Truth} & \textbf{Predicted} & \textbf{SHD Penalty} \\
        \midrule
        /         & $i \rightarrow j$ & 1 \\
        /         & $i - j$  & $1/2$ \\
        $i \rightarrow j$ & $i \leftarrow j$ & $1/3$ \\
        $i \rightarrow j$ & /         & $1/4$ \\
        $i - j$  & /         & $1/4$ \\
        $i - j$  & $i \rightarrow j$ & $1/4$ \\
        $i \rightarrow j$ & $i - j$  & $1/5$ \\
        \bottomrule
    \end{tabular}
    \caption{SHD penalties for various discrepancies between ground truth and predicted edges. '$i \rightarrow j$': oriented edge, '$i - j$': unoriented edge, '$/$': no edge}
    \label{tab:shd_penalties}
\end{table}

The rationale behind the scores is that the insertion of new edges is ‘riskier’ than the removal of existing ones. The introduction of non-existing edges may induce spurious correlations that strongly affect the reliability of the model and related metrics, such as counterfactual fairness or accuracy in ood tasks. On the contrary, removing an existing edge is a more conservative operation: while it still affects the model accuracy, is not strongly impacting reliability. For the same reason, introducing an incorrect edge orientation is more penalized than removing an orientation.

\section{Additional experiments}
\label{app:extraexp}
\subsection{Label accuracy}
\label{app:concept_accuracy}
We report here the average label accuracy (task + concepts) for each of the datasets analyzed in Tab.~\ref{table:accuracy}. In this setting, since we are evaluating the accuracy across both the task and the concepts, we use a different opaque neural baseline, $\text{OpaqNN}_{M}$, which jointly predicts all concepts and the task for each dataset. As shown in the Tab.~\ref{table:accuracy_concepts}, C$^2$BM achieves comparable results to non-causal models in terms of concept prediction. Notably, the performance differences observed in Section~\ref{sec:exp-accuracy-and-rel} for task accuracy are attenuated here due to averaging over both tasks and concepts.

\begin{table*}[h]
    \renewcommand*{\arraystretch}{1.2}
        \caption{ Label accuracy (\%). Task concepts are as follows: dysp (Asia), Akt (Sachs), BP (Alarm), PropCost (Insurance), R5Fcst (Hailfinder), parity (cMNIST), mouth slightly open (CelebA), Survival (CUB$_C$), pneumothorax (Pneumoth.). Uncertainties represent $2$ sample mean $\sigma$ across 5 runs. 
        }
        \label{table:accuracy_concepts}
        \vskip 0.15in
        \begin{center}
            \begin{small}
                \begin{sc}
                    \resizebox{\textwidth}{!}{%
                    \begin{tabular}{l | c c | c c c c c c c c c}
                    \toprule
                    Model & Semantic & Causal & Asia & Sachs & Insurance & Alarm & Hailfinder & cMNIST & CelebA & CUB$_C$ & Pneumoth. \\
                     & Transp. & Rel. & \multicolumn{7}{c}{} \\
                    \midrule 
                    OpaqNN$_M$           & {\color{BrickRed}\ding{56}} & {\color{BrickRed}\ding{56}} & $87.1_{\pm1.1}$ & $72.4_{\pm1.0}$ & $78.76_{\pm0.73}$ & $90.13_{\pm0.30}$ & $66.29_{\pm0.60}$ & $91.85_{\pm.44}$ & $80.30_{\pm0.04}$ & $77.66_{\pm0.49}$ & $74.99_{\pm0.34}$ \\
                    CBM$_{+lin}$     & {\color{ForestGreen}\ding{52}} & {\color{BrickRed}\ding{56}} & $87.0_{\pm1.0}$ & $71.9_{\pm1.2}$ & $78.65_{\pm0.68}$ & $89.88_{\pm0.30}$ & $69.99_{\pm0.67}$ & $91.85_{\pm.44}$ & $79.98_{\pm0.14}$ & $77.51_{\pm0.43}$ & $74.03_{\pm0.44}$ \\
                    CBM$_{+mlp}$     & {\color{ForestGreen}\ding{52}} & {\color{BrickRed}\ding{56}} & $87.1_{\pm0.9}$ & $72.2_{\pm1.2}$ & $78.64_{\pm0.70}$ & $89.80_{\pm0.36}$ & $69.78_{\pm0.79}$ & $91.46_{\pm.40}$ & $80.06_{\pm0.03}$ & $77.75_{\pm0.54}$ & $73.72_{\pm0.65}$ \\
                    CEM              & {\color{ForestGreen}\ding{52}} & {\color{BrickRed}\ding{56}} & $87.1_{\pm0.9}$ & $72.2_{\pm1.1}$ & $78.60_{\pm0.77}$ & $89.77_{\pm0.28}$ & $70.14_{\pm0.76}$ & $91.42_{\pm.37}$ & $80.39_{\pm0.02}$ & $77.16_{\pm0.33}$ & $74.27_{\pm0.77}$ \\
                    SCBM             & {\color{ForestGreen}\ding{52}} & {\color{BrickRed}\ding{56}} & $87.0_{\pm0.9}$ & $72.2_{\pm1.0}$ & $78.62_{\pm0.75}$ & $90.08_{\pm0.32}$ & $69.54_{\pm0.65}$ & $91.84_{\pm.37}$ & $79.93_{\pm0.03}$ & $78.03_{\pm0.23}$ & $74.31_{\pm0.29}$ \\
                    \textbf{C$^2$BM} & {\color{ForestGreen}\ding{52}} & {\color{ForestGreen}\ding{52}} & $87.2_{\pm1.0}$ & $72.0_{\pm0.9}$ & $78.37_{\pm0.72}$ & $89.83_{\pm0.35}$ & $71.85_{\pm0.80}$ & $92.19_{\pm.05}$ & $80.44_{\pm0.10}$ & $77.22_{\pm0.36}$ & $74.73_{\pm0.31}$ \\
                    \bottomrule
                    \end{tabular}
                    }
                \end{sc}
            \end{small}
        \end{center}
    \end{table*}

\subsection{Task accuracy using true graph}\label{app:true-graph}
To further assess the quality of the generated graph, we compare downstream task performance using the inferred graph with performance using the true graph, available in Bayesian Network synthetic datasets.

\begin{table}[h]
    \renewcommand*{\arraystretch}{1.2}
        \caption{Task accuracy (\%) using the \textit{true} and \textit{predicted} graph. Task concepts are as follows: dysp (Asia), Akt (Sachs), PropCost (Insurance), BP (Alarm), R5Fcst (Hailfinder). Uncertainties represent $2$ sample mean $\sigma$ across 5 runs.}
        \label{table:true-graph}
        \vskip 0.15in
        \begin{center}
                \begin{sc}
                    \resizebox{\textwidth}{!}{%
                    \begin{tabular}{l | c c c c c c }
                    \toprule
                    Model &  &  Asia    &     Sachs   &  Insurance   &      Alarm  &  Hailfinder \\
                    \midrule
                    C$^2$BM & True graph     & $71.0_{\pm2.3}$ & $65.0_{\pm1.3}$ & $66.4_{\pm1.8}$ & $62.1_{\pm1.5}$ & $73.0_{\pm1.6}$   \\
                            & Predicted graph  & $71.4_{\pm1.7}$ & $65.3_{\pm1.1}$ & $66.4_{\pm1.5}$ & $62.5_{\pm1.4}$ & $74.1_{\pm1.8}$   \\
                    \bottomrule
                    \end{tabular}
                    }
                \end{sc}
        \end{center}
\end{table}

Results in Tab.~\ref{table:true-graph} indicate that the two settings yield comparable performance, demonstrating both the quality of the learned graph and the robustness of the proposed pipeline. This robustness can be attributed to the employed exogenous latent embeddings, which mitigate the impact of concept incompleteness. Consequently, even when the inferred graph is not perfectly aligned with the true causal structure, C$^2$BM maintains strong task performance. These findings highlight the model’s resilience and its ability to generalize effectively in real-world scenarios where true causal structures are often unavailable.

\subsection{Ablation study on Causal Discovery}\label{app:ablation_cd}
In this section, we evaluate the sensitivity of the causal discovery component in our pipeline to the choice of method for constructing a causal graph. To do so, we compare our method of choice for causal discovery, i.e., \textit{Greedy Equivalence Search} (GES)~\citep{chickering2002}, with different widely-used causal discovery algorithms that recover a CPDAG. Each of these methods is evaluated with and without refinement via the retrieval-augmented generation (RAG)-enhanced language model (LLM) employed in our study. Furthermore, we compare all these methods against the use of the LLM alone, as well as our retrieval-augmented LLM (LLM + RAG) applied directly to discover the whole graph. Specifically, the LLM is prompted with the causal prompt shown in App.~\ref{app:prompts}, with additional retrieved context appended to the query in the LLM+RAG setting.

Specifically, we evaluate the following methods from the causal discovery literature:
\begin{itemize}
    \item \textit{GES} (score-based): The algorithm selected in our study. A detailed description is provided in App.~\ref{app:causal_discovery};
    \item \textit{PC Algorithm}~\citep{spirtes2000causation} (independence-based): A classical independence-based method that first estimates the undirected structure of the causal relations through a sequence of conditional independence tests, then orients edges using a set of predefined orientation rules~\citep{colombo2014}.  In our implementation, we use the PC algorithm provided by the \texttt{causal-learn} library~\citep{zheng2024}, with a chi-squared independence test and a significance level of 0.05;
    \item \textit{Fast GES}~\citep{ramsey2017} (FGES) (score-based): A computationally efficient variant of GES that improves performance by storing intermediate evaluations and parallelizing expensive operations, enabling its application to large datasets~\citep{andrews2019, ramsey2017}. In our implementation, we use the FGES algorithm provided by the \texttt{py-tetrad} library~\citep{ramsey2023py}, employing the \emph{Bayesian Dirichlet Equivalent Uniform} (BDeu) score for structure evaluation~\citep{heckerman1995learning}.
\end{itemize}
The results are evaluated on all the \texttt{bnlearn} datasets, for which the true causal graphs are known. Results are presented in Tab.~\ref{table:cd_ablation}.   

\begin{table}[h!]
    \renewcommand*{\arraystretch}{1.2}
    \vspace{-.2cm}
    \caption{Structural Hamming distance and number of mistaken edges between the true and learned causal graphs for each tested method. The maximum number of errors (Max errors) and the average number of errors make by a random classifier (Random Classifier) are also provided for reference.}
    \vspace{-.2cm}
    \label{table:cd_ablation}
    \vskip 0.15in
    \begin{center}
        \begin{scriptsize} 
            \resizebox{0.8\columnwidth}{!}{%
                \begin{tabular}{l | c | c c c c c}
                    \toprule
                    Metric & CD Method & Asia & Sachs & Insurance & Alarm & Hailfinder  \\

                    \midrule
                      Hamming    & LLM & 13 & 11.08 & 336 & 505.92 & 92.75\\
                                 & LLM + RAG & 12 & 6.83 & 201.91 &  Time limit &  Time limit \\
                                 & PC & 2.41 & 2.93 & 6.65 & 5 & 14.53  \\
                                 & FGES & 0.65 & 3.4 & 6.78 & 5.91 & 21.4  \\
                                 & GES & 0.65 & 3.4 & 6.43 & 5.41 & \textbf{11} \\
                                 & PC + LLM (+ RAG) & 2.41 & 2.91 & 6 & \textbf{3.92} & 15.42  \\
                                 & FGES + LLM (+ RAG)& \textbf{0.25} & 2.03 & \textbf{5.33} & 6 & 20.5  \\
                                 & GES + LLM (+ RAG) & \textbf{0.25} & \textbf{1.83} & 6.33 & 4.95 & \textbf{11}  \\
                                 
                    \midrule

                  Number of mistaken   & Max errors & 28 & 55 & 351 & 666 & 1540\\ 
                  edges      & Random Classifier & 9.33 & 18.33 & 117 & 222 & 513.33 \\
                      & LLM & 13 & 20 & 351 & 543 & 107 \\
                             & LLM + RAG & 12 & 12 & 226 & Time limit &  Time limit \\
                             & PC & 6 & 10 & 26 & 13 & 45  \\
                             & FGES & 3 & 17 & 23 & 14 & 39  \\
                             & GES & 3 & 17 & 19 & 13 & \textbf{22} \\
                             & PC + LLM (+ RAG) & 6 & 9 & 22 & \textbf{9} & 43  \\
                             & FGES + LLM (+ RAG)& \textbf{1} & 8 & \textbf{15} & 11 & 36  \\
                             & GES + LLM (+ RAG)& \textbf{1} & \textbf{7} & 18 & 10 & \textbf{22}  \\
                    \bottomrule
                \end{tabular}
            }
        \end{scriptsize}
    \end{center}
    \vspace{-.3cm}
\end{table}

As shown in the results, GES refined with the RAG-augmented LLM is the overall best-performing method in the majority of cases, both in terms of structural Hamming distance and number of mistaken edges. In instances where it is not the best, it still consistently ranks among the top methods. Notably, both the standard causal discovery method (GES) and the use of the RAG-augmented LLM contribute positively to performance. Due to the substantial computational time required by the LLM+RAG-based causal discovery approach, we excluded experiments on datasets for which execution exceeded 24 hours.

\subsection{Ablation study on LLM type}\label{app:ablation-llm}
The LLM or the design of the LLM prompt could condition the causal graph refinement. To assess this, we present an ablation study in which we fix the incomplete causal graph generated by the causal discovery algorithm (GES), and later evaluate different LLMs, with different prompts, for the graph refinement step. Specifically, we evaluate the impact of four different prompting strategies with three LLMs (GPT-4o, 200M parameters; and GPT-4o-mini, 8M parameters). The considered strategies are as follows:
\begin{itemize}
    \item \textit{Minimal} prompting: simply asks the LLM to identify causal relations without additional guidance;
    \item \textit{Instruction} prompting: provides a more detailed explanation of what constitutes a causal relation;
    \item \textit{Few-shot} prompting: proposed in~\cite{ye2024investigating}, combines instruction prompting with a few examples;
    \item \textit{Chain-of-Thought} (CoT): proposed in~\cite{wei2022chain}, encourages the model to generate intermediate logical steps before generating the answer. This is the approach we used in the original manuscript.
\end{itemize}
Tab.~\ref{table:llm_ablation} compares the number of mistaken edges and the custom Hamming distance between the true and the learned DAG. Causal reliability of standard, flat, CBMs is also reported for reference. The results show that, using GPT-5, causal reliability improves on average w.r.t. GPT-4o, and GPT-4o-mini, especially on the Sachs dataset. Despite moderate variation in the Sachs dataset, the performance is generally very robust to the prompt strategy (made an exception for naive minimal strategy), and causal reliability is largely superior to standard CBMs in all cases.

\begin{table}[h!]
\centering
\caption{Structural Hamming distance and number of Mistaken Edges between true and learned DAG across datasets, using different LLMs and prompting strategies. Uncertainty represents $2$ sample mean $\sigma$ across 3 runs.}
\label{table:llm_ablation}

\begin{scriptsize}

\begin{tabular}{l|l|c|ccccc}
\toprule
Metric & LLM & Prompting Strategy & Asia & Sachs & Insurance & Alarm & Hailfinder \\
\midrule

\multirow{12}{*}{Hamming} 
  & \multirow{4}{*}{GPT-4o-mini} 
    & Minimal     & 0.25{\tiny{$\pm$.00}} & 2.67{\tiny{$\pm$.00}} & 8.04{\tiny{$\pm$.52}} & 3.63{\tiny{$\pm$.94}} & 10.83{\tiny{$\pm$.51}} \\
  & & Instruction & 0.25{\tiny{$\pm$.00}} & 2.75{\tiny{$\pm$.06}} & 8.04{\tiny{$\pm$.52}} & 4.3{\tiny{$\pm$1.2}} & 10.83{\tiny{$\pm$.51}} \\
  & & Few-shot    & 0.25{\tiny{$\pm$.00}} & 2.79{\tiny{$\pm$.05}} & 8.04{\tiny{$\pm$.52}} & 4.3{\tiny{$\pm$1.2}} & 10.83{\tiny{$\pm$.51}} \\
  & & CoT         & 0.25{\tiny{$\pm$.00}} & 2.21{\tiny{$\pm$.08}} & 7.67{\tiny{$\pm$.48}} & 4.1{\tiny{$\pm$1.1}} & 10.83{\tiny{$\pm$.51}} \\
\cmidrule(lr){2-8}
  & \multirow{4}{*}{GPT-4o} 
    & Minimal     & 0.25{\tiny{$\pm$.00}} & 2.83{\tiny{$\pm$.00}} & 8.04{\tiny{$\pm$.52}} & 3.75{\tiny{$\pm$.98}} & 10.83{\tiny{$\pm$.51}} \\
  & & Instruction & 0.25{\tiny{$\pm$.00}} & 2.63{\tiny{$\pm$.16}} & 8.04{\tiny{$\pm$.52}} & 4.7{\tiny{$\pm$1.3}} & 10.83{\tiny{$\pm$.51}} \\
  & & Few-shot    & 0.25{\tiny{$\pm$.00}} & 2.46{\tiny{$\pm$.05}} & 8.04{\tiny{$\pm$.52}} & 4.8{\tiny{$\pm$1.3}} & 10.83{\tiny{$\pm$.51}} \\
  & & CoT         & 0.25{\tiny{$\pm$.00}} & 2.25{\tiny{$\pm$.06}} & 7.54{\tiny{$\pm$.52}} & 4.7{\tiny{$\pm$1.3}} & 10.83{\tiny{$\pm$.51}} \\
\cmidrule(lr){2-8}
  & \multirow{4}{*}{GPT-5} 
    & Minimal     & 0.25{\tiny{$\pm$.00}} & 1.29{\tiny{$\pm$.05}} & 7.92{\tiny{$\pm$.48}} & 4.2{\tiny{$\pm$1.1}} & 10.83{\tiny{$\pm$.51}} \\
  & & Instruction & 0.25{\tiny{$\pm$.00}} & 1.21{\tiny{$\pm$.02}} & 7.29{\tiny{$\pm$.43}} & 4.4{\tiny{$\pm$1.2}} & 10.83{\tiny{$\pm$.51}} \\
  & & Few-shot    & 0.25{\tiny{$\pm$.00}} & 1.04{\tiny{$\pm$.05}} & 7.29{\tiny{$\pm$.43}} & 4.0{\tiny{$\pm$1.1}} & 10.83{\tiny{$\pm$.51}} \\
  & & CoT         & 0.25{\tiny{$\pm$.00}} & 0.92{\tiny{$\pm$.09}} & 7.17{\tiny{$\pm$.39}} & 4.6{\tiny{$\pm$1.3}} & 10.83{\tiny{$\pm$.51}} \\
\midrule

\multirow{12}{*}{\shortstack{Mistaken \\ Edges}} 
  & \multirow{4}{*}{GPT-4o-mini} 
    & Minimal     & 1.00{\tiny{$\pm$.00}} & 9.00{\tiny{$\pm$.00}} & 23.0{\tiny{$\pm$1.4}} & 9.0{\tiny{$\pm$1.8}} & 21.50{\tiny{$\pm$.90}} \\
  & & Instruction & 1.00{\tiny{$\pm$.00}} & 9.50{\tiny{$\pm$.18}} & 23.0{\tiny{$\pm$1.4}} & 10.0{\tiny{$\pm$2.2}} & 21.50{\tiny{$\pm$.90}} \\
  & & Few-shot    & 1.00{\tiny{$\pm$.00}} & 9.50{\tiny{$\pm$.18}} & 23.0{\tiny{$\pm$1.4}} & 10.0{\tiny{$\pm$2.2}} & 21.50{\tiny{$\pm$.90}} \\
  & & CoT         & 1.00{\tiny{$\pm$.00}} & 8.00{\tiny{$\pm$.36}} & 21.5{\tiny{$\pm$1.3}} & 9.0{\tiny{$\pm$1.8}} & 21.50{\tiny{$\pm$.90}} \\
\cmidrule(lr){2-8}
  & \multirow{4}{*}{GPT-4o} 
    & Minimal     & 1.00{\tiny{$\pm$.00}} & 11.00{\tiny{$\pm$.00}} & 23.0{\tiny{$\pm$1.4}} & 9.5{\tiny{$\pm$2.0}} & 21.50{\tiny{$\pm$.90}} \\
  & & Instruction & 1.00{\tiny{$\pm$.00}} & 10.00{\tiny{$\pm$.72}} & 23.0{\tiny{$\pm$1.4}} & 10.0{\tiny{$\pm$2.2}} & 21.50{\tiny{$\pm$.90}} \\
  & & Few-shot    & 1.00{\tiny{$\pm$.00}} & 9.50{\tiny{$\pm$.18}} & 23.0{\tiny{$\pm$1.4}} & 11.0{\tiny{$\pm$2.3}} & 21.50{\tiny{$\pm$.90}} \\
  & & CoT         & 1.00{\tiny{$\pm$.00}} & 8.50{\tiny{$\pm$.18}} & 21.0{\tiny{$\pm$1.4}} & 10.0{\tiny{$\pm$2.2}} & 21.50{\tiny{$\pm$.90}} \\
\cmidrule(lr){2-8}
  & \multirow{4}{*}{GPT-5} 
    & Minimal     & 1.00{\tiny{$\pm$.00}} & 4.50{\tiny{$\pm$.18}} & 22.5{\tiny{$\pm$1.3}} & 9.5{\tiny{$\pm$2.0}} & 21.50{\tiny{$\pm$.90}} \\
  & & Instruction & 1.00{\tiny{$\pm$.00}} & 4.00{\tiny{$\pm$.00}} & 20.0{\tiny{$\pm$1.1}} & 9.0{\tiny{$\pm$1.8}} & 21.50{\tiny{$\pm$.90}} \\
  & & Few-shot    & 1.00{\tiny{$\pm$.00}} & 3.50{\tiny{$\pm$.18}} & 20.0{\tiny{$\pm$1.1}} & 9.0{\tiny{$\pm$1.8}} & 21.50{\tiny{$\pm$.90}} \\
  & & CoT         & 1.00{\tiny{$\pm$.00}} & 3.00{\tiny{$\pm$.36}} & 19.5{\tiny{$\pm$.9}} & 9.5{\tiny{$\pm$2.0}} & 21.50{\tiny{$\pm$.90}} \\
\bottomrule
\end{tabular}

\end{scriptsize}
\end{table}

\subsection{Ablation study on RAG}\label{app:ablation-rag}
Tab.~\ref{table:rag_ablation} illustrates the influence of the context supplied by RAG in correcting the causal graph generated by the causal discovery algorithm. In this analysis, we used \texttt{GPT-4o} for both the experiments with and without RAG, aiming to evaluate the effect of context on answering causal queries.

\begin{table}[h!]
    \renewcommand*{\arraystretch}{1.2}
    \vspace{-.2cm}
    \caption{Structural Hamming distance and number of mistaken edges between true and learned DAG when using either RAG to provide context to the LLM or just the LLM.}
    \vspace{-.2cm}
    \label{table:rag_ablation}
    \vskip 0.15in
    \begin{center}
        \begin{scriptsize} 
            \resizebox{0.5\columnwidth}{!}{%
                \begin{tabular}{l | c | c c}
                    \toprule
                    Metric & Context & Sachs & Alarm  \\
                    \midrule
                    Hamming & No context & 3.1 & \textbf{5.0} \\
                             & RAG context & \textbf{1.8} & \textbf{5.0} \\
                    \midrule
                    Mistaken & No context & 12 & \textbf{10} \\
                    edges ratio & RAG context & \textbf{7} & \textbf{10} \\
                    \bottomrule
                \end{tabular}
            }
        \end{scriptsize}
    \end{center}
    \vspace{-.3cm}
\end{table}

Although the context provided by RAG appears to have no significant impact on the final causal graph for the Alarm dataset, it proves essential for correctly handling the undirected edges in the causal graph for the Sachs dataset. We hypothesize that this is due to the absence of protein related documents used in training the LLM, which leaves it with insufficient prior knowledge to address specific questions on the topic. RAG helps mitigate this limitation by supplying the LLM with the relevant information, thereby compensating for the lack of prior knowledge. In conclusion, while an LLM with the ability to fully comprehend complex queries is crucial for causal discovery, the additional context provided by RAG is vital for overcoming the LLM's prior knowledge gaps.



\subsection{Sample complexity of graph discovery}\label{app:sensitivity-data-size}
We study the effect of dataset size and graph size on the quality of C$^2$BM's graph construction pipeline. We explored this with a sensitivity study, running the causal graph pipeline (causal discovery with GES + LLM refinement with CoT prompt) across all datasets with an available ground-truth graph and comparing the number of mistaken edges between the true and the learned DAG. For each dataset, we varied the number of data points $N$ from $100$ to $10000$.
\begin{table}[h]
    \renewcommand*{\arraystretch}{1.2}
    \centering
    \caption{Number of mistaken edges between the true DAG and the learned graph using the C$^2$BM's graph construction pipeline, evaluated across datasets and data sizes. For reference, the reliability of standard flat CBMs is also reported.}
    \begin{tabular}{l c c c c c c c}
        \toprule
        {Dataset} & Flat CBMs & & \multicolumn{5}{c}{{C$^2$BM's causal graph}} \\
        \cline{4-8}
         & & Data size ($N$) $\rightarrow$ & {100} & {500} & {1000} & {5000} & {10000 (Paper)} \\
        \midrule
        Asia       & 11  & & 6  & 3  & 1  & 1  & 1  \\
        Sachs      & 23  & & 15 & 12 & 11 & 7  & 7  \\
        Insurance  & 74  & & 47 & 35 & 24 & 22 & 18 \\
        Alarm      & 78  & & 31 & 16 & 9  & 9  & 9  \\
        Hailfinder & 117 & & 56 & 47 & 44 & 22 & 22 \\
        \bottomrule
    \end{tabular}
    \label{tab:sensitivity-data-size}
\end{table}
Results are presented in Tab.~\ref{tab:sensitivity-data-size}. 

A few considerations emerge:
\begin{itemize}
    \item C$^2$BM's causal graph is consistently more causally reliable than the flat structure implicitly assumed by standard CBMs, regardless of the dataset size.
    \item As expected, increasing the number of data points leads to better alignment between the estimated and true causal graphs. Causal reliability tends to remain stable at larger data sizes.
    \item We observed that the data size threshold for reliable performance does not strictly depend on graph size. We speculate this is due to the varying impact of LLM-based refinement across datasets. In some cases, an effective background knowledge can compensate for limited data.
\end{itemize}

\subsection{Sensitivity to graph corruption}\label{app:sensitivity-graph_corruptions}
In this section, we empirically assess the robustness of C$^2$BM to graph misspecification and corruption. Although C$^2$BM is theoretically a universal approximator for the final prediction task, independent of the specific causal graph (see Appendix~\ref{app:universal-approx}), we complement this result with an empirical validation.
\begin{itemize}
    \item \textbf{Adversarial Graph Corruptions.} We first evaluate robustness by altering a percentage $p$ of graph edges, chosen randomly, with one of the following operations: \textit{edge flipping}, \textit{addition}, or \textit{removal}. The resulting performance across datasets and corruption levels is reported in Tab.~\ref{tab:graph_corruption_edges}.
    
    \item \textbf{Progressive Flattening into Standard CBMs.} As a second evaluation, we progressively transform the graph into the flat structure assumed by standard CBMs, by connecting a percentage $p$ of nodes directly to the prediction task while removing their outgoing edges. Results are shown in Tab.~\ref{tab:graph_flattening}.
\end{itemize}

\begin{table}[ht]
    \renewcommand*{\arraystretch}{1.1}
    \centering
    \caption{Task accuracy (\%) under edge-level adversarial corruptions. A percentage $p$ of edges is altered through flipping, addition, or removal. Task concepts are as follows: dysp (Asia), Akt (Sachs), PropCost (Insurance), BP (Alarm), R5Fcst (Hailfinder). Uncertainties represent $2$ sample mean $\sigma$ across 3 runs.}
    \label{tab:graph_corruption_edges}
    \begin{center}
        \begin{scriptsize} 
            \resizebox{\columnwidth}{!}{%
    \begin{tabular}{l c c c c c c c}
        \toprule
        Dataset / $p$ & 0.05 & 0.1 & 0.2 & 0.4 & 0.6 & 0.8 & 1.0 \\
        \midrule
        Asia       & 71.8$_{\pm 0.6}$ & 71.6$_{\pm 0.5}$ & 71.4$_{\pm 0.6}$ & 71.7$_{\pm 1.3}$ & 70.2$_{\pm 1.9}$ & 71.4$_{\pm 1.2}$ & 70.2$_{\pm 0.9}$ \\
        Sachs      & 65.1$_{\pm 1.9}$ & 65.6$_{\pm 2.0}$ & 64.9$_{\pm 2.0}$ & 64.6$_{\pm 1.2}$ & 65.5$_{\pm 1.5}$ & 65.2$_{\pm 1.5}$ & 65.0$_{\pm 2.0}$ \\
        Insurance  & 67.2$_{\pm 1.6}$ & 67.0$_{\pm 3.1}$ & 67.5$_{\pm 2.3}$ & 66.1$_{\pm 1.1}$ & 67.0$_{\pm 2.6}$ & 66.8$_{\pm 2.6}$ & 67.2$_{\pm 3.1}$ \\
        Alarm      & 61.9$_{\pm 2.6}$ & 61.7$_{\pm 2.1}$ & 61.5$_{\pm 2.5}$ & 61.9$_{\pm 2.7}$ & 60.6$_{\pm 1.9}$ & 60.9$_{\pm 1.6}$ & 61.4$_{\pm 1.7}$ \\
        Hailfinder & 74.0$_{\pm 1.5}$ & 73.1$_{\pm 3.0}$ & 72.3$_{\pm 2.2}$ & 72.3$_{\pm 3.2}$ & 73.0$_{\pm 2.1}$ & 71.8$_{\pm 1.6}$ & 72.4$_{\pm 1.4}$ \\
        \bottomrule
    \end{tabular}
            }
        \end{scriptsize}
    \end{center}
\end{table}
\begin{table}[ht]
    \renewcommand*{\arraystretch}{1.1}
    \centering
    \caption{Task accuracy (\%) under progressive graph flattening into standard CBMs. A percentage $p$ of nodes is directly connected to the task output. Task concepts are as follows: dysp (Asia), Akt (Sachs), PropCost (Insurance), BP (Alarm), R5Fcst (Hailfinder). Uncertainties represent $2$ sample mean $\sigma$ across 3 runs.}
    \label{tab:graph_flattening}
    \begin{center}
        \begin{scriptsize} 
            \resizebox{\columnwidth}{!}{%
    \begin{tabular}{l c c c c c c c}
        \toprule
        Dataset / $p$ & 0.05 & 0.1 & 0.2 & 0.4 & 0.6 & 0.8 & 1.0 \\
        \midrule
        Asia       & 72.0$_{\pm 1.0}$ & 72.0$_{\pm 1.0}$ & 71.5$_{\pm 0.3}$ & 71.7$_{\pm 0.8}$ & 71.7$_{\pm 0.5}$ & 71.2$_{\pm 1.6}$ & 71.1$_{\pm 1.1}$ \\
        Sachs      & 65.5$_{\pm 2.2}$ & 64.8$_{\pm 3.0}$ & 65.8$_{\pm 1.6}$ & 64.7$_{\pm 1.5}$ & 65.2$_{\pm 1.2}$ & 65.1$_{\pm 2.7}$ & 64.9$_{\pm 2.2}$ \\
        Insurance  & 67.4$_{\pm 2.6}$ & 66.2$_{\pm 1.7}$ & 66.3$_{\pm 2.5}$ & 65.6$_{\pm 3.3}$ & 65.3$_{\pm 2.4}$ & 66.9$_{\pm 1.8}$ & 64.7$_{\pm 3.1}$ \\
        Alarm      & 62.6$_{\pm 2.9}$ & 61.6$_{\pm 2.6}$ & 61.6$_{\pm 2.0}$ & 60.5$_{\pm 2.6}$ & 61.2$_{\pm 1.8}$ & 60.6$_{\pm 3.2}$ & 61.3$_{\pm 1.9}$ \\
        Hailfinder & 73.1$_{\pm 1.1}$ & 73.2$_{\pm 1.9}$ & 72.5$_{\pm 0.9}$ & 72.7$_{\pm 2.9}$ & 73.0$_{\pm 2.6}$ & 73.0$_{\pm 1.7}$ & 72.9$_{\pm 1.9}$ \\
        \bottomrule
    \end{tabular}
            }
        \end{scriptsize}
    \end{center}
\end{table}

Across both corruption strategies, we find that task accuracy \textbf{remains stable}, even when the causal graph is heavily perturbed. These empirical results support the theoretical claim that C$^2$BM is robust to graph misspecification.

\subsection{Effect of single-concept interventions}\label{app:single-interventions}
\begin{figure}[h]
    \centering
    \includegraphics[width=0.49\linewidth]{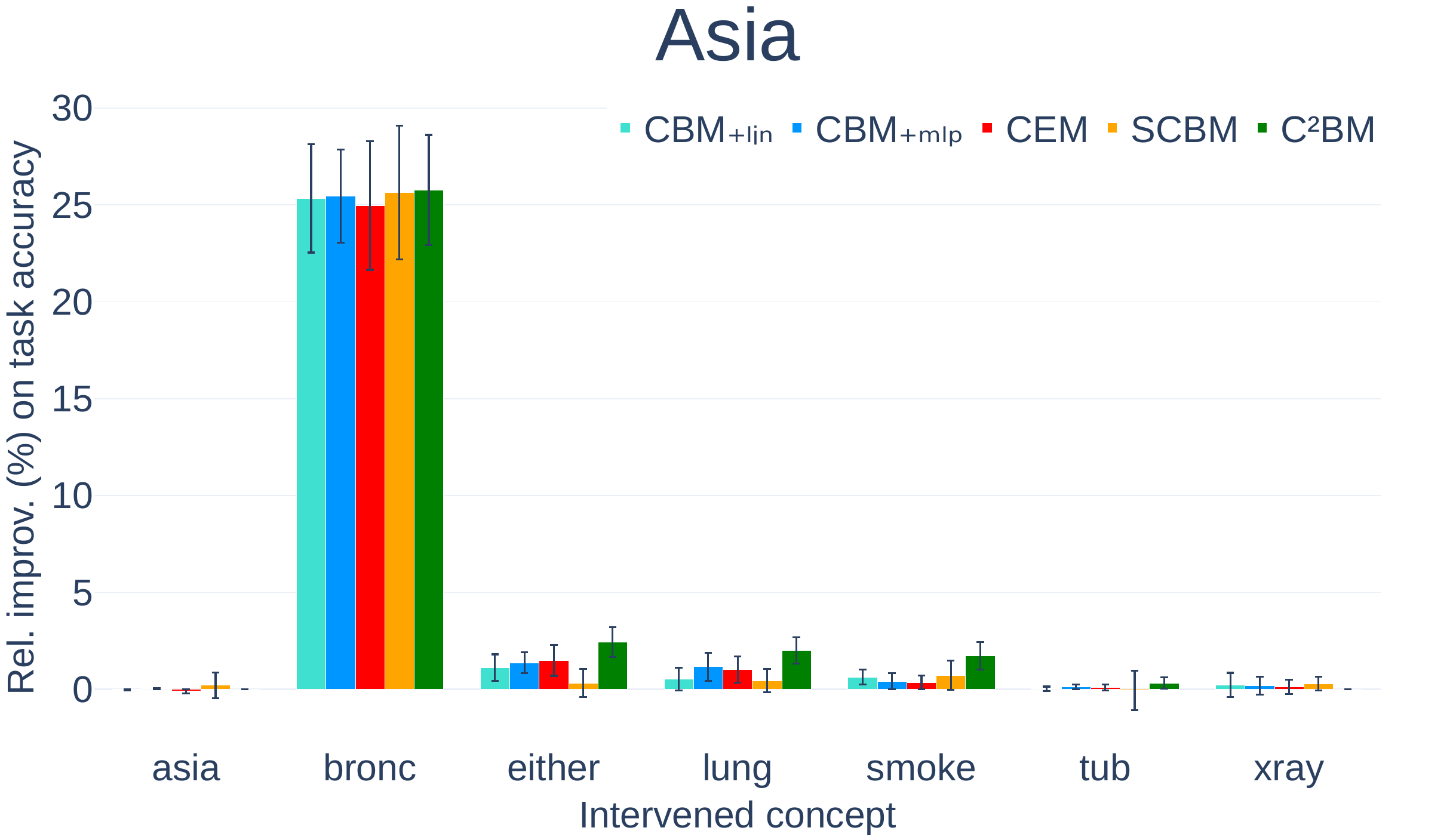}
    \includegraphics[width=0.49\linewidth]{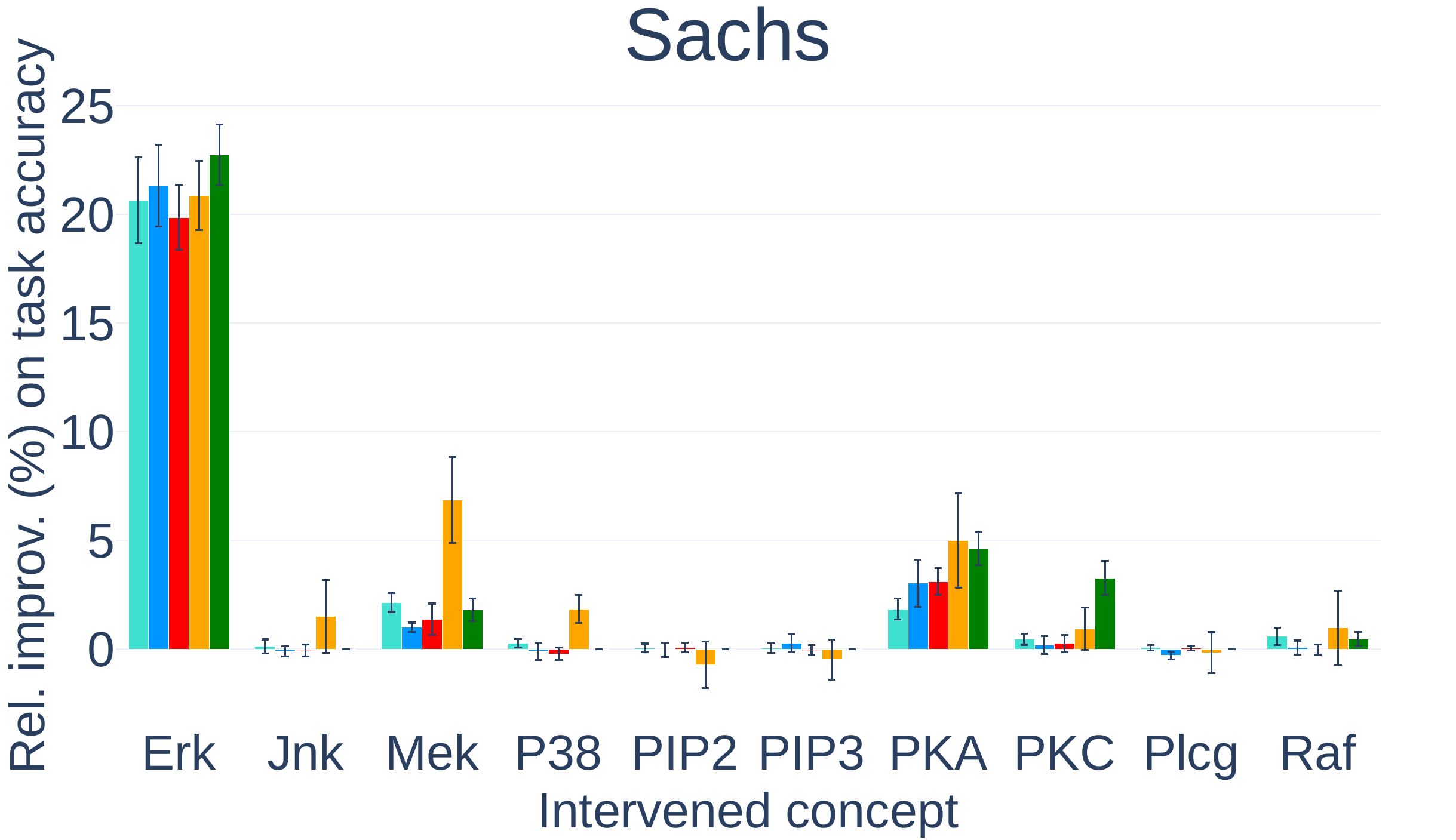}
    \includegraphics[width=0.49\linewidth]{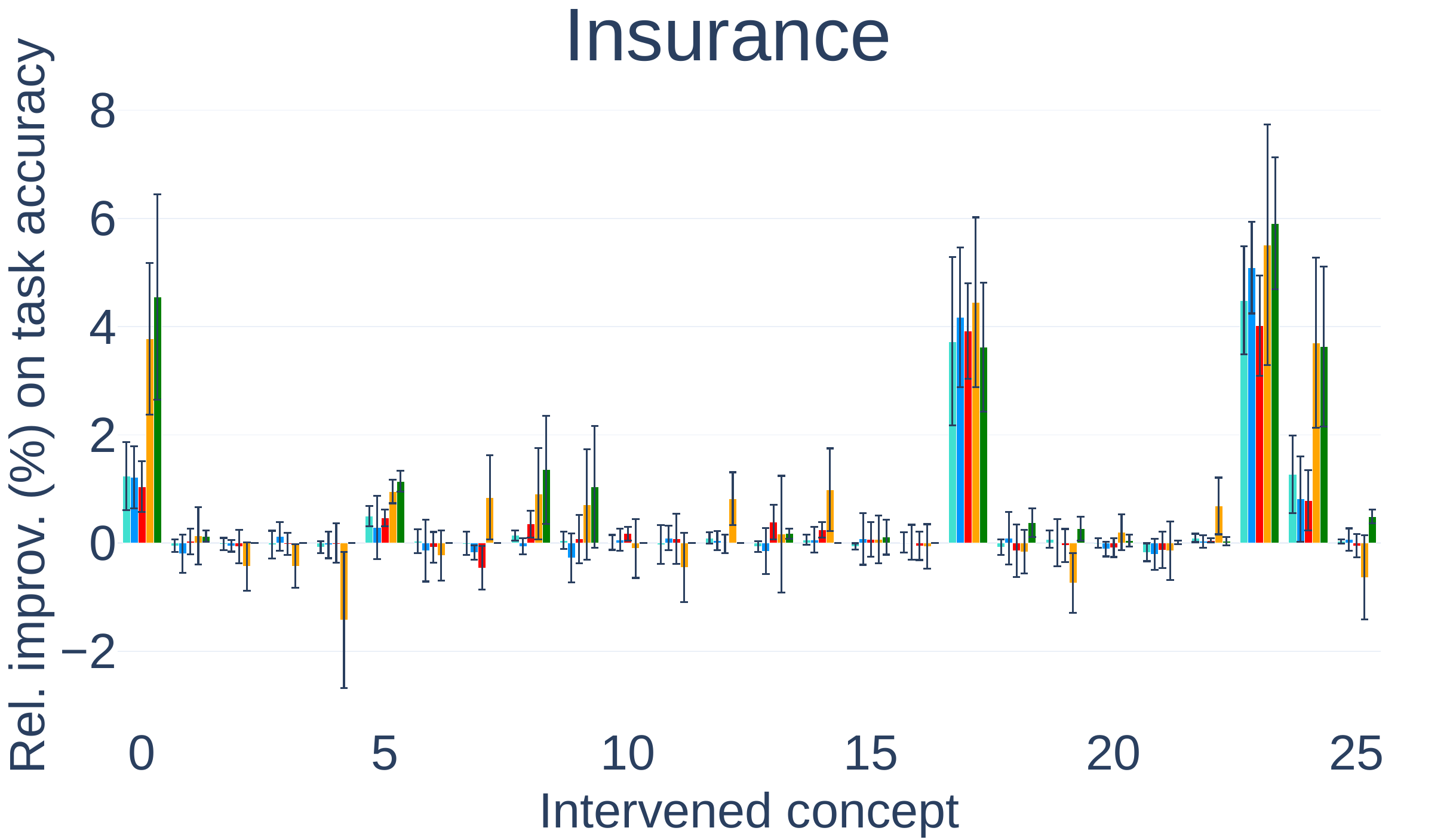}
    \includegraphics[width=0.49\linewidth]{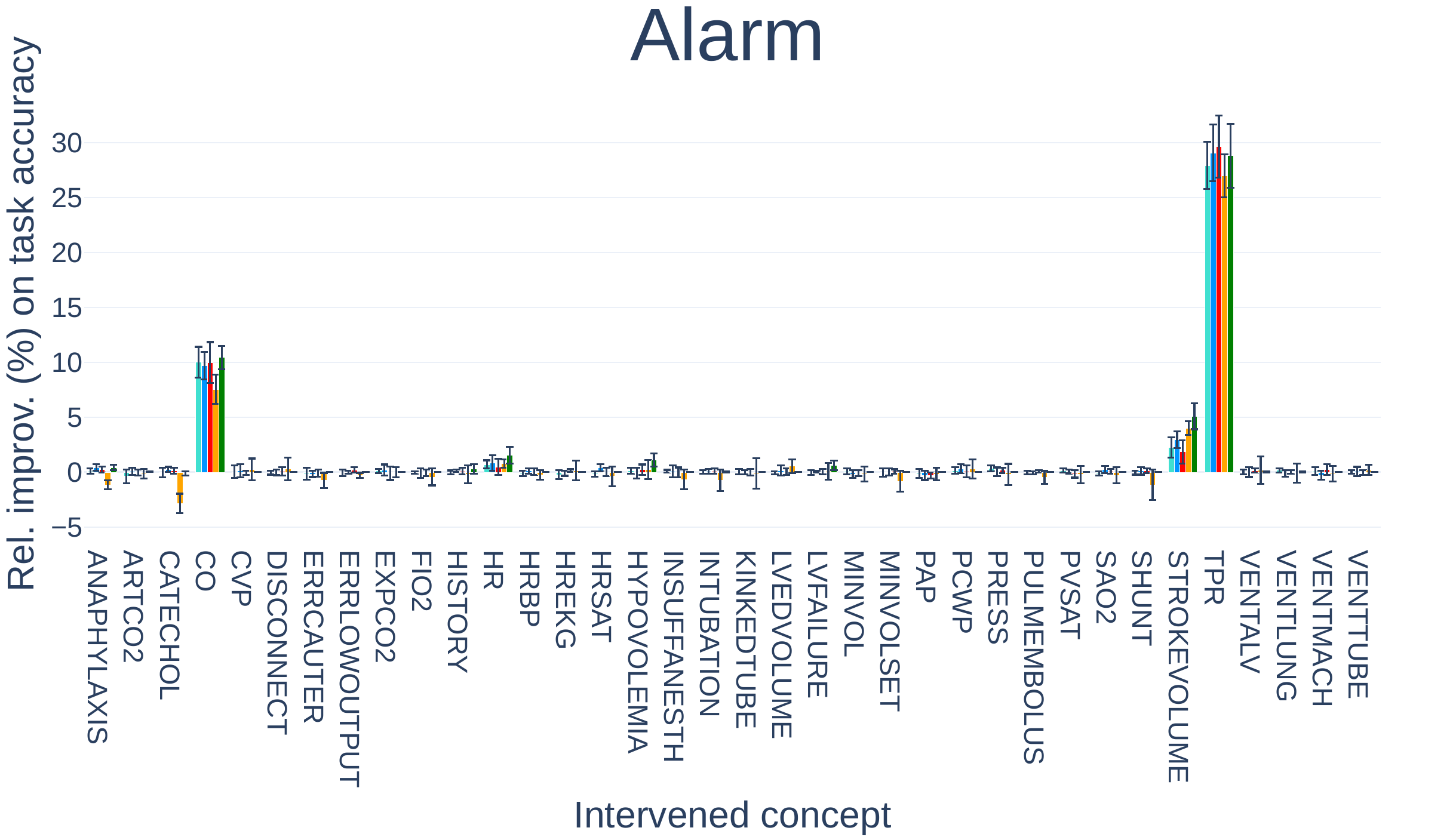}
    \includegraphics[width=0.49\linewidth]{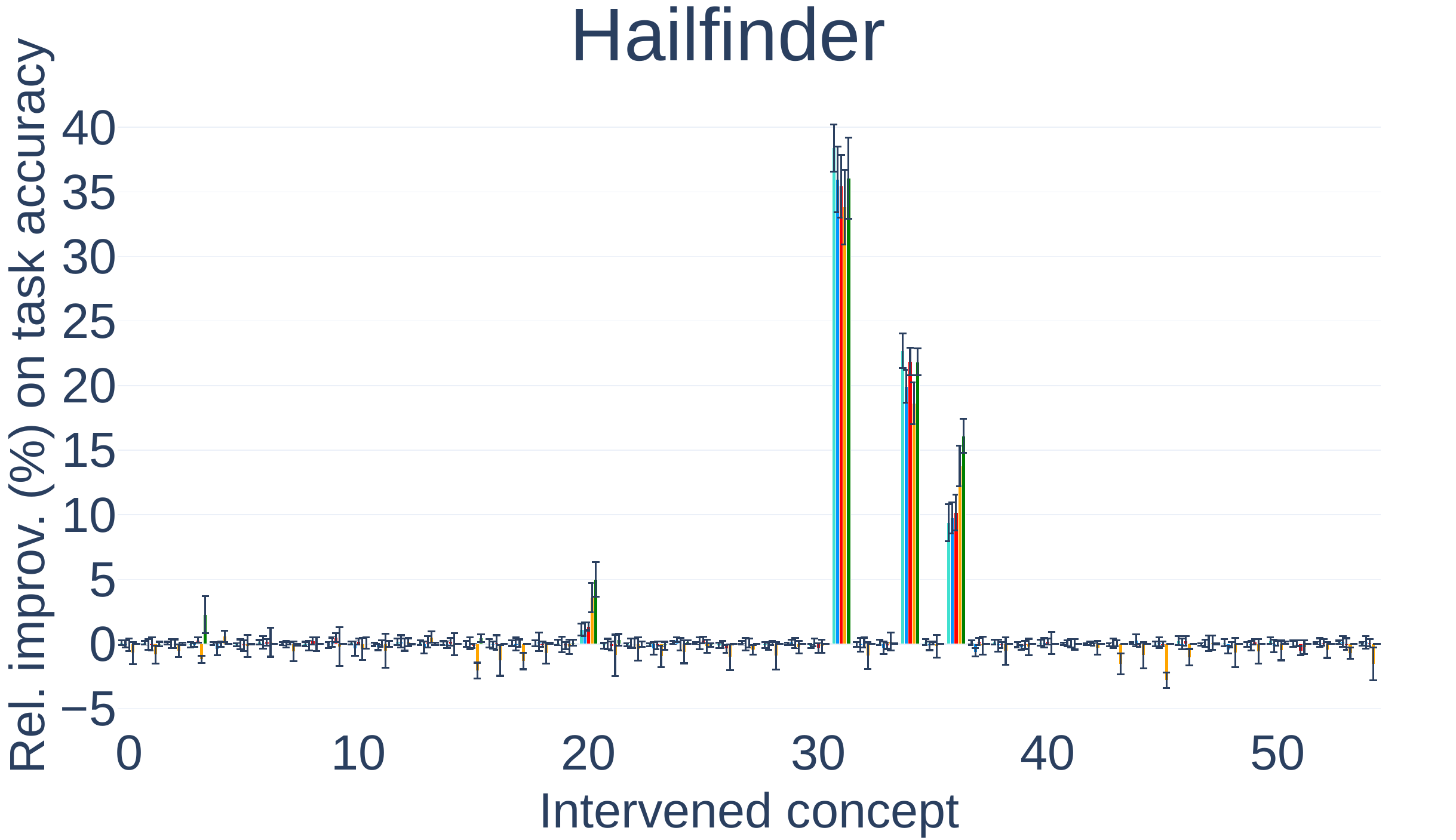}
    \includegraphics[width=0.49\linewidth]{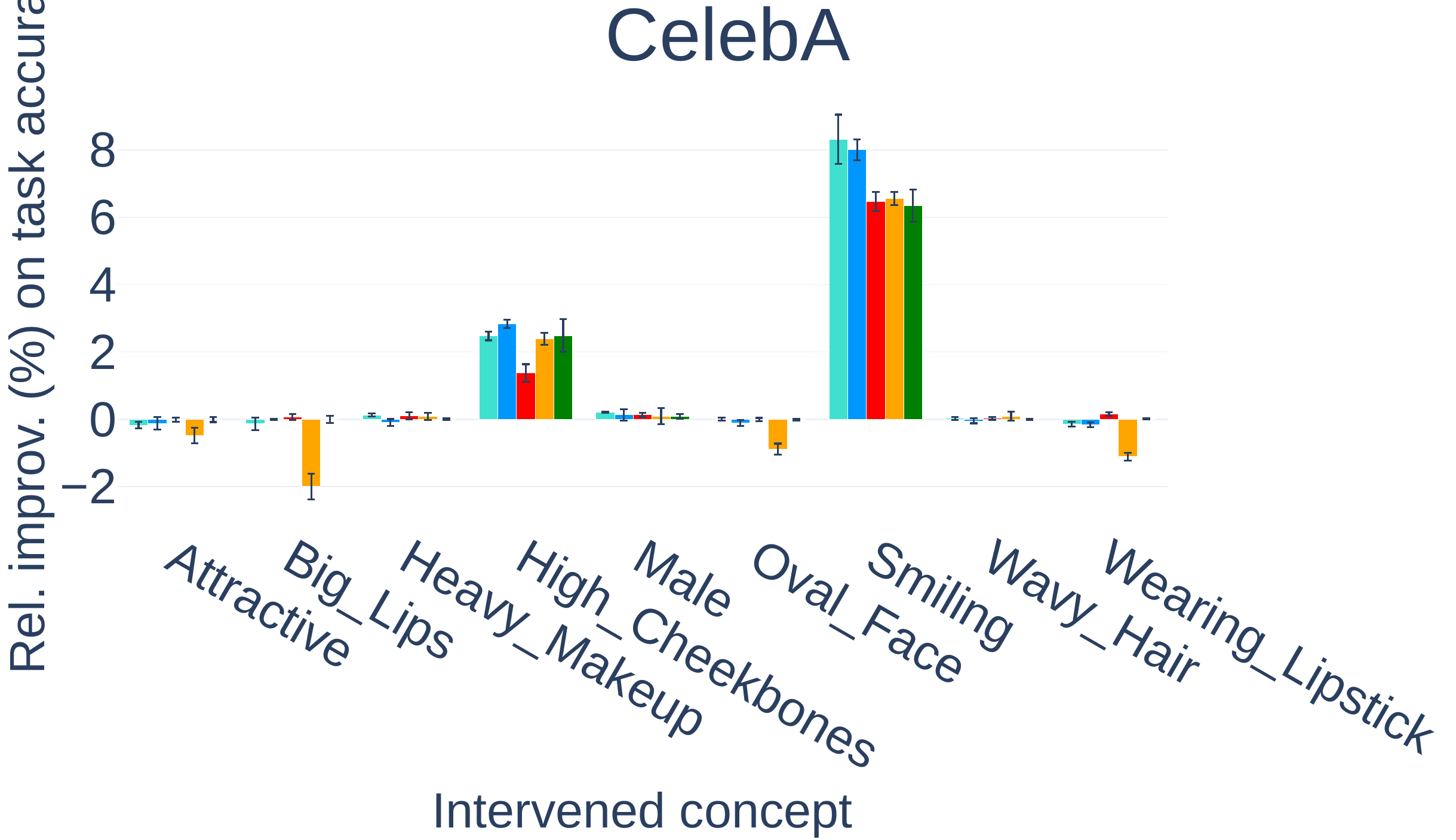}
    \includegraphics[width=0.49\linewidth]{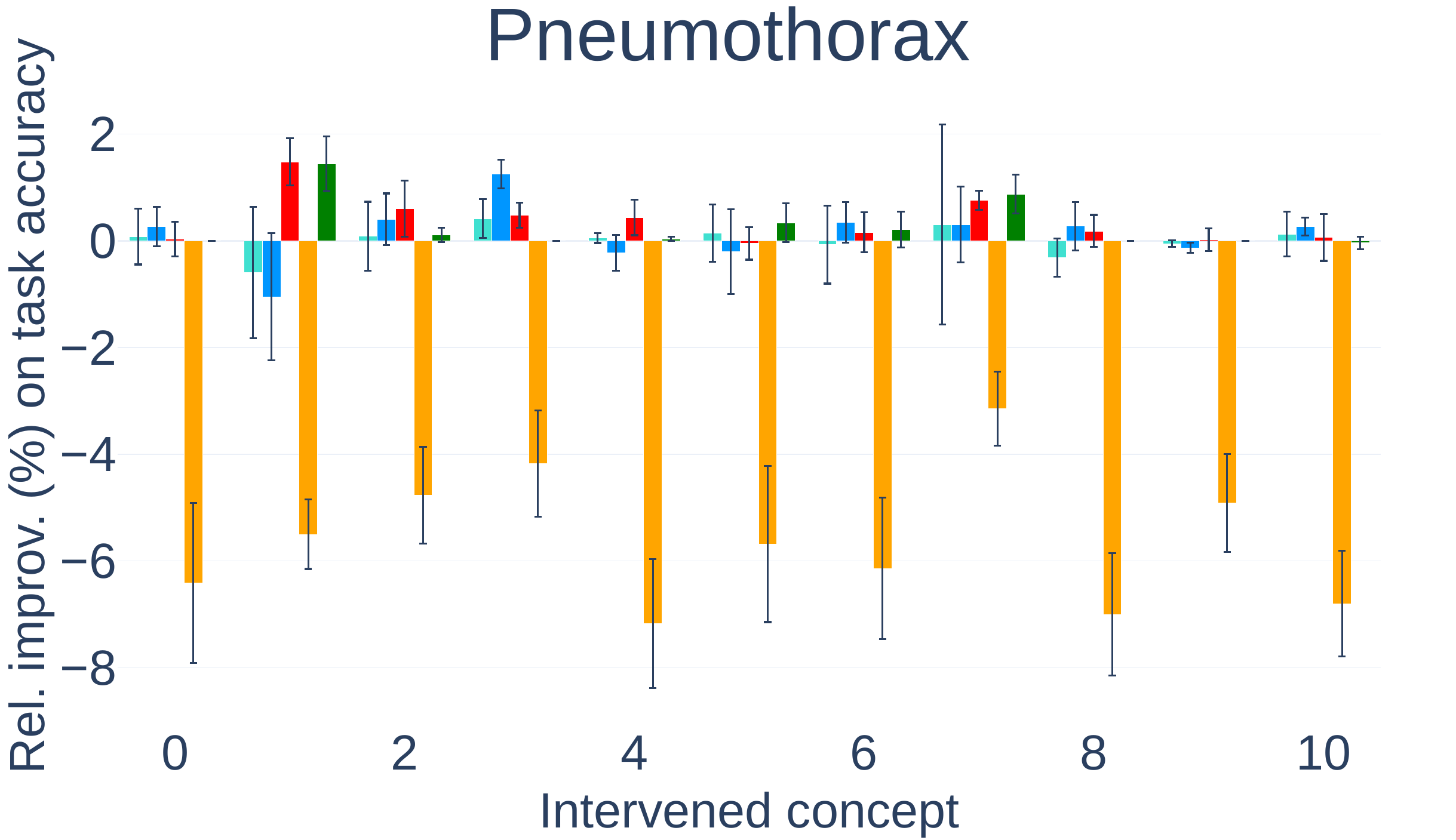}
    \includegraphics[width=0.49\linewidth]{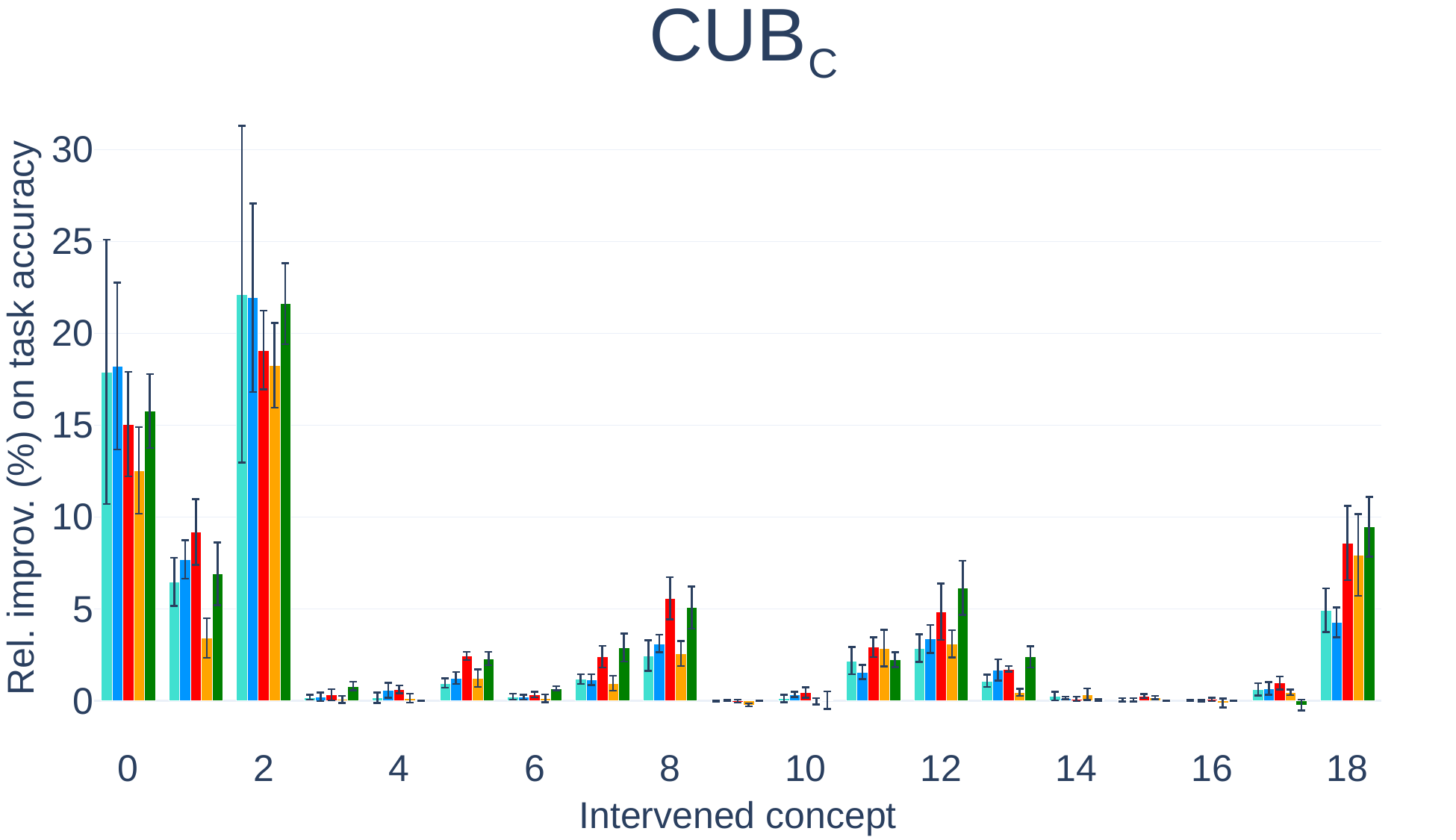}
    \caption{Relative improvement (\%) in task accuracy when intervening on specific concepts.}
    \label{fig:single_intervention_results}
\end{figure}
Fig.~\ref{fig:single_intervention_results} presents the relative improvement in task accuracy after intervening on individual concepts across all datasets. A key observation is that \textbf{C$^2$BM responds to interventions on the same key concepts as the baselines, despite the fundamental difference in how information propagates}. In CBM-based models and CEM, all concepts are directly connected to the task, enabling direct influence. In contrast, C$^2$BM enforces information flow through the causal graph, constraining the interactions. Yet, the task performance improvements remain consistent across models. These results highlight that C$^2$BM preserves the intervention effects observed in traditional concept-based models while providing a more structured and interpretable causal representation of the underlying relationships.

\subsection{Decomposing interventional accuracy}\label{app:decomposition}
Fig.~\ref{fig:interventions} in the main paper illustrates the improvement in cumulative relative interventional accuracy across all downstream, non-intervened concepts, including both intermediate concepts and the final task. To further analyze these effects, Fig.~\ref{fig:triplets_label_accuracy} decomposes this metric into two separate evaluations: one focusing solely on the task node and another considering only intermediate concepts. The concept accuracy plots highlight C$^2$BM’s unique ability to enhance performance of intermediate (downstream) concepts, a property not observed in competing models.

\begin{figure}[h]
    \centering
    \includegraphics[width=1\linewidth]{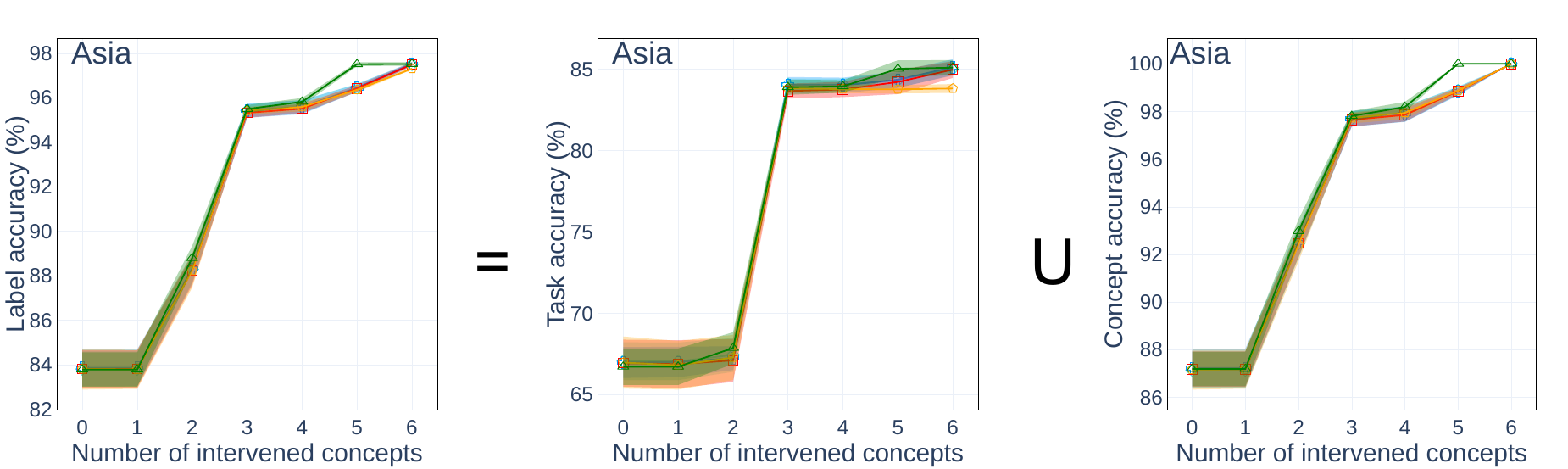}
\end{figure}

\begin{figure}[h]
    \centering
    \includegraphics[width=1\linewidth]{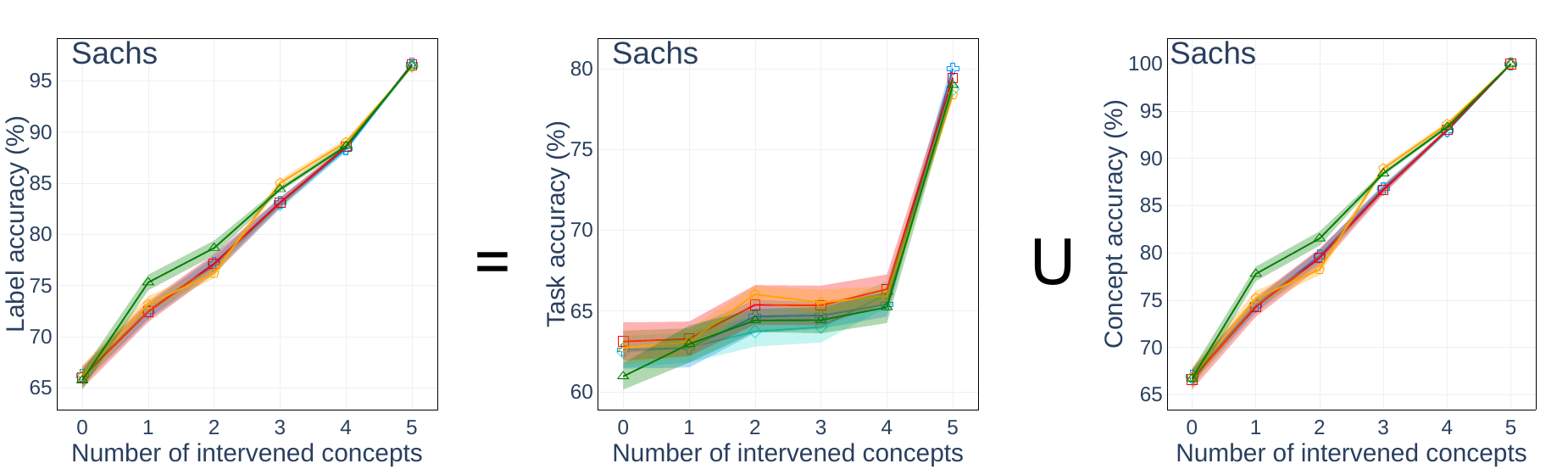}
\end{figure}

\begin{figure}[h]
    \centering
    \includegraphics[width=1\linewidth]{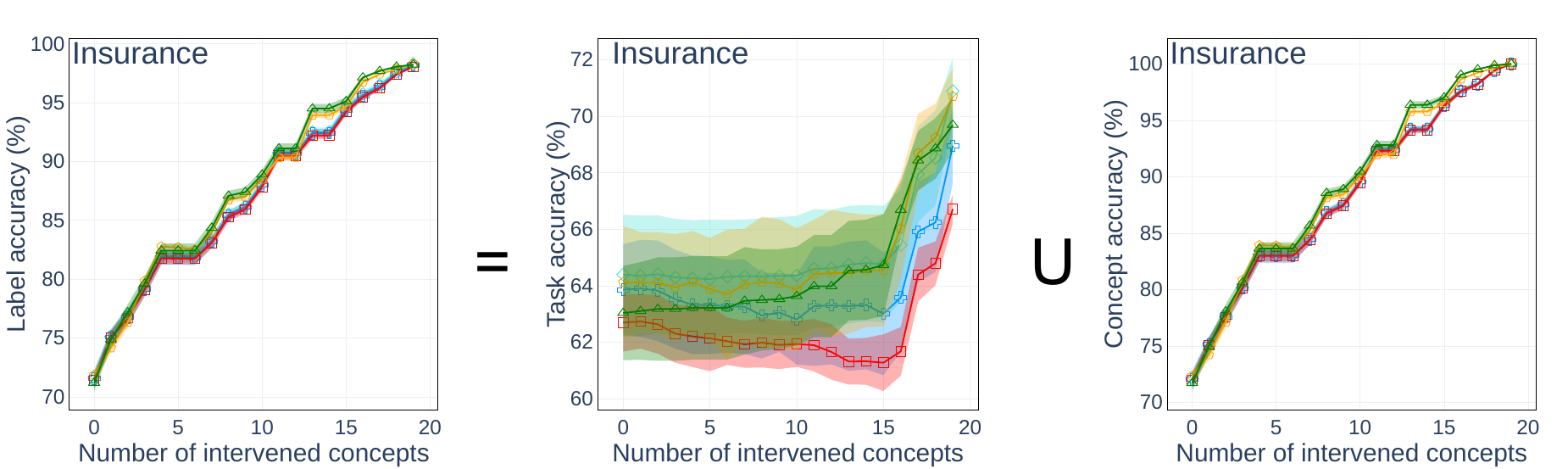}
\end{figure}

\begin{figure}[h]
    \centering
    \includegraphics[width=1\linewidth]{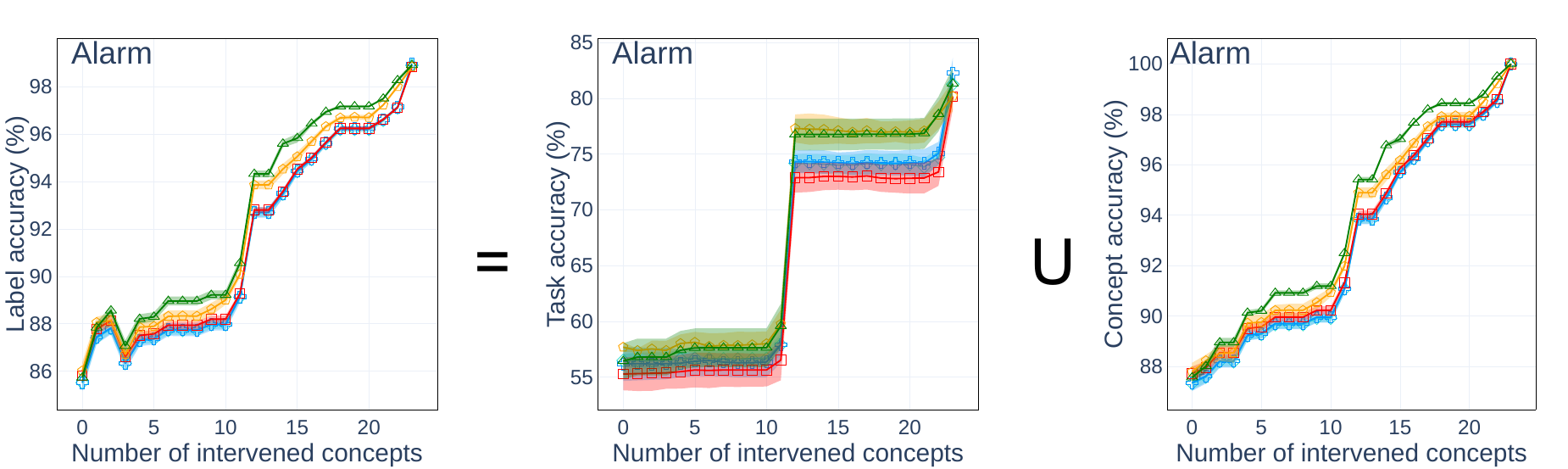}
\end{figure}

\begin{figure}[h]
    \centering
    \includegraphics[width=1\linewidth]{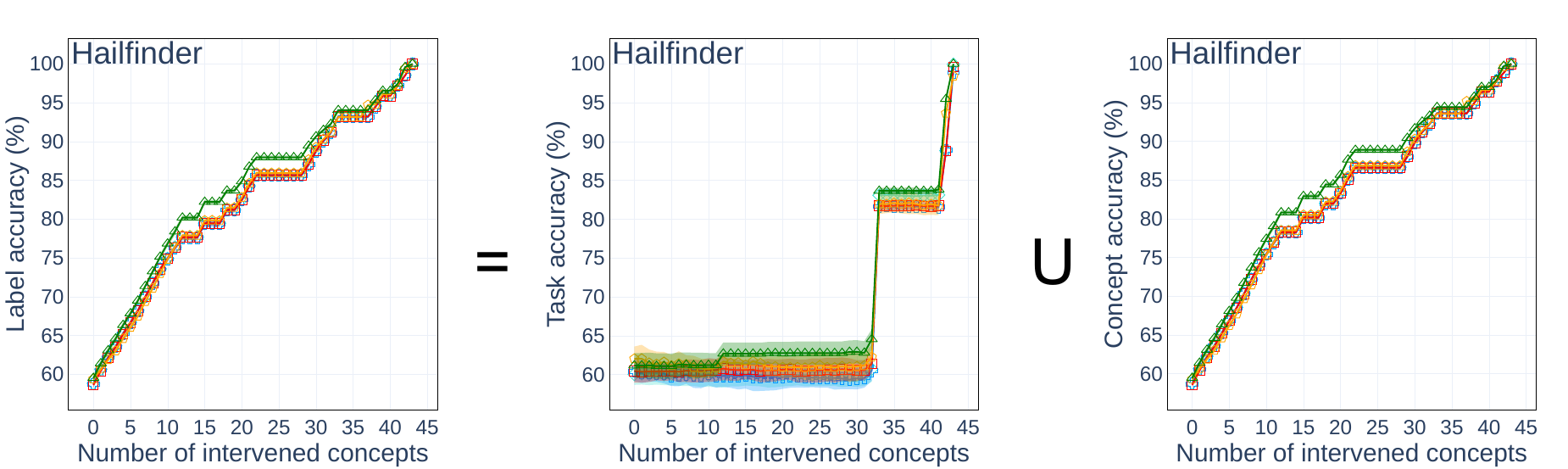}
\end{figure}

\begin{figure}[h]
    \centering
    \includegraphics[width=1\linewidth]{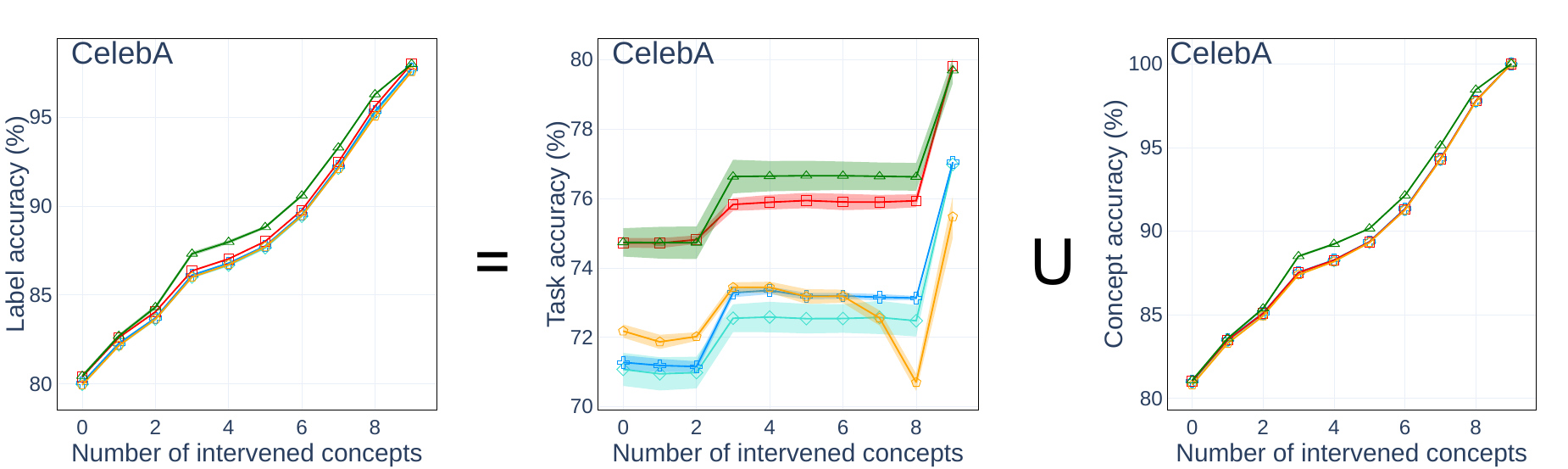}
\end{figure}

\begin{figure}[h]
    \centering
    \includegraphics[width=1\linewidth]{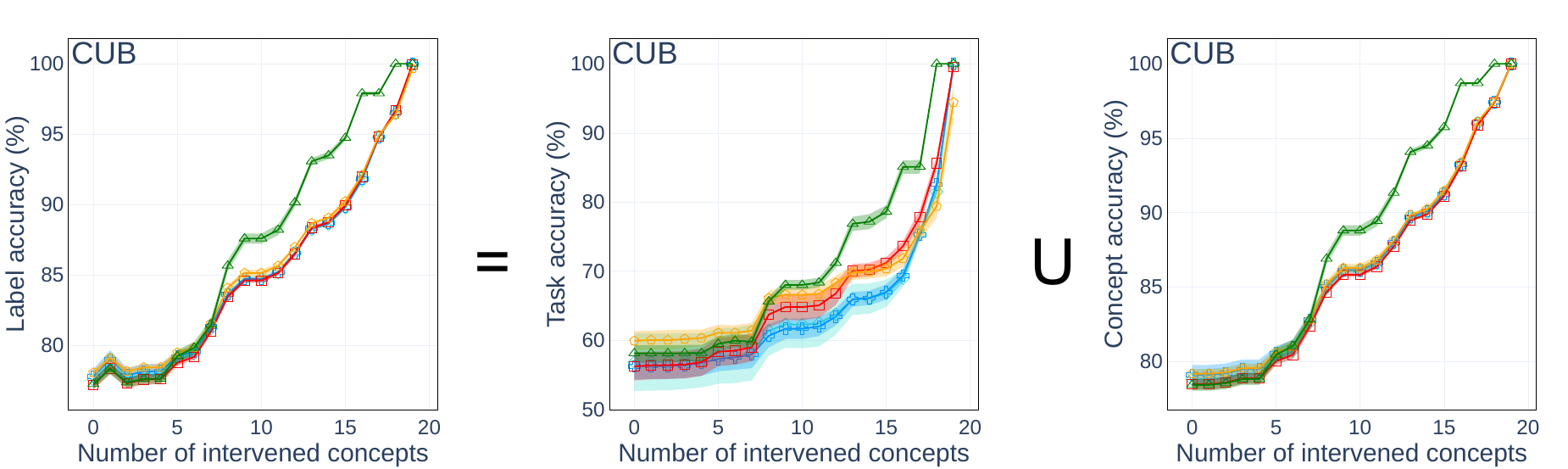}
\end{figure}

\begin{figure}[h]
    \centering
    \includegraphics[width=1\linewidth]{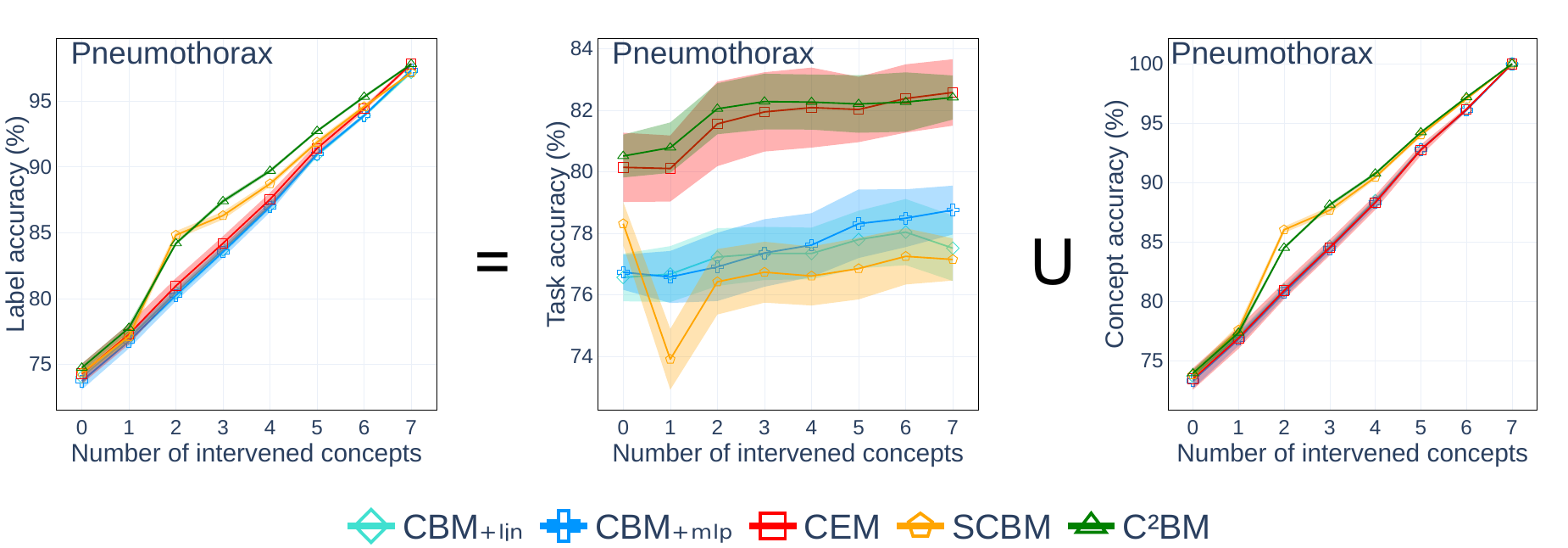}
    \caption{Task accuracy improvement (\%) in predicting downstream variables after intervening on groups of concepts up to progressively deeper levels in the graph hierarchy. The metric is averaged across all downstream variables (Left), the task only (Middle), and all concepts (Right) Total label accuracy. Uncertainties represent $2$ sample mean $\sigma$ across 5 runs.}
    \label{fig:triplets_label_accuracy}
\end{figure}


\end{appendices}

\end{document}